\documentclass{article}

\usepackage{microtype}
\usepackage{graphicx}
\usepackage{subcaption}
\usepackage{booktabs} 
\usepackage{multirow}

\usepackage{hyperref}

\usepackage[accepted]{icml2026}

\usepackage{amsmath}
\usepackage{amssymb}
\usepackage{mathtools}
\usepackage{amsthm}

\usepackage{enumitem}

\usepackage[capitalize,noabbrev]{cleveref}

\usepackage{amsmath,amsfonts,bm}

\def\eqref#1{equation~\ref{#1}}

\def\1{\bm{1}}

\DeclareMathAlphabet{\mathsfit}{\encodingdefault}{\sfdefault}{m}{sl}
\SetMathAlphabet{\mathsfit}{bold}{\encodingdefault}{\sfdefault}{bx}{n}

\theoremstyle{plain}
\newtheorem{theorem}{Theorem}[section]
\newtheorem{proposition}[theorem]{Proposition}
\newtheorem{lemma}[theorem]{Lemma}
\newtheorem{corollary}[theorem]{Corollary}
\theoremstyle{definition}
\newtheorem{definition}[theorem]{Definition}
\newtheorem{assumption}[theorem]{Assumption}
\theoremstyle{remark}

\usepackage[textsize=tiny]{todonotes}

\icmltitlerunning{Relational In-Context Learning via Synthetic Pre-training with Structural Prior}

\begin{document}

\twocolumn[
    \icmltitle{Relational In-Context Learning via Synthetic Pre-training with Structural Prior}



  \icmlsetsymbol{equal}{*}

  \begin{icmlauthorlist}
    \icmlauthor{Yanbo Wang}{IAI}
    \icmlauthor{Jiaxuan You}{Illinois}
    \icmlauthor{Chuan Shi}{BUPT}
    \icmlauthor{Muhan Zhang}{IAI,STATEKEYLAB}
  \end{icmlauthorlist}

  \icmlaffiliation{IAI}{Institute for Artificial Intelligence, Peking University}
  \icmlaffiliation{Illinois}{University of Illinois at Urbana-Champaign}
  \icmlaffiliation{BUPT}{Institute of Computing Technology, Beijing University of Post and Telecommunication}
  \icmlaffiliation{STATEKEYLAB}{State Key Laboratory of General Artificial Intelligence}

  \icmlcorrespondingauthor{Muhan Zhang}{muhan@pku.edu.cn}

  \icmlkeywords{Machine Learning, ICML}

  \vskip 0.3in
]



\printAffiliationsAndNotice{}  

\begin{abstract}

Relational Databases (RDBs) are the backbone of modern business, yet they lack foundation models comparable to those in text or vision. A key obstacle is that high-quality RDBs are private, scarce, and structurally heterogeneous, making internet-scale pre-training infeasible. To overcome this data scarcity, we introduce \textbf{RDB-PFN}, the first relational foundation model trained purely via \textbf{synthetic data}. Inspired by Prior-Data Fitted Networks (PFNs), where synthetic data generated from Structural Causal Models (SCMs) enables reasoning on single tables, we design a \textbf{Relational Prior Generator} to create an infinite stream of diverse RDBs from scratch. Pre-training on \textbf{over 2 million} synthetic single-table and relational tasks, RDB-PFN learns to adapt to any new database instantly via genuine \textbf{in-context learning}. Experiments show that RDB-PFN achieves strong few-shot performance on 19 real-world relational prediction tasks, outperforming state-of-the-art tabular foundation models evaluated on the same DFS-linearized inputs, while using a lightweight architecture and fast inference. The code is available at \href{https://github.com/MuLabPKU/RDBPFN}{https://github.com/MuLabPKU/RDBPFN}.


\end{abstract}

\vspace{-0.2cm}

\section{Introduction}

\textbf{The Foundation Model Discrepancy.}
Relational Databases (RDBs) serve as the bedrock of modern enterprise, storing the vast majority of the world’s high-value structured data~\citep{codd1970relational,codd2007relational,harrington2016relational}. Yet, a stark discrepancy defines the current AI landscape: while Foundation Models (FMs) have revolutionized unstructured modalities like text and vision through massive scale~\citep{brown2020language, achiam2023gpt, dosovitskiy2020image}, RDBs remain largely untouched by this paradigm shift. In the RDB domain, the standard workflow still relies on bespoke feature engineering followed by single-table methods like Gradient Boosted Decision Trees (GBDTs)~\citep{chen2016xgboost,prokhorenkova2018catboost,ke2017lightgbm}, or task-specific architecture search using Graph Neural Networks (GNNs)~\citep{wang20244dbinfer,robinson2024relbench}.

\textbf{The Data Wall.}
The primary bottleneck preventing foundation models for RDBs is not architectural, but data-centric. Foundation models in NLP rely on the ``Scaling Law'', the emergence of reasoning capabilities from massive ingestion of public real-world data~\citep{kaplan2020scaling}. This approach fails for RDB because high-quality databases are inherently private, scarce, and structurally heterogeneous~\citep{wang20244dbinfer}, making the standard pre-training methodology infeasible. Recent attempts on RDB foundation models, such as Griffin~\citep{wang2025griffin} and RT~\citep{ranjan2025relational}, rely on limited open-sourced real data, which is far from the pre-training scale. Consequently, they fail to achieve universal generalization without fine-tuning.

\textbf{The PFN Insight: Learning from Synthetic Data.}
To circumvent this fundamental data scarcity, we turn to the emerging paradigm of \textbf{Prior-Data Fitted Networks (PFNs)}~\citep{muller2021transformers}. PFNs leverage a counter-intuitive strategy: instead of training on scarce real-world data, they learn to approximate Bayesian Inference by pre-training on a vast corpus of synthetic tasks generated from Structural Causal Models (SCM).
The core mechanism relies on the Transformer's ability to act as a ``learning algorithm'': by attending to the context of synthetic examples, the model learns to simulate the posterior predictive distribution. When the synthetic prior covers statistical patterns that are relevant to real tasks, the model can transfer this learned inference behavior to new datasets. This methodology has revolutionized the single-table domain: \textbf{TabPFN}~\citep{hollmann2022tabpfn} demonstrated that a Transformer trained purely on synthetic priors could outperform tuned GBDTs on small datasets.
With GPU acceleration and In-Context Learning (ICL) inference, TabPFN-style methods can substantially reduce time cost compared with iterative training. Subsequent work continues to scale and refine this paradigm, shifting the core challenge from intractable data collection to solvable prior design, enabling models to generalize to entirely new schemas without a single gradient update.

\textbf{The Relational Gap.}
However, the success of PFNs has essentially been confined to the single-table setting, where the prior typically assumes rows are Independent and Identically Distributed. This assumption violates the core of relational data. As highlighted by RDB benchmarks like 4DBInfer~\citep{wang20244dbinfer} and Relbench~\citep{robinson2024relbench}, RDBs are defined by \textit{interconnectivity}: a label in a ``User'' table is not merely a function of user attributes, but is often a complex aggregation of historical records in ``Order'' or ``Click'' tables. Applying standard single-table generators to RDBs fails to model these interactions. The field lacks a \textbf{Relational Prior}: a generative framework capable of synthesizing valid schema topologies, foreign key dependencies, and causal aggregations from scratch.

\textbf{Our Approach: RDB-PFN.}
In this work, we introduce RDB-PFN, the first RDB foundation model built on a relational prior and pretrained purely on synthetic data. We formalize a novel generative mechanism that samples infinite streams of random schema topologies and propagates causal signals within and across tables. We prove a universality result under acyclic-schema, local Markov, and conditional-exchangeability assumptions. Based on synthetic RDBs sampled from our prior, we employ a rigorous curriculum, pre-training the model on over 2 million synthetic tasks. To isolate the importance of our relational prior, we pair this sophisticated data generation with a simple ``Linearize-and-Attend'' architecture, using standard Deep Feature Synthesis (DFS)~\citep{kanter2015deep} and a vanilla Transformer. This demonstrates that the model’s relational intelligence stems from the prior’s design rather than architectural complexity.

We extensively evaluated RDB-PFN on 19 real-world relational learning tasks. The results confirm that our approach delivers superior performance and efficiency: RDB-PFN outperforms both traditional GBDT baselines and general tabular foundation models in the few-shot regime, while requiring significantly fewer \textbf{parameters}, faster \textbf{inference speeds}, and orders of magnitude less \textbf{pre-training data}.

Our core contributions are summarized as follows:
\vspace{-0.1cm}
\begin{itemize}[itemsep=2pt,topsep=0pt,parsep=0pt,leftmargin=10pt]
\item \textbf{Solving Scarcity via Synthetic Data:} Unlike traditional relational learning methods that require expensive fine-tuning on target tasks or pre-training on sensitive real-world data, RDB-PFN is trained \textbf{purely on synthetic data}. It performs \textbf{zero-gradient} inference strictly via ICL, effectively eliminating the dependency on large real-world datasets for both pre-training and adaptation.

\item \textbf{Prior $>$ Scale:} When compared to state-of-the-art Single-Table Foundation Models (augmented with DFS), RDB-PFN exceeds their performance despite using a fraction of the model size and training compute. This confirms that pre-training on a physically consistent \textbf{Relational Prior} equips the model with a structural inductive bias that \textbf{cannot be efficiently replicated simply by scaling up general single-table data}.

\end{itemize}


\vspace{-0.1cm}
\section{Related Work}

\subsection{From Tabular to Relational Learning}
Prior to the Foundation Model era, standard approaches required training models from scratch for each specific dataset.

\textbf{Single-Table Baselines.} 
Research on single table data has evolved through various approaches. Traditional methods, such as XGBoost~\citep{chen2016xgboost}, LightGBM~\citep{ke2017lightgbm}, and CatBoost~\citep{prokhorenkova2018catboost}, have been widely adopted due to their scalability. More recently, transformer-based methods like TabTransformer~\citep{huang2020tabtransformer}, TabNet~\citep{arik2021tabnet}, FT-Transformer~\citep{gorishniy2021revisiting}, and SAINT~\citep{somepalli2021saint} have leveraged attention mechanisms to capture complex relationships. Additionally, graph-based methods such as GRAPE~\citep{you2020handling}, TabularNet~\citep{du2021tabularnet}, TabGNN~\citep{guo2021tabgnn}, and CARTE~\citep{kim2024carte} represent tabular data as graphs to model interactions more effectively. Recent advancements have expanded beyond general architecture search to focus on refining specific components of the tabular modeling pipeline. Significant progress has been made in numerical encoding strategies~\citep{gorishniy2022embeddings, yarullin2023numerical}, retrieval-augmented modeling that incorporates nearest-neighbor context (e.g., TabR~\citep{gorishniy2023tabr}, ModernNCA~\citep{ye2025revisiting}), and robust training protocols such as default-pre-tuning (RealMLP~\citep{holzmuller2024better}) or efficient ensembling (TabM~\citep{gorishniy2024tabm}). However, despite these innovations, they inherently assume a linearized feature vector and lack native mechanisms to capture the complex, multi-table topology of relational databases.

\textbf{The Relational Bridge: RDB Models.}
RDBs extend the concept of single-table models by incorporating multiple interrelated tables, requiring models to capture both intra- and inter-table relationships. Early approaches, such as DFS~\citep{kanter2015deep} and RDB2Graph~\citep{cvitkovic2020supervised}, attempted to flatten RDBs into a single table or apply GNNs to model relationships. Other works, like ATJ-Net~\citep{bai2021atj} and KEN~\citep{cvetkov2023relational}, use hypergraphs and knowledge graphs to model inter-table dependencies, while GFS~\citep{zhang2023gfs} integrates differentiable single-table models as embedding functions to preserve table structures. Some methods convert structured data into unstructured embeddings while retaining structural information~\citep{grover2016node2vec}, such as EmbDi~\citep{cappuzzo2020creating} and RDF2Vec~\citep{ristoski2016rdf2vec}.

As RDB tasks have attracted increasing attention~\citep{fey2024position}, more comprehensive benchmarks and toolboxes have emerged. For example, 4DBInfer~\citep{wang20244dbinfer}, RelBench~\citep{robinson2024relbench,fey2023relational}, and PytorchFrame~\citep{hu2024pytorch} propose complete pipelines for converting RDBs into graph structures for GNN-based models. More recent efforts, such as ContextGNN~\citep{yuan2024contextgnn}, RelGNN~\citep{chen2025relgnn} and RelGT~\citep{dwivedi2025relational}, aim to design more expressive GNN architectures specifically for relational data. However, these models are still limited on individual RDB tasks.

\vspace{-0.1cm}
\subsection{Foundation Models for Structured Data}
While language and vision fields have achieved scalability through massive real-world data ingestion, the structured data domain is still in search of a unified paradigm.

\textbf{Single-Table Landscape: Real vs. Synthetic.} Early attempts at tabular foundation models relied on large-scale real-world corpora, employing supervised or masked self-supervised learning~\citep{zhu2023xtab,wang2022transtab,yang2023unitabe,kim2024carte}. However, these efforts struggled to generalize due to the lack of shared semantics across diverse tables. A paradigm shift occurred with Prior-Data Fitted Networks (PFNs)~\citep{muller2021transformers}. TabPFN~\citep{hollmann2022tabpfn,hollmann2025accurate} demonstrated that training on synthetic datasets generated from Structural Causal Models allows a model to approximate Bayesian Inference, achieving state-of-the-art few-shot performance without observing real-world data—a result verified by numerous follow-up studies~\citep{grinsztajn2025tabpfn,qu2025tabicl,zhang2025mitra,zhang2025limix}. This synthetic paradigm is now rapidly expanding to other domains, including time-series~\citep{taga2025timepfn,dooley2023forecastpfn,hoo2025tables}, causal discovery~\citep{robertson2025pfn,balazadeh2025causalpfn,ma2025foundation,mahajan2024amortized}, and graph learning~\citep{eremeev2025graphpfn,eremeev2025turning,hayler2025graphs}, among others.

\textbf{The Relational Gap.} Despite the success of PFNs in the single-table domain, relational modeling remains tied to the traditional Pre-training \& Fine-tuning paradigm. Recent efforts like Griffin~\citep{wang2025griffin} and RT~\citep{ranjan2025relational} rely on aggregating limited real-world repositories, employing text unification strategies and incorporating auxiliary tables. While valuable, these methods are fundamentally constrained by the scarcity of public schemas where they focus gradient-based fine-tuning to adapt to new tasks. To date, the field lacks a genuine foundation model.

\vspace{-0.1cm}
\subsection{Generative Models for Relational Databases}
Existing research on RDB generation focuses mainly on \textit{Privacy-Preserving Data Publishing}. The primary objective is to train a model on a specific private database ($D_{real}$) to produce a high-fidelity synthetic replica ($D_{syn}$).

\textbf{From Statistics to Powerful Generation Models.}
Early approaches like SDV~\citep{patki2016synthetic} and RC-TGAN~\citep{gueye2023row} relied on statistical copulas or GANs to clone parent-child relationships. Recently, the state-of-the-art has shifted toward \textbf{Graph-Conditional Diffusion}. Models such as ClavaDDPM~\citep{pang2024clavaddpm}, RelDiff~\citep{hudovernik2025reldiff}, and GRDM~\citep{ketata2025joint} achieve superior fidelity by treating the database as a joint distribution and utilizing GNNs to iteratively denoise it.

\textbf{Incompatibility with Foundation Models.}
However, these methods are not suitable for pre-training Foundation Models. First, as \textit{Conditional Generators}, they depend on real input data, creating a circular dependency that fails to address the fundamental scarcity of RDBs. Second, their iterative sampling processes (e.g., diffusion denoising) are prohibitively slow for the high-throughput generation required to train on millions of tasks. To overcome this, we require an \textbf{Unconditional Relational Prior} capable of generating complex RDBs purely from scratch at scale.

\textbf{Positioning.}
RDB-PFN is positioned differently from both supervised relational models and privacy-oriented relational data generators. Unlike GNN-based relational architectures or relational pre-training methods that rely on real databases and task-specific fine-tuning, RDB-PFN is trained primarily on synthetic relational databases sampled from an unconditional relational prior and adapts to new tasks through in-context learning. Unlike privacy-focused relational data generators, which are typically trained conditionally to mimic a particular private database, our generator is designed to produce diverse relational prediction tasks at scale for foundation-model pre-training.

\vspace{-0.1cm}
\section{Preliminaries}\label{sec:prelim}

\begin{definition}[Relational Database: Schema and Instance]\label{def:rdb}
A RDB is specified by a \textbf{schema} and a \textbf{populated instance}.

\textbf{Schema.} The schema is a collection of tables $\mathcal{T}=\{T_1,\dots,T_N\}$. Each table $T$ has a set of columns
$
\mathcal{C}(T)=\mathcal{K}^{pk}(T)\ \cup\ \mathcal{K}^{fk}(T)\ \cup\ \mathcal{A}(T),
$
where $\mathcal{K}^{pk}(T)$ are \textbf{primary key (PK)} columns, $\mathcal{K}^{fk}(T)$ are \textbf{foreign key (FK)} columns, and $\mathcal{A}(T)$ are all remaining \textbf{feature columns}.
Each FK column $k \in \mathcal{K}^{fk}(T)$ references a \emph{parent table} $\mathrm{Ref}(k)\in\mathcal{T}$.

\textbf{Instance.} A database instance $\mathcal{D}$ assigns concrete rows to each table. Let $\mathcal{V}(T)$ denote the set of rows in table $T$; each row is a record containing values for all columns in $\mathcal{C}(T)$. PK values uniquely identify rows within a table, and FK values must match PK values in the referenced parent table.

\end{definition}

\begin{definition}[Source vs. Dependent Tables]\label{def:table_types}
A table $T$ is a \textbf{source table} if it has no foreign keys, i.e., $\mathcal{K}^{fk}(T)=\emptyset$.
A table $T$ is a \textbf{dependent table} if it has one or more foreign keys, i.e., $\mathcal{K}^{fk}(T)\neq \emptyset$.
\end{definition}

\begin{definition}[Schema Graph and Instance Graph]\label{def:schema_instance_graph}
We use two levels of topology.

\textbf{Schema graph.} The schema graph is a directed graph $G_S=(\mathcal{V}_S,\mathcal{E}_S)$ with nodes $\mathcal{V}_S=\mathcal{T}$. We orient edges as \textbf{parent $\rightarrow$ child}:
\[
(T_p \to T_c)\in \mathcal{E}_S \iff \exists  k\in \mathcal{K}^{fk}(T_c)\ ~ \text{s.t.}\ \mathrm{Ref}(k)=T_p .
\]
\textbf{Instance graph.} The instance graph is a directed graph $G_{in}=(\mathcal{V}_{in},\mathcal{E}_{in})$ whose nodes are \textbf{rows}:
$
\mathcal{V}_{in}=\bigcup_{T\in\mathcal{T}} \mathcal{V}(T).
$
For a row $u\in\mathcal{V}(T)$, let $\mathrm{pk}(u)$ denote its (possibly composite) primary-key value.
For a foreign-key column $k\in\mathcal{K}^{fk}(T_c)$ and a child row $v\in\mathcal{V}(T_c)$, let $\mathrm{fk}_k(v)$ denote
the value stored in column $k$ of row $v$.
We orient edges consistently as \textbf{parent row $\rightarrow$ child row}:
\vspace{-0.1cm}
\begin{align*}
    (u \to v) & \in  \mathcal{E}_{in} \iff  v \in \mathcal{V}(T_c), \ u \in \mathcal{V}(T_p), \\
    & \exists\, k \in \mathcal{K}^{fk}(T_c) \text{ s.t. } \mathrm{Ref}(k) = T_p \land \mathrm{fk}_k(v) = \mathrm{pk}(u).
\end{align*}

We optionally associate each row-node $r$ in the instance graph $G_{in}$ with a latent structural state $Z_r\in\mathbb{R}^d$,
and denote the collection by $\mathcal{Z}\coloneqq\{Z_r\}_{r\in\mathcal{V}_{in}}$.

\end{definition}

\begin{definition}[Relation Types and Neighborhoods]\label{def:relations}
Each FK type induces a \textbf{relation type} $\tau$ (e.g., $T_p \to T_c$), and we also consider its reverse relation $\tau^{-1}$ to enable bidirectional message passing.
For a row-node $v\in\mathcal{V}_{in}$, let $\mathcal{R}(v)$ be the set of relation types incident to $v$.
For a relation type $\tau$, let $\mathcal{N}_\tau(v)$ denote the neighbor set of $v$ under $\tau$ (parents for $\tau$ pointing into $v$, children for $\tau$ pointing out of $v$, depending on the chosen direction).
\end{definition}

\begin{definition}[Relational Prediction Task and In-Context Learning]\label{def:icl_task}
A relational prediction task selects a \textbf{target table} $T_\star$ and a \textbf{target column} $y\in \mathcal{A}(T_\star)$ (could be computed via SQL Query).
Each target row $v\in\mathcal{V}(T_\star)$ is mapped to a fixed-length feature vector $x_v$ by a deterministic linearization operator (e.g., DFS) applied to the database instance:
\[
X = \mathrm{Linearize}(\mathcal{D}, T_\star)\in\mathbb{R}^{|\mathcal{V}(T_\star)|\times p},
\]
where each row of $X$ corresponds to one $v\in\mathcal{V}(T_\star)$ and $p$ is the standardized feature width.

In the ICL setting, we form a context set $\mathcal{D}_{ctx}=\{(x_i,y_i)\}_{i=1}^{n}$ from labeled rows of $T_\star$ and predict the label of a query row $(x_q,y_q)$ via a single forward pass:
\[
P_\theta(y_q \mid x_q,\mathcal{D}_{ctx}).
\]
\end{definition}

\begin{figure*}[t]
    \centering
    \vspace{-0.2cm}
    \includegraphics[width=0.9\linewidth]{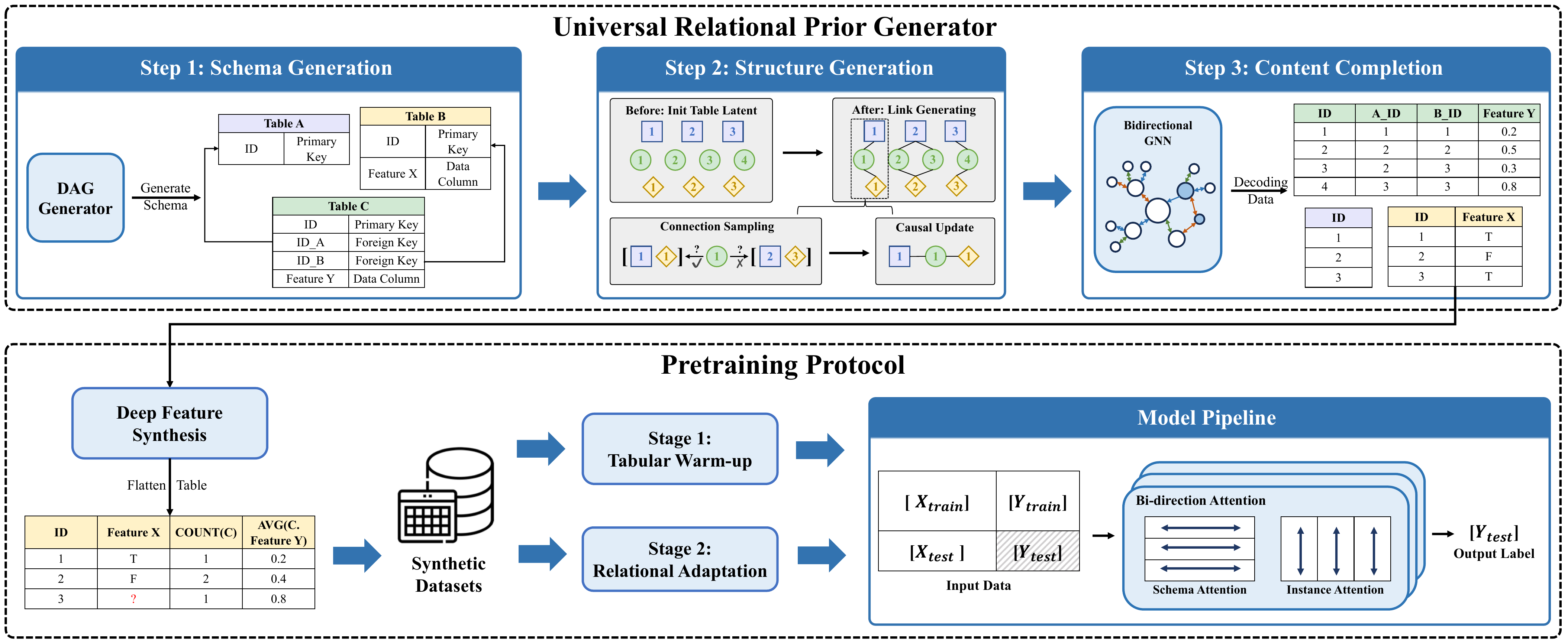}
    \vspace{-0.1cm}
    \caption{\textbf{Overview of the RDB-PFN Framework.}
The top panel illustrates our \textbf{Universal Relational Prior}, which synthesizes diverse relational databases via a hierarchical decomposition: Schema (LayerDAG), Structure (Hybrid SCM), and Content (Hierarchical SCM).
The bottom panel depicts the \textbf{Two-Stage Curriculum Learning} protocol, where the model first establishes a statistical backbone on single-table data before adapting to the complex topological signals of linearized relational data.}
    \vspace{-0.6cm}
    \label{fig:main}
\end{figure*}

\vspace{-0.3cm}
\section{Design Principles of Relational Prior}\label{sec:framework}

The efficacy of a foundation model depends on its exposure to vast and diverse datasets. To achieve this in the relational domain, we seek a \emph{universal prior within a well-defined family} of relational database distributions: one that can synthesize diverse, logically consistent RDBs while remaining tractable to learn and sample from. However, the combinatorial space of ``all possible relational databases'' is intractable. Just as single-table models rely on structural assumptions (e.g., i.i.d.\ sampling/exchangeability) to make learning solvable, we introduce a \textbf{Relational Inductive Bias}: a small set of structural constraints that reduces the generation space while preserving the complex topological dependencies commonly observed in real-world systems.

\vspace{-0.1cm}
\subsection{Relational Assumptions}

To make the generation of complex RDBs tractable, we summarize three core structural principles. These assumptions constrain the generative prior to a family of distributions that remains consistent with relational logic while avoiding the full combinatorial space of arbitrary schemas.

\begin{assumption}[Schema Acyclicity]\label{ass:dag}
The schema graph $G_S$ (Definition~\ref{def:schema_instance_graph}), oriented as \textbf{parent $\rightarrow$ child}, is a directed acyclic graph (DAG).

\textit{Justification:} Many real-world analytic schemas (e.g., star/snowflake designs) are naturally acyclic. In our preprocessing of major benchmarks (e.g., Spider~\citep{yu2018spider}, 4DBInfer~\citep{wang20244dbinfer}), over 95\% of schemas are acyclic. Restricting to DAGs also enables efficient topological generation of tables.
\end{assumption}


\begin{assumption}[Relational Markovian Locality]
\label{ass:markov}
Conditioned on the realized structural skeleton, i.e., the instance graph $G_{in}$ and structural row states $\mathcal{Z}$, each generated attribute is locally determined. Concretely, let $A_{r,c}$ denote the value of column $c$ for row $r$, and let $\mathcal{A}_{-r,c}$ denote all other generated attributes. There exists a local parent set $\mathrm{Pa}(A_{r,c})$ contained in a bounded $k$-hop neighborhood of $r$ in $G_{in}$ such that
\vspace{-0.1cm}
\[
P\!\left(A_{r,c}\mid \mathcal{A}_{-r,c}, G_{in}, \mathcal{Z}\right)
=
P\!\left(A_{r,c}\mid \mathrm{Pa}(A_{r,c}), Z_r\right).
\]
\vspace{-0.1cm}
\textit{Justification:} This captures empirical locality in relational systems: an entity's attributes are usually governed by its connected records and latent state, rather than by unrelated entities. This locality makes synthetic generation tractable through bounded relational aggregation.
\end{assumption}

\begin{assumption}[Conditional Exchangeability / Mechanism Sharing]\label{ass:exchange}
Within any table $T$, rows are exchangeable conditional on the realized instance graph (and structural states): permuting row identities within $T$ does not change the joint distribution, provided relational links are permuted consistently.
Equivalently, rows of the same table are governed by shared mechanisms that are permutation-invariant to the ordering of neighbor sets.

\textit{Justification:} Row indices carry no semantics; dependence between rows is mediated by relational links, and shared mechanisms enable learning from variable-size sets.
\end{assumption}

\subsection{Constructive Decomposition and Expressivity}
Under Assumptions~\ref{ass:dag}--\ref{ass:exchange}, we factorize the database distribution into schema, structure, and content:
\begin{equation}
\begin{aligned}
P(\mathcal{D}) =
&\underbrace{P(G_S)}_{\text{Schema}}
\cdot
\underbrace{P(\mathcal{D}_{struct},\mathcal{Z}\mid G_S)}_{\text{Structural Skeleton}}
\cdot \\
&\underbrace{P(\mathcal{D}_{dep}\mid \mathcal{D}_{struct},\mathcal{Z},G_S)}_{\text{Dependent Content}} .
\end{aligned}
\label{eq:rdb_decomp}
\end{equation}
Here, $G_S$ is the table-level schema; $\mathcal{D}_{struct}$ contains structure-defining fields such as primary keys, foreign keys, and connectivity variables, which induce the instance graph $G_{in}$; and $\mathcal{D}_{dep}$ contains the remaining attributes generated conditioned on the realized relational structure.

\textbf{Expressivity.}
This factorization motivates a modular prior with three components: a schema generator, a structural generator for keys/connectivity and latent row states, and a content generator for remaining attributes. Appendix~\ref{app:proof} formalizes this construction. In Theorem~\ref{thm:grand_universality}, we show that under the stated assumptions and additional technical conditions, the resulting prior can approximate any consistent RDB distribution within the corresponding assumption-defined family.

\section{Method: The RDB-PFN Architecture}

We present the implementation of the RDB-PFN framework. Our system comprises two components: a scalable \textbf{Data Generation Pipeline} and an ICL \textbf{Pretraining Pipeline}.

\subsection{Data Generation}

We instantiate the three-stage decomposition (Schema $\to$ Structure $\to$ Content) using specialized neural modules designed to capture the unique dependencies of each phase.

\subsubsection{Stage 1: Schema Graph Generation}

The first step synthesizes the schema graph $G_S=(\mathcal{V}_S,\mathcal{E}_S)$, where nodes are tables and directed edges are oriented as \textbf{parent $\rightarrow$ child}. We model the schema distribution $P(G_S)$ using either (i) \textbf{hand-designed topology priors} or (ii) a \textbf{learned topology model} trained on \emph{public} schema graphs. In our experiments, we adopt LayerDAG~\citep{li2024layerdag} as a learned schema-topology prior (Appendix~\ref{app:layerdag}) to sample realistic DAG topologies. Crucially, regardless of how $G_S$ is obtained, all table \emph{contents} used for pre-training are generated synthetically by our Stage 2--3 generators.

\vspace{-0.1cm}
\subsubsection{Stage 2: Structural Generation}

For \textbf{Dependent Tables} (tables referencing $p$ parent tables), we employ a \textbf{Selective SCM} to link each new child row $v$ to a tuple of existing parent rows $(u^{(1)},\dots,u^{(p)})$.

\textbf{1. Latent Initialization:} We first sample a child state
$z_v^{(0)}=\mathrm{MLP}_{init}(\epsilon)$,
representing its latent characteristics.

\textbf{2. Connection Sampling via Attention:} We sample $M$ candidate parent tuples
$C_j=(u^{(1)}_j,\dots,u^{(p)}_j)$,
where each $u^{(t)}_j$ is an existing row from the $t$-th parent table. We maintain a dynamic embedding $e_u$ for each parent row $u$ (initialized at creation and optionally updated during generation). Each candidate tuple is embedded as
\[
h_{C_j}=\mathrm{MLP}_{comb}\!\left(e_{u^{(1)}_j}\oplus\cdots\oplus e_{u^{(p)}_j}\right).
\]
We then compute compatibility scores by viewing the child as a query and the tuple as a key:
\[
s_j=\left\langle z_v^{(0)}W_Q,\ h_{C_j}W_K \right\rangle,\quad
C^\star\sim \mathrm{Softmax}(s_1,\dots,s_M).
\]

\textbf{3. Causal Update:} Once the connection is established, the child integrates the chosen parents to form its final latent state:
\[
z_v=\mathrm{MLP}_{child}\!\left(z_v^{(0)}\oplus h_{C^\star}\right).
\]
Optionally, we apply a feedback update to the chosen parents to control the resulting topology:
\[
e_u \leftarrow \mathrm{MLP}_{fb}(e_u \oplus z_v), \qquad \forall u\in C^\star.
\]

This feedback mechanism smoothly interpolates between uniform random attachment (frozen or delayed updates) and preferential attachment (immediate positive updates), enabling diverse distributions across generated schemas.

\subsubsection{Stage 3: Content Completion}

The final stage generates the observable data columns $\mathcal{D}_{cols}$. By delaying feature synthesis until the topology is fixed, we ensure that every generated data point is conditioned on the complete, globally-refined structural context.

To approximate the conditional distribution $P(\mathcal{D}_{cols} \mid G_{in}, \mathcal{Z})$, we employ a \textbf{Bidirectional Graph Neural Network}. This architecture acts as the practical instantiation of the Hierarchical SCM, propagates the latent causal states across the instance graph to induce correlations.

\textbf{1. Relational Message Passing.}
We initialize each row-node with its latent state $h_v^{(0)}=z_v$ and perform $K$ rounds of heterogeneous propagation on the instance graph.
Let $\tau$ denote a relation type induced by an FK (and $\tau^{-1}$ its reverse), with neighbor sets $\mathcal{N}_\tau(v)$ and incident relations $\mathcal{R}(v)$ as defined in Definition~\ref{def:relations}. We aggregate messages separately per PK-FK edge type:
\vspace{-0.1cm}
\begin{equation}
h_v^{(\ell+1)}=
\mathrm{Update}\!\left(
h_v^{(\ell)},
\ \bigoplus_{\tau\in\mathcal{R}(v)}
\underbrace{\sum_{u\in\mathcal{N}_{\tau}(v)} \mathrm{MLP}_{\tau}\!\left(h_u^{(\ell)}\right)}_{\text{Permutation-invariant over neighbors}}
\right)
\nonumber
\end{equation}

\textbf{2. Universal Decoding.}
After $K$ layers, the final embedding $h_v^{(K)}$ contains a globally contextualized representation of the row. A shared decoder maps this state to the values of the columns:
$\{ A_v \} = \text{MLP}_{dec}(h_v^{(K)})$. 
Continuous features are obtained through clipping/normalization transformations, while categorical features are obtained through discretization/binning, completing the synthetic database $\mathcal{D}$.

\begin{figure*}[htbp]
    \vspace{-0.2cm}
    \centering
    \includegraphics[width=0.9\linewidth]{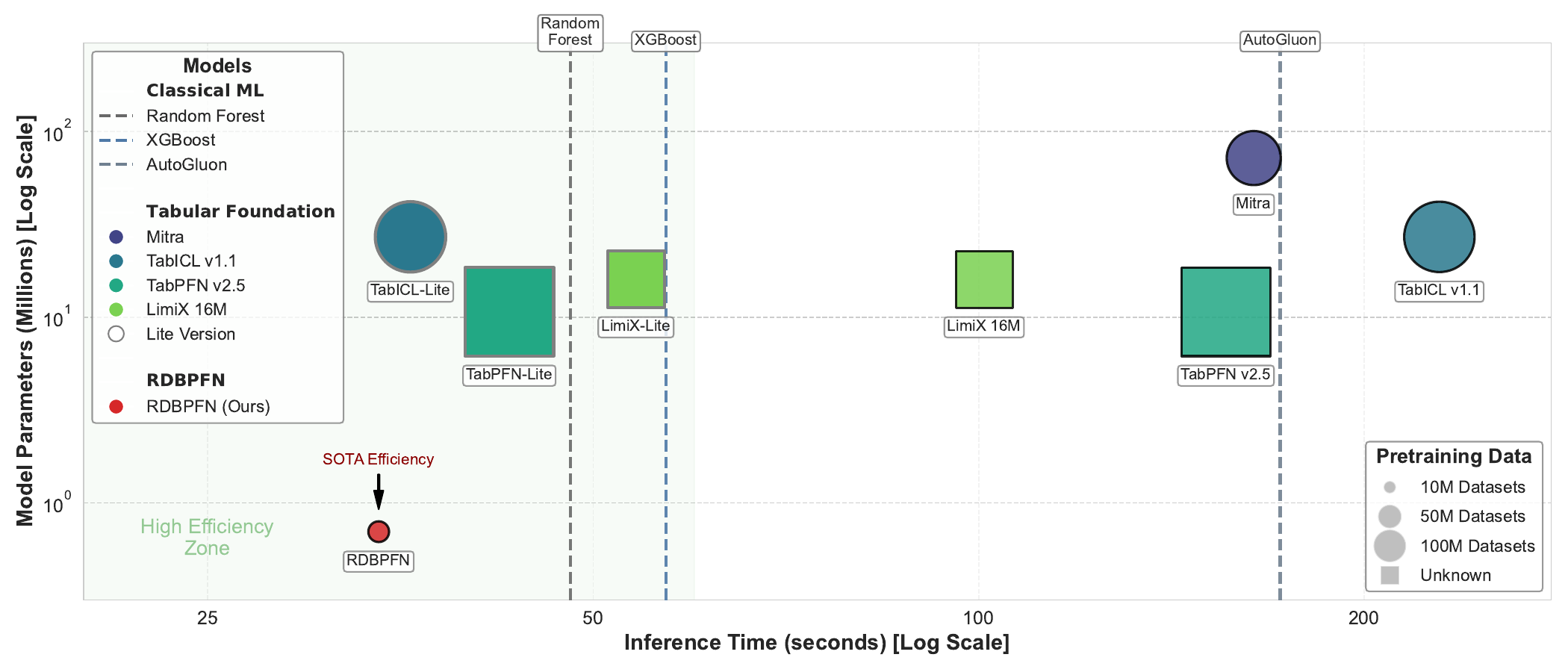}
    \vspace{-0.3cm}
\caption{\textbf{Resource Efficiency Frontier.}
    Comparison of model complexity across \textbf{Inference Latency} (X-axis), \textbf{Parameter Count} (Y-axis), and \textbf{Pre-training Data Volume} (Bubble Size).
    ``Lite'' baselines denote single-estimator configurations with ensembling disabled.
    RDB-PFN (red point) uses a compact 0.7M-parameter backbone and is pre-trained on only $\sim$2M synthetic datasets. Compared with full/default tabular foundation-model baselines, it is 3.0--6.7$\times$ faster at inference; among baselines with disclosed pre-training budgets, it uses 2.5--4.4\% as many pre-training datasets.}
        \vspace{-0.6cm}
    \label{fig:efficiency}
\end{figure*}

\begin{figure*}[h]
    \vspace{-0.1cm}
    \centering
    \begin{subfigure}[b]{0.48\linewidth}
        \centering
        \includegraphics[width=\linewidth]{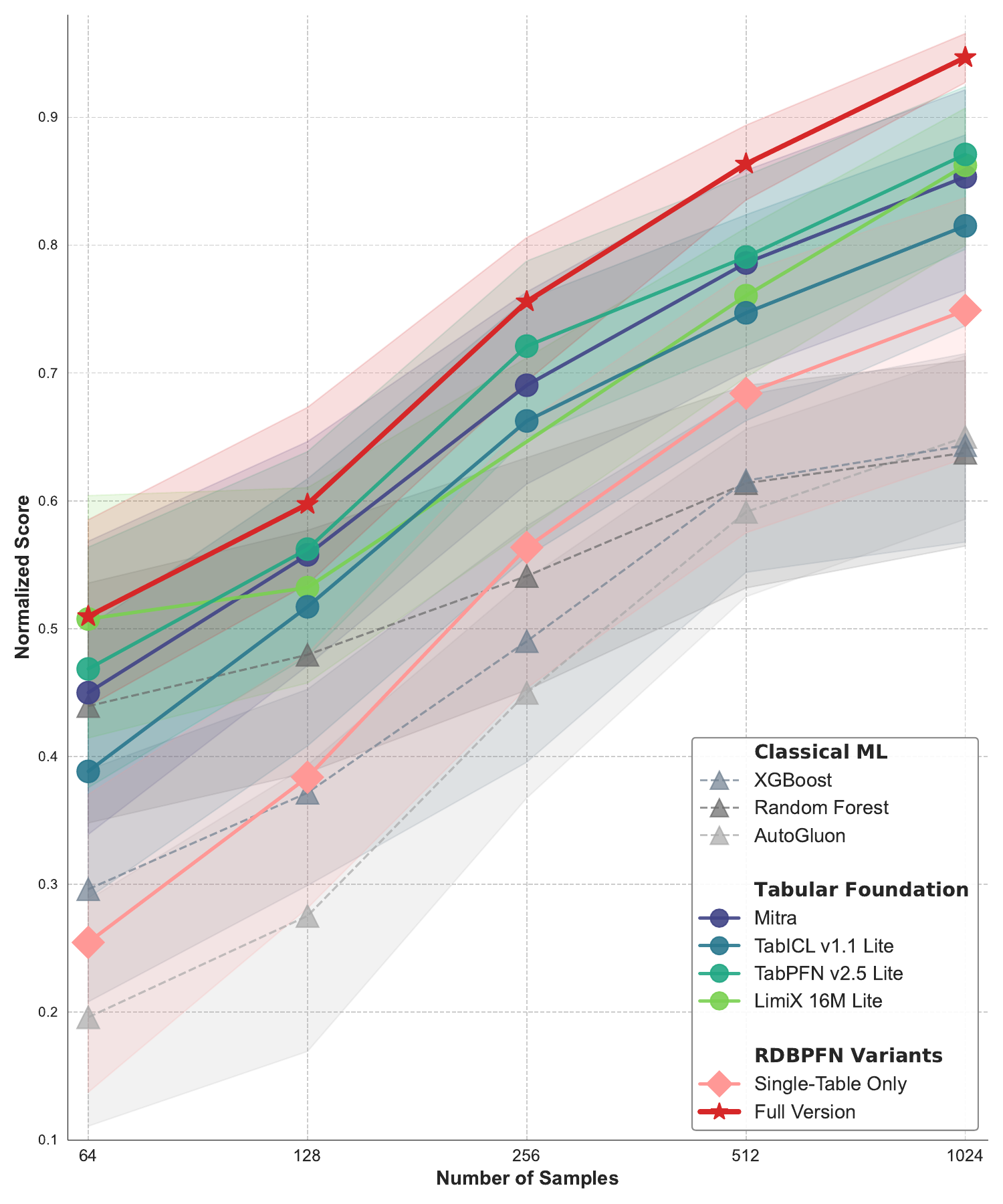}
        \vspace{-0.6cm}
        \caption{\textbf{Lightweight Regime (Single Estimator).}}
        \label{fig:rdb_lightweight}
    \end{subfigure}
    \hfill
    \begin{subfigure}[b]{0.48\linewidth}
        \centering
        \includegraphics[width=\linewidth]{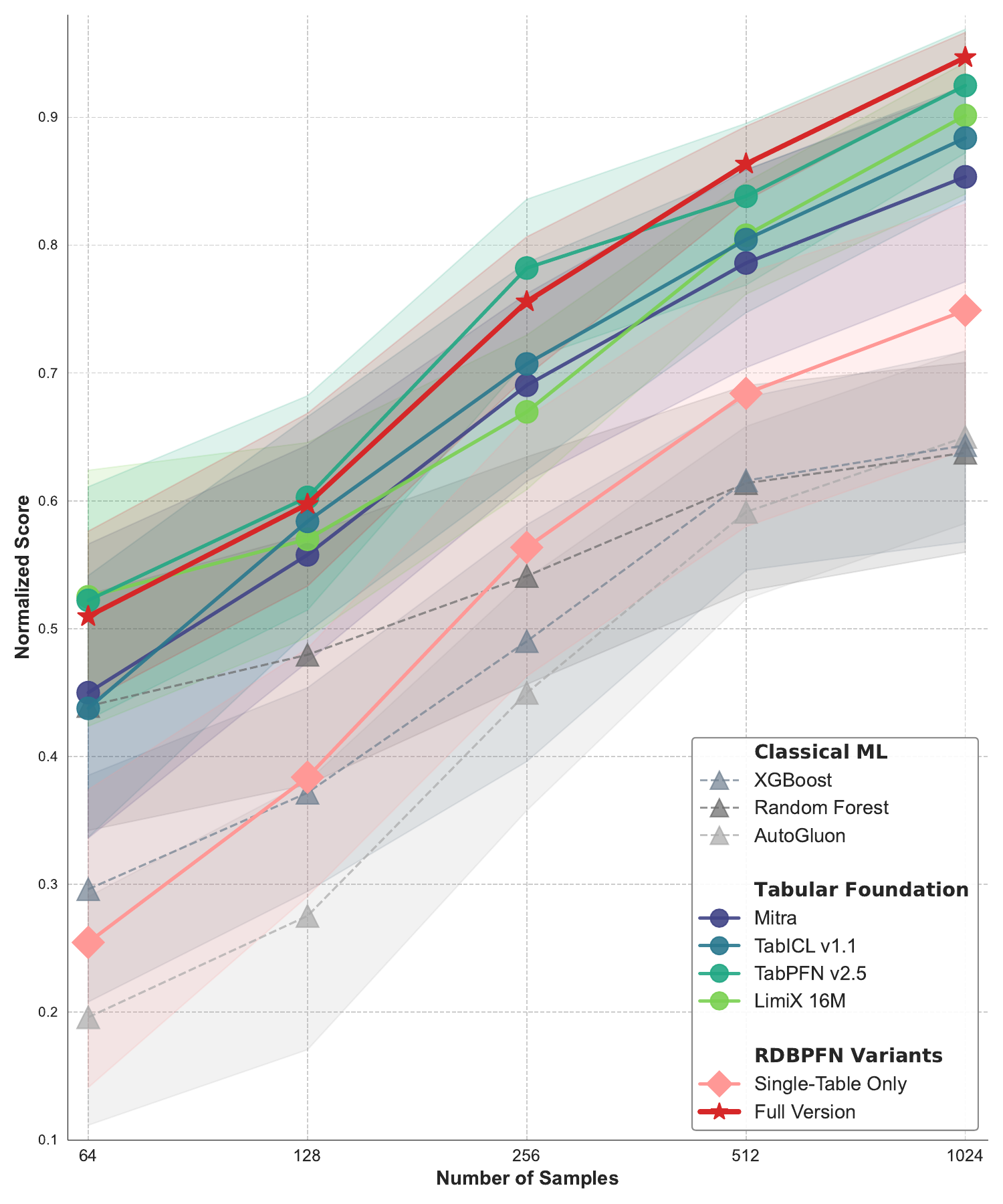}
        \vspace{-0.6cm}
        \caption{\textbf{Standard Regime (Ensemble Enabled for other TFMs).}}
        \label{fig:rdb_standard}
    \end{subfigure}
    \vspace{-0.1cm}
    \caption{\textbf{Relational Few-Shot Performance across Evaluation Protocols.}
    We report aggregated normalized performance across 19 relational tasks (higher is better).
    \textbf{(a) Single-Estimator Protocol:} all baselines are constrained to one estimator (ensembling disabled). RDB-PFN clearly surpasses all baselines. \textbf{(b) Default Protocol:} baselines run with their official default inference pipelines
    (which may include test-time ensembling), while RDB-PFN remains a single forward-pass estimator. RDB-PFN maintains superior average performance while offering \textbf{3.0x -- 6.7x faster} inference, positioning it on the optimal frontier of the efficiency-accuracy trade-off.}
    \vspace{-0.6cm}
    \label{fig:rdb_main_results}
\end{figure*}

\subsection{Architectural Implementation}

Our backbone is a Transformer adapted for relational data. To bridge the gap between graph-structured RDBs and vector-based architectures, we employ a two-stage process:


\textbf{1. Graph Linearization.}
Given a database instance $\mathcal{D}$ and a target table $T_\star$, we apply Deep Feature Synthesis (DFS)~\citep{kanter2015deep} as a deterministic linearization operator:
$
X=\mathrm{DFS}(\mathcal{D},T_\star)\in\mathbb{R}^{|\mathcal{V}(T_\star)|\times p}.
$
DFS recursively aggregates relational neighborhoods via (i) \textit{Forward Inheritance} (propagating parent attributes to children) and (ii) \textit{Backward Aggregation} (summarizing child sets with permutation-invariant statistics), yielding a context-enriched single-table representation. 




\textbf{2. Bi-Attention Reasoning.} We process this linearized input using a simplified TabPFNv2 (or NanoPFN) architecture~\citep{hollmann2022tabpfn,pfefferle2025nanotabpfn}. The model alternates between two attention mechanisms to approximate the posterior: 
\begin{itemize}[itemsep=2pt,topsep=0pt,parsep=0pt,leftmargin=10pt] 
\item \textbf{Schema Attention (Within-same-row):} Attends across features to model inter-feature dependencies. Since DFS features in different columns are organized by different relational primitives, this attention captures relational signals expressed in the linearized representation.
\item \textbf{Instance Attention (Within-same-column):} Attends across rows to perform In-Context Learning, enabling the query row to leverage patterns from labeled context rows with similar DFS-induced structural summaries.
\end{itemize} 

In this initial release, we use a binary classification head rather than the more common multi-class head adopted by many tabular foundation models, because the evaluated RelBench and 4DBInfer classification tasks are overwhelmingly binary. However, multiclass tasks can still be supported through a straightforward one-vs-rest extension.

\subsection{Pretraining Protocol}
\label{sec:pretraining}

We adopt a Two-Stage Curriculum to decouple statistical learning from topological reasoning: 
\begin{itemize}[itemsep=2pt,topsep=0pt,parsep=0pt,leftmargin=10pt] 
\item \textbf{Stage 1: Tabular Warm-up.} The model is pre-trained on synthetic single-table datasets. This establishes a ``statistical backbone,'' allowing the model to master distribution matching and outlier detection without structural noise. 
\item \textbf{Stage 2: Relational Adaptation.} We transition to full RDBs generated by our prior. With a stable statistical foundation, the model focuses on interpreting the complex, aggregated signals from DFS, effectively learning to treat topological context as a predictive feature. 
\end{itemize}

\vspace{-0.1cm}
\section{Experiments}

We designed our experiments to rigorously evaluate the RDB-PFN as a Foundation Model for relational databases. We structure our analysis around three core questions:
\begin{itemize}[itemsep=2pt,topsep=0pt,parsep=0pt,leftmargin=10pt]

\item \textbf{RQ1 (RDB Foundation Model Capabilities):} Can RDB-PFN generalize to unseen real-world RDBs without fine-tuning? How does its computational efficiency and architectural complexity compare to existing tabular foundation models?

\item \textbf{RQ2 (Single-Table Impact):} How does relational pre-training impact performance on standard single-table tasks? Does the model retain general tabular reasoning capabilities despite its specialized prior?

\item \textbf{RQ3 (Linearized Relational Prior Analysis):} When RDBs are linearized via DFS, what distinguishes their statistical structure from standard single-tabular data?

\end{itemize}

\vspace{-0.1cm}
\subsection{Experimental Setup}

\textbf{Datasets.} We evaluate relational reasoning on 19 diverse tasks curated from the 4DBInfer~\citep{wang20244dbinfer} and RelBench~\citep{robinson2024relbench} benchmarks. These tasks span domains including e-commerce, clinical trials, and sports analytics, varying in complexity from simple attribute prediction to complex behavioral modeling requiring temporal aggregation. Extensive prior work has established that these tasks benefit significantly from structure-aware modeling, making them a rigorous testbed for our framework. Additionally, we evaluate single-table performance using a Tabular Benchmark~\citep{grinsztajn2022tree}. Full dataset statistics are provided in Appendix~\ref{app:datasets}.

\textbf{Evaluation Protocol.} We do evaluation under a strict global few-shot ICL protocol. A small labeled support set is shared across all query rows of a task. This setting tests rapid adaptation to a new RDB under limited labels. Specifically, we evaluate the aggregated performance across a spectrum of few-shot context sizes $N \in \{64, \dots, 1024\}$. For the single-table benchmark, we downsample each dataset to a maximum of $N=1000$ samples. To ensure statistical robustness, we report the mean performance across 10 distinct random seeds for all tasks.

\textbf{Baselines.} To benchmark RDB-PFN as a genuine foundation model, we compare against two types of methods:
\begin{itemize}[itemsep=2pt,topsep=0pt,parsep=0pt,leftmargin=10pt]
\item \textbf{Single-Table Foundation Models (w/ DFS):} We benchmark against state-of-the-art ICL tabular models, including TabPFNv2.5~\citep{grinsztajn2025tabpfn}, TabICLv1.1~\citep{qu2025tabicl}, Mitra~\citep{zhang2025mitra}, and LimiX16M~\citep{zhang2025limix}. Because these architectures cannot natively process relational schemas, we provide them with the exact same linearized DFS features used by RDB-PFN to ensure a strictly fair comparison of modeling capabilities. To facilitate direct architectural comparisons, we evaluate both their default (ensemble) variants and their ``Lite'' (single-estimator) variants.

\item \textbf{Classical Supervised Learning:} We provide reference points using Random Forest~\citep{breiman2001random}, XGBoost~\citep{chen2016xgboost}, and AutoGluon~\citep{erickson2020autogluon}. While these methods require iterative fitting (violating the zero-shot ICL constraint), they remain robust industrial baselines. Because raw library defaults often underfit and perform poorly on DFS-generated relational features, we apply lightweight, budgeted hyperparameter optimization to these baselines to ensure they remain competitive without exceeding practical runtime limits.

\end{itemize}

We also discuss graph-based RDB models such as Griffin~\citep{wang2025griffin} and RT~\citep{ranjan2025relational} as native relational references. These methods operate under different supervised or label-access protocols from our strict global-context ICL setting, so we treat them as contextual comparisons rather than direct apples-to-apples baselines. Detailed configurations and additional protocol discussion are provided in Appendix~\ref{app:baselines}.

\textbf{Model variants.}
We report both \textbf{RDB-PFN-Single} and the full \textbf{RDB-PFN} in analysis. The variants share the same architecture and inference procedure, and differ only in whether the relational pre-training stage is included.

\subsection{RQ1: RDB Foundation Model Capabilities}

We first establish the performance and computational profile of RDB-PFN on the 19 real-world relational tasks.

\textbf{Efficiency \& Complexity Analysis.}
As illustrated in Figure~\ref{fig:efficiency}, RDB-PFN achieves a favorable efficiency--performance trade-off. The main practical gain is inference efficiency: compared with full/default tabular foundation-model baselines, RDB-PFN is approximately \textbf{3.0--6.7$\times$ faster}. This advantage is largely due to avoiding expensive default ensembling, while the smaller backbone provides an additional efficiency benefit. Compared with Lite single-estimator variants, RDB-PFN still owns a moderately faster inference.

RDB-PFN is also data-efficient: it is pre-trained on only $\sim$\textbf{2 million} synthetic datasets. Among baselines with disclosed pre-training budgets, this corresponds to only \textbf{2.5--4.4\%} as many pre-training datasets. RDB-PFN also uses a compact \textbf{0.7M}-parameter backbone, corresponding to \textbf{1.0--6.5\%} of the parameters of the compared tabular foundation models. These results suggest that a relationally matched synthetic prior can provide a useful inductive bias without requiring the scale of current single-table foundation models.

\textbf{SOTA RDB Few-Shot Performance.} Figure~\ref{fig:rdb_main_results} presents the aggregated performance across the benchmark suite.
\begin{itemize}[itemsep=2pt,topsep=0pt,parsep=0pt,leftmargin=10pt]
\item \textbf{Single-Model Dominance:} When restricted to a strict single-estimator setting (no ensembling), RDB-PFN explicitly outperforms all single-table baselines. This confirms that our Relational Prior yields a superior inductive bias for structural data compared to generic tabular priors.
\item \textbf{The Efficiency-Performance Frontier:} While baselines can artificially boost performance through ensembling, RDB-PFN still achieves the highest overall average performance using only a single estimator. It delivers superior predictive accuracy while maintaining the rapid inference speed of a lightweight model.
\end{itemize}


\vspace{-0.1cm}
\subsection{RQ2: Single-Table Performance}

\begin{figure}[t]
\vspace{-0.1cm}
\centering
\includegraphics[width=0.9\linewidth]{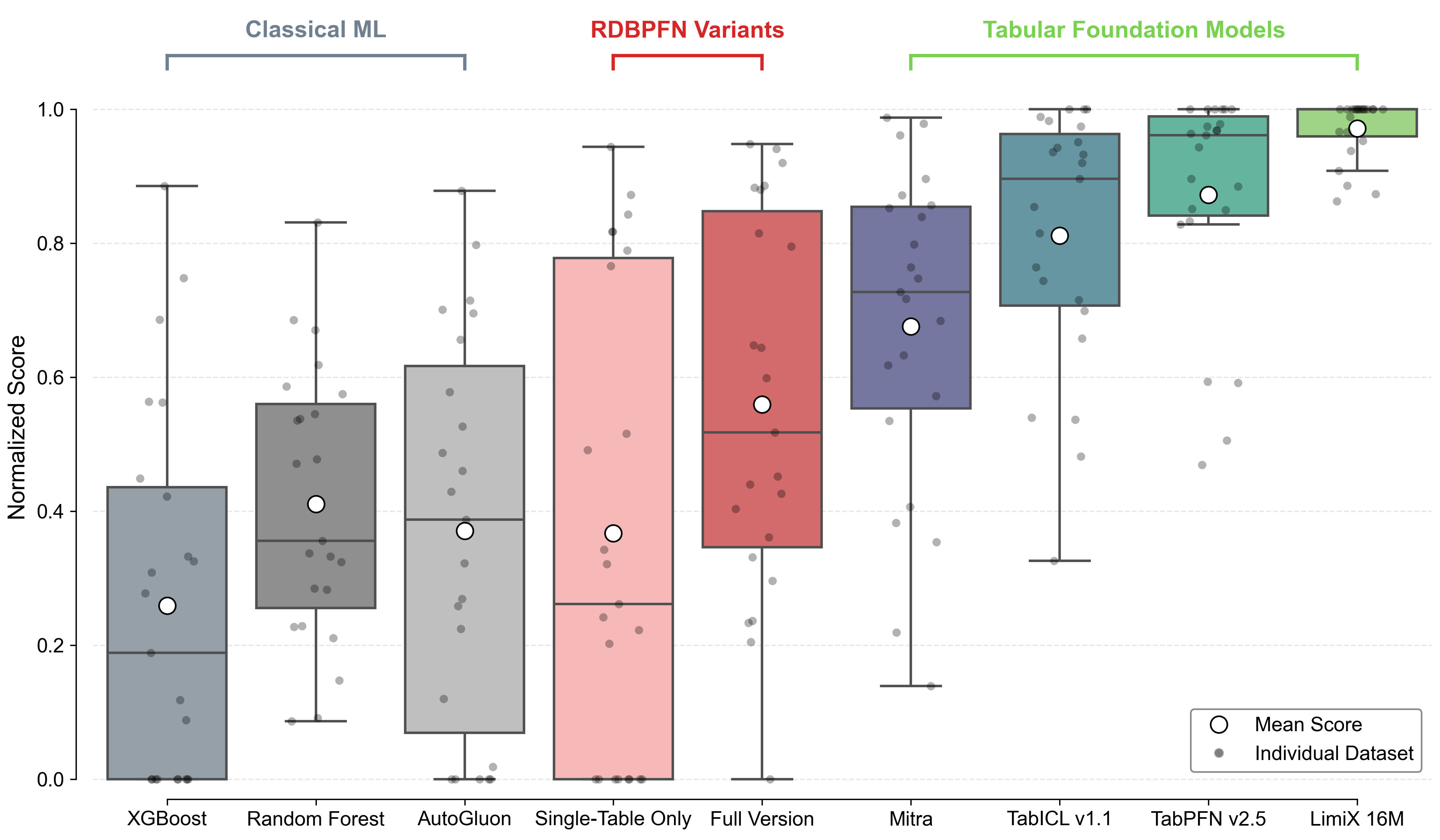}
\vspace{-0.1cm}
\caption{\textbf{Single-Table Performance Analysis.}
We compare classical baselines, specialized tabular foundation models, and two RDB-PFN variants. The single-table benchmark serves as a transfer check rather than the primary target setting. RDB-PFN trails specialized single-table TFMs, while the continued relational synthetic pre-training improves over its single-table-only variant, suggesting a positive transfer.}
\vspace{-0.3cm}
\label{fig:single_table}
\end{figure}

To understand the trade-offs of our specialized design, we evaluated RDB-PFN on standard single-table benchmarks~\citep{grinsztajn2022tree}.

\textbf{Positive Transfer from Relational Pre-training.}
As shown in Figure~\ref{fig:single_table}, RDB-PFN performs competitively in the few-shot single-table setting. More importantly, the full RDB-PFN improves over the single-table-only variant. Since these two variants share the same architecture and differ only in whether relational synthetic pre-training is included, this result suggests that linearized RDBs provide useful additional pre-training variation rather than causing harmful overspecialization to relational inputs.

\textbf{The ``Specialization Gap''.} RDB-PFN still trails specialized single-table foundation models. This is expected: our model is optimized for relational prediction, uses a simpler single-table pre-training stage, and has fewer parameters than several specialized single-table TFMs. We therefore interpret the single-table result as a scope and transfer check, while the main contribution remains the learned relational prior for RDB tasks.

\vspace{-0.2cm}
\subsection{RQ3: Analysis of the Linearized Relational Prior}

\begin{figure}[t]
    \centering
    \begin{subfigure}[b]{0.3\linewidth}
        \centering
        \includegraphics[width=\linewidth]{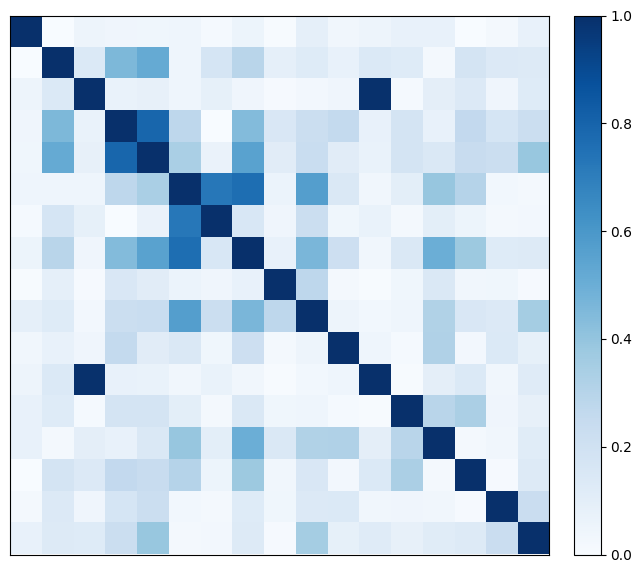}
        \caption{Real Single-Table}
    \end{subfigure}
    \hspace{0.5cm} 
    \begin{subfigure}[b]{0.3\linewidth}
        \centering
        \includegraphics[width=\linewidth]{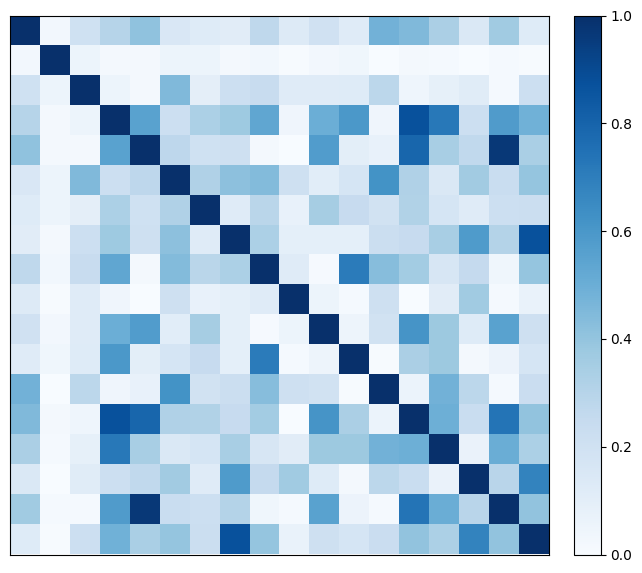}
        \caption{Synthetic Single-Table}
    \end{subfigure}

    \vspace{5pt}

    \begin{subfigure}[b]{0.3\linewidth}
        \centering
        \includegraphics[width=\linewidth]{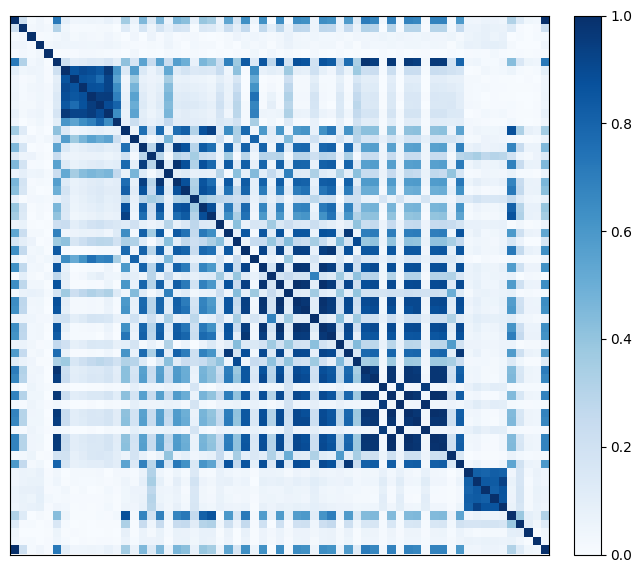}
        \caption{Real RDB (Linearized)}
    \end{subfigure}
    \hspace{0.5cm} 
    \begin{subfigure}[b]{0.3\linewidth}
        \centering
        \includegraphics[width=\linewidth]{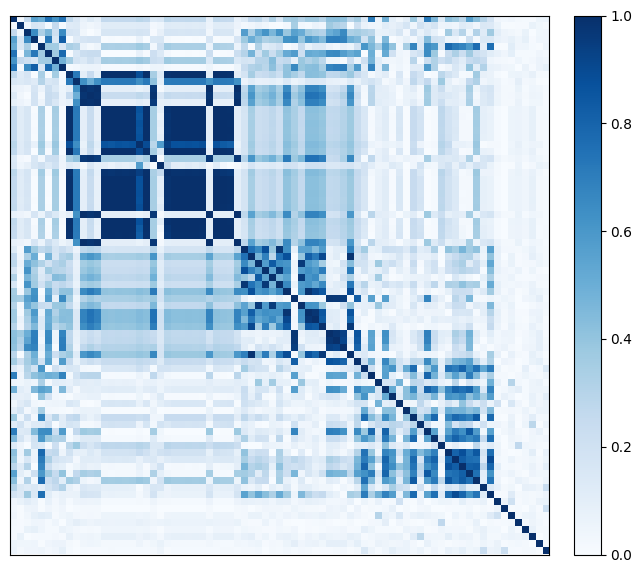}
        \caption{Synthetic RDB (Ours)}
    \end{subfigure}

    \vspace{-0.2cm}
    \caption{\textbf{Structural Correlation Patterns.}
Correlation heatmaps for real/synthetic single-table data and real/synthetic DFS-linearized RDBs. Linearized RDBs exhibit more block-like feature correlations induced by relational aggregations, and our synthetic RDB prior reproduces a similar qualitative pattern.}
    \vspace{-0.7cm}
    \label{fig:structure_heatmap}
\end{figure}

We hypothesize that RDB-PFN outperforms single-table baselines because it captures a distinct topological signature inherent to relational data. For instance, the DFS process creates clusters of highly correlated columns (e.g., the \texttt{Sum} and \texttt{Mean} aggregations of the same parent table), which manifest as distinct statistical patterns that standard tabular priors fail to anticipate.

\vspace{-0.1cm}

\textbf{Visualization Analysis.} Figure~\ref{fig:structure_heatmap} compares the correlation matrices of representative datasets. Both real and synthetic single-table datasets exhibit diffuse, unstructured correlation patterns. Conversely, both real and synthetic RDBs display a prominent \textbf{Block-Diagonal Structure}, where dense blocks correspond to correlated feature families derived from parent relations. This visual alignment strongly suggests that our synthetic generator effectively models the structural manifold of real-world relational data, allowing the network to internalize these dependencies during pretraining.

\paragraph{Additional analysis and stress tests.}
Beyond the main few-shot comparison, we conduct additional analyses to test the robustness of our conclusions. First, we strengthen the classical baselines with budgeted XGBoost tuning and a stronger AutoGluon preset, and also report PR-AUC for imbalanced binary tasks; both checks preserve the qualitative comparison. Second, we study adaptation and scale beyond strict ICL: target-task fine-tuning further improves performance, cross-task fine-tuning gives smaller but positive gains, and chunked-support ensembling shows continued benefits from larger effective support sizes. Third, ablations over the relational prior and backbone show that simplifying the schema generator, link-generation mechanism, temporal component, locality assumption, or interleaved attention design generally degrades performance. Baseline configurations and protocol discussions are provided in Appendix~\ref{app:baselines}; additional per-task results and ablation details are provided in Appendix~\ref{app:additional_analysis}.

\vspace{-0.2cm}
\section{Conclusion and Limitations}
\vspace{-0.1cm}

We presented RDB-PFN, a foundation model for RDBs trained entirely on synthetic data. By replacing the standard i.i.d. single-table assumption with a relational synthetic prior, RDB-PFN enables PFN-style in-context learning on relational prediction tasks. Across real-world relational benchmarks, it achieves strong few-shot performance while remaining compact and efficient.

The current work focuses on a deliberately constrained setting: binary relational classification with few-shot in-context adaptation through a shared global support set. This setting matches the dominant task type in existing public relational benchmarks and highlights the low-label, low-engineering regime targeted by RDB-PFN. In practice, relational learning scenarios can be broader, involving more labeled data for continued adaptation, more diverse task types, and richer ways to use database structure beyond Deep Feature Synthesis. We view these directions as promising next steps for extending synthetic relational pre-training to a wider range of database learning scenarios.





\section*{Acknowledgements}

This work is supported by National Natural Science Foundation of China (62550138, 62276003).




\section*{Impact Statement}

This paper presents work whose goal is to advance the field of Machine
Learning. There are many potential societal consequences of our work, none
which we feel must be specifically highlighted here.





\bibliography{example_paper}
\bibliographystyle{icml2026}

\newpage
\appendix
\onecolumn

\section{Dataset Details \& Evaluation Protocol} \label{app:datasets}

\subsection{Relational Benchmark Statistics (4DBInfer \& RelBench)}

We utilize a suite of 19 diverse predictive tasks derived from 13 distinct relational databases sourced from \textbf{4DBInfer}~\citep{wang20244dbinfer} and \textbf{RelBench}~\citep{robinson2024relbench}. This collection represents a comprehensive cross-section of real-world industrial scenarios, including e-commerce, social networks, and medical research. The selected benchmarks exhibit significant diversity across three key dimensions:
\begin{itemize}
\item \textbf{Scale:} The datasets range from small-scale scientific studies (e.g., \textit{Rel-Trial} with $\sim$5.8M rows) to massive industrial logs (e.g., \textit{AVS} with $\sim$350M rows), testing the model's scalability.\
\item \textbf{Topological Complexity:} The schema complexity varies from simple star schemas (e.g., \textit{Amazon} with 3 tables) to deep, multi-hop snowflake schemas (e.g., \textit{Rel-Trial} with 15 tables and 140 columns).
\item \textbf{Target Distribution:} The tasks cover various difficulties, including highly imbalanced task types. 
\end{itemize}

\begin{table}[H]
    \centering
        \caption{Statistics of relational database datasets.}
    \begin{tabular}{|l|c|c|c|}
        \hline
        \textbf{Dataset} & \textbf{Tables} & \textbf{Columns} & \textbf{Rows} \\
        \hline
        Amazon & 3 & 15 & 24,291,489 \\
        AVS & 3 & 24 & 349,967,371 \\
        Diginetica & 5 & 28 & 3,672,396 \\
        Outbrain & 8 & 31 & 4,778,954 \\
        Retailrocket & 3 & 11 & 23,033,676 \\
        Stackexchange & 7 & 49 & 5,399,818 \\
        Rel-amazon & 3 & 15 & 24,291,489 \\
        Rel-avito & 8 & 43 & 20,679,117 \\
        Rel-event & 5 & 128 & 41,328,337 \\
        Rel-f1 & 9 & 77 & 97606 \\
        Rel-hm & 3 & 37 & 33,265,846 \\
        Rel-stack & 7 & 51 & 38,109,828 \\
        Rel-trial & 15 & 140 & 5,852,157 \\
        \hline
    \end{tabular}

    \label{tab:rdb_stats}
\end{table}

\begin{table}[H]
    \centering
        \caption{Statistics of relational database tasks.}
    \begin{tabular}{|l|c|c|}
        \hline
        \textbf{Dataset} & \textbf{Task Description} & \textbf{\#Train / \#Val / \#Test} \\
        \hline
        Amazon & User Churn Prediction & 1,045,568 / 149,205 / 152,486 \\
        AVS & Customer Retention Prediction  & 109,341 / 24,261 / 26,455 \\
        Diginetica & Click-through-rate Prediction & 108,570 / 6,262 / 5,058 \\
        Outbrain & Click-through-rate Prediction & 69,709 / 8,715 / 8,718 \\
        RetailRocket & Conversion-rate Prediction & 80,008 / 9,995 / 9,997 \\
        Stackexchange & User Churn Prediction & 142,877 / 88,164 / 105,612 \\
        Stackexchange & Post Popularity Prediction & 308,698 / 38,587 / 38,588 \\
        Rel-amazon & User Churn Prediction& 4,732,555 / 409,792 / 351,885 \\
        Rel-amazon & Item Churn Prediction& 2,559,264 / 177,689 / 166,842 \\
        Rel-avito & User Clicks Prediction & 59,454 / 21,183 / 47,996 \\
        Rel-avito & User Visits Prediction & 86,619 / 29,979 / 36,129 \\
        Rel-event & User Repeat Prediction & 3,842 / 268 / 246 \\
        Rel-event & User Ignore Prediction & 19,239 / 4,185 / 4,010 \\
        Rel-f1 & Driver DNF Prediction & 11,411 / 566 / 702  \\
        Rel-f1 & Driver Top3 Prediction & 1,353 / 588 / 726  \\
        Rel-hm & User Churn Prediction & 3,871,410 / 76,556 / 74,575 \\
        Rel-stack & User Engagement Prediction & 1,360,850 / 85,838 / 88,137 \\
        Rel-stack & User Badge Prediction & 3,386,276 / 247,398 / 255,360 \\
        Rel-trial & Study Outcome Prediction & 11,994 / 960 / 825 \\
        \hline
    \end{tabular}

    \label{tab:rdb_task_stats}
\end{table}

\subsection{Single-Table Benchmark Details}

To verify the backward compatibility of RDB-PFN with standard tabular tasks, we evaluate on a subset of the \textbf{Tabular Benchmark} proposed by \citet{grinsztajn2022tree}. While our primary focus is relational reasoning, this evaluation ensures that our architecture maintains competitive performance on ``flat'' feature matrices.

We selected 23 diverse classification datasets ranging from small-scale tasks (e.g., \textit{Bioresponse}) to larger industrial logs (e.g., \textit{Higgs}, \textit{Covertype}). Table~\ref{tab:single_table_stats} details the characteristics of these datasets.

\begin{table}[h] 
\centering 
\caption{Statistics of the Single-Table Classification Datasets used for verification. \textbf{(Num)} and \textbf{(Cat)} denote two variants of the same dataset: one containing exclusively numerical features and one containing categorical features, respectively.}
\label{tab:single_table_stats} 
\resizebox{0.9\textwidth}{!}{
\begin{tabular}{lrr|lrr} 
\toprule 
\textbf{Dataset} & \textbf{\# Samples} & \textbf{\# Feats} & \textbf{Dataset} & \textbf{\# Samples} & \textbf{\# Feats} \\
\midrule Bioresponse & 3,434 & 419 & Default-Credit (Num) & 13,272 & 20 \\
Diabetes130US & 71,090 & 7 & Default-Credit (Cat) & 13,272 & 21 \\
Higgs & 940,160 & 24 & Electricity (Num) & 38,474 & 7 \\
MagicTelescope & 13,376 & 10 & Electricity (Cat) & 38,474 & 8 \\
MiniBooNE & 72,998 & 50 & Eye\_Movements (Num) & 7,608 & 20 \\
Albert & 58,252 & 31 & Eye\_Movements (Cat) & 7,608 & 23 \\
Bank-Marketing & 10,578 & 7 & HELOC & 10,000 & 22 \\
California & 20,634 & 8 & House\_16H & 13,488 & 16 \\
Compas-Two-Years & 4,966 & 11 & Jannis & 57,580 & 54 \\
Covertype (Num) & 566,602 & 10 & Pol & 10,082 & 26 \\
Covertype (Cat) & 423,680 & 54 & Road-Safety & 111,762 & 32 \\
Credit & 16,714 & 10 & ~ & ~ & \\ \bottomrule \end{tabular} } \end{table}

\subsection{Evaluation Protocol}

To ensure statistical rigor, we adhere to a standardized few-shot evaluation protocol across all experiments.

\paragraph{RDB Few-Shot Evaluation.}

For relational tasks, we evaluate the model's ability to learn in-context across a spectrum of data availability regimes.

\begin{itemize}

\item \textbf{Context Sizes ($N_{shot}$):} We iterate through context lengths of $k \in \{64, 128, 256, 512, 1024\}$.

\item \textbf{Robustness:} For each task and each $N_{shot}$, we perform inference over \textbf{10 distinct random seeds}. In each run, the context examples are sampled uniformly at random from the training split. We report the mean and standard deviation of the performance metric across these 10 folds to account for variance in context quality.

\end{itemize}

\paragraph{Single-Table Verification.}

For the single-table benchmarks, we adopt a lightweight, fixed-budget evaluation protocol.

\begin{itemize}

\item \textbf{Setup:} We fix the dataset size to $N=1000$ samples, downsampling larger datasets as necessary.

\item \textbf{Repeated Trials:} Using the downsampled datasets, we apply a 70\%/30\% train-test split. We then train and evaluate the model, repeating this process 10 times to ensure statistical stability.

\end{itemize}

\paragraph{Task Standardization \& Metrics.}

In this work, we focus primarily on Binary Classification, which constitutes the vast majority of high-value industrial relational problems (e.g., Churn, CTR, Fraud).

\begin{itemize}

\item \textbf{Scope:} Consequently, all selected benchmarks are binary classification tasks. Multi-class targets (if any) can be binarized (e.g., ``Top-1 vs. Rest'') to align with the current architecture's output head.

\item \textbf{Metric:} We report ROC-AUC as the primary metric.

\end{itemize}

\section{Baseline Configurations \& Analysis} \label{app:baselines}

\subsection{Classical Supervised Learning Baselines}

To represent the Industrial Standard, we evaluate three widely-used algorithms. Crucially, to ensure a fair comparison of modeling capacity, all baselines are fed the exact same linearized DFS features as RDB-PFN.

\begin{itemize}

\item \textbf{AutoGluon}~\citep{erickson2020autogluon}: We use the ``medium\_quality'' preset as an efficient default that balances predictive performance and training cost. To provide a stronger AutoML baseline, we additionally evaluate the ``best\_quality'' preset with a 5-minute time budget per task.

\item \textbf{Random Forest}~\citep{breiman2001random}: We observed that the standard configuration with 100 estimators led to rapid inference but underfit complex relational features. To provide a stronger baseline while maintaining reasonable speed, we increase the capacity by setting the number of estimators to 500.

\item \textbf{XGBoost}~\citep{chen2016xgboost}: Similarly, standard configurations often resulted in premature convergence and suboptimal performance. We therefore use a strengthened fixed configuration with 5000 estimators, learning rate 0.01, maximum depth 12, and both ``subsample'' and ``colsample\_bytree'' set to 0.8. We also evaluate a tuned variant, where hyperparameters are selected on an internal validation split generated from the available training data.

\end{itemize}

\subsection{Single-Table Foundation Model Configurations}

All single-table foundation models are evaluated using the same DFS-Linearized inputs. For models that support ensembling, we evaluate both the \textbf{Default} (ensemble) configuration and a \textbf{Lite} (single-estimator) configuration to facilitate a direct comparison with RDB-PFN's efficiency.

\begin{itemize}

\item \textbf{TabPFNv2.5}~\citep{grinsztajn2025tabpfn}: We use the official ``tabpfn-v2.5'' checkpoint (\texttt{tabpfn-v2.5-classifier-v2.5\_default.ckpt}).

\begin{itemize}

\item \textit{Lite Variant:} We disable the default ensembling mechanism (setting $N_{ens}=1$).

\end{itemize}

\item \textbf{TabICLv1.1}~\citep{qu2025tabicl}: We utilize the official default checkpoint.
\begin{itemize}
    \item \textit{Lite Variant:} We evaluate the model without its standard ensemble aggregation.
\end{itemize}

\item \textbf{Mitra}~\citep{zhang2025mitra}: We use the checkpoint provided via the AutoGluon integration. Note that the default configuration for Mitra already utilizes a single estimator; therefore, no separate ``Lite'' variant is required.

\item \textbf{LimiX16M}~\citep{zhang2025limix}: We use the official checkpoint.
\begin{itemize}
    \item \textit{Preprocessing Adjustment:} We observed that the default SVD preprocessing caused numerical instability and crashes due to the higher noise variance in linearized RDB features. Consequently, we disable SVD preprocessing for all experiments.
    \item \textit{Lite Variant:} We evaluate the model with ensembling disabled.
\end{itemize}

\end{itemize}

\subsection{Graph-Based RDB Foundation Models}

We also investigated Graph-Based RDB foundation models. Unlike single-table approaches, these models ingest raw relational schemas directly, theoretically offering a higher potential to capture complex structural topologies without the information loss inherent in flattening. However, they currently face significant challenges regarding training complexity and scalability.

We examined two leading methods: \textbf{Griffin}~\citep{wang2025griffin} and \textbf{RT}~\citep{ranjan2025relational}. Both primarily focus on supervised fine-tuning and may incorporate label semantics in ways that diverge from our strict in-context learning protocol.

\begin{figure*}[h]
    \vspace{-0.1cm}
    \centering
    \begin{subfigure}[b]{0.48\linewidth}
        \centering
        \includegraphics[width=\linewidth]{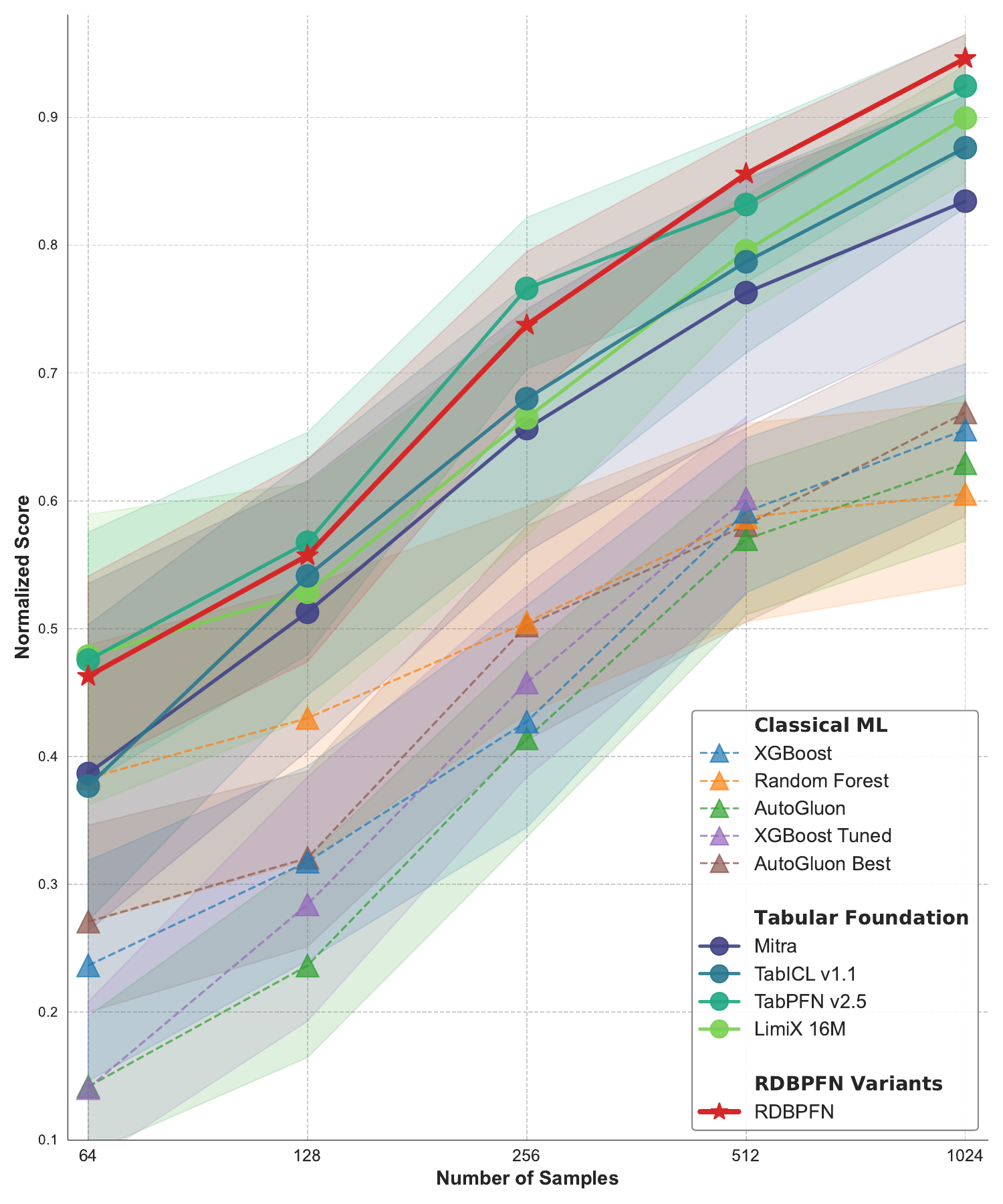}
        \vspace{-0.5cm}
        \caption{Enhanced classical baselines}
        \label{fig:enhanced_classical_baselines}
    \end{subfigure}
    \hfill
    \begin{subfigure}[b]{0.48\linewidth}
        \centering
        \includegraphics[width=\linewidth]{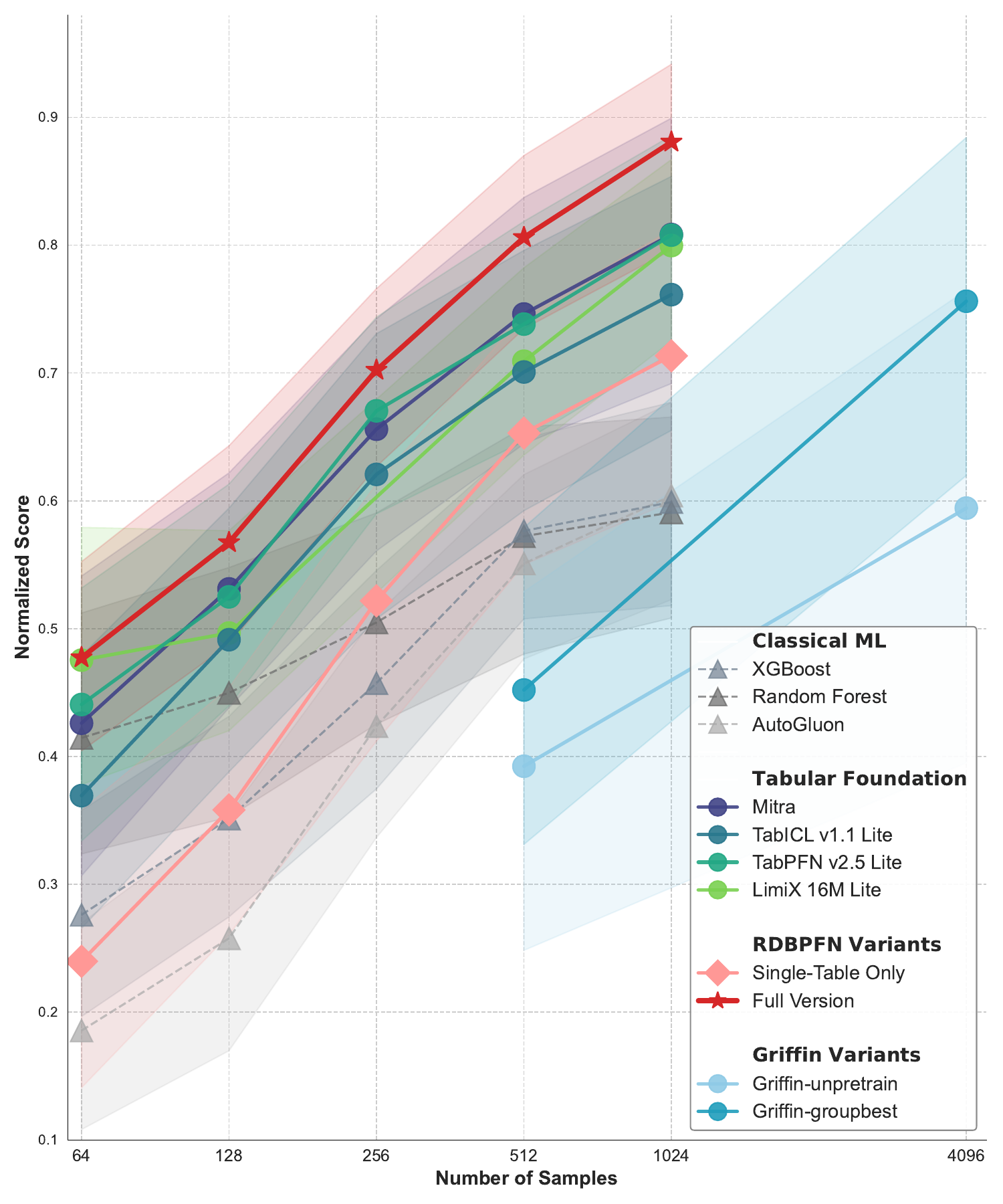}
        \vspace{-0.5cm}
        \caption{Graph-based RDB reference}
        \label{fig:griffin_reference_comparison}
    \end{subfigure}
    \vspace{-0.1cm}
    \caption{\textbf{Additional Baseline Analysis.}
    Left: comparison after strengthening classical baselines with tuned XGBoost and AutoGluon using the \texttt{best\_quality} preset under a 5-minute time budget. Right: comparison with Griffin on the subset of overlapping tasks, using an idealized reference that selects the best-performing Griffin checkpoint for each task group. These additional comparisons preserve the qualitative conclusion while clarifying the strength and protocol differences of the baselines.}
    \vspace{-0.6cm}
    \label{fig:additional_baseline_analysis}
\end{figure*}

\begin{itemize}

\item \textbf{Griffin:} This framework's few-shot evaluation is limited to a subset of datasets and specific shot counts ($N \in \{512, 4096\}$). Furthermore, its primary focus is on \textit{transfer learning} via fine-tuning on domain-specific corpora, rather than providing a single, universal inference engine. Since Griffin provides four distinct model variants rather than a unified foundation model, a direct comparison is challenging.

\textit{Comparison Strategy:} To approximate a comparison, we adopted an ``Ensemble'' strategy for Griffin: for each task, we selected the best-performing Griffin checkpoint from its respective domain group. Even against this idealized baseline, RDB-PFN consistently achieves superior performance across the intersecting tasks, demonstrating the robustness of our universal prior (shown in figure~\ref{fig:additional_baseline_analysis}).

\item \textbf{RT:} RT studies a different zero-shot relational protocol. Although it does not update model weights using target labels, it materializes label information into the database as an auxiliary table during inference. In the reported RT setting, labels from the training split are included in this table, so the sampled relational context can expose target-specific historical labels or labels of nearby connected entities. In contrast, our strict ICL protocol uses a fixed global support set shared across test predictions and does not inject per-target labels into the relational graph. Thus, adapting RT to our \textbf{global-context protocol} or adapting RDB-PFN to RT's \textbf{local graph-conditioned protocol} would change the accessible supervision, not only the computational cost. We therefore discuss RT as a related relational reference rather than reporting it as a direct same-protocol baseline.

\end{itemize}

\section{Implementation Details} \label{app:implementation}

\begin{figure}[t]
    \centering
    \includegraphics[width=0.7\linewidth]{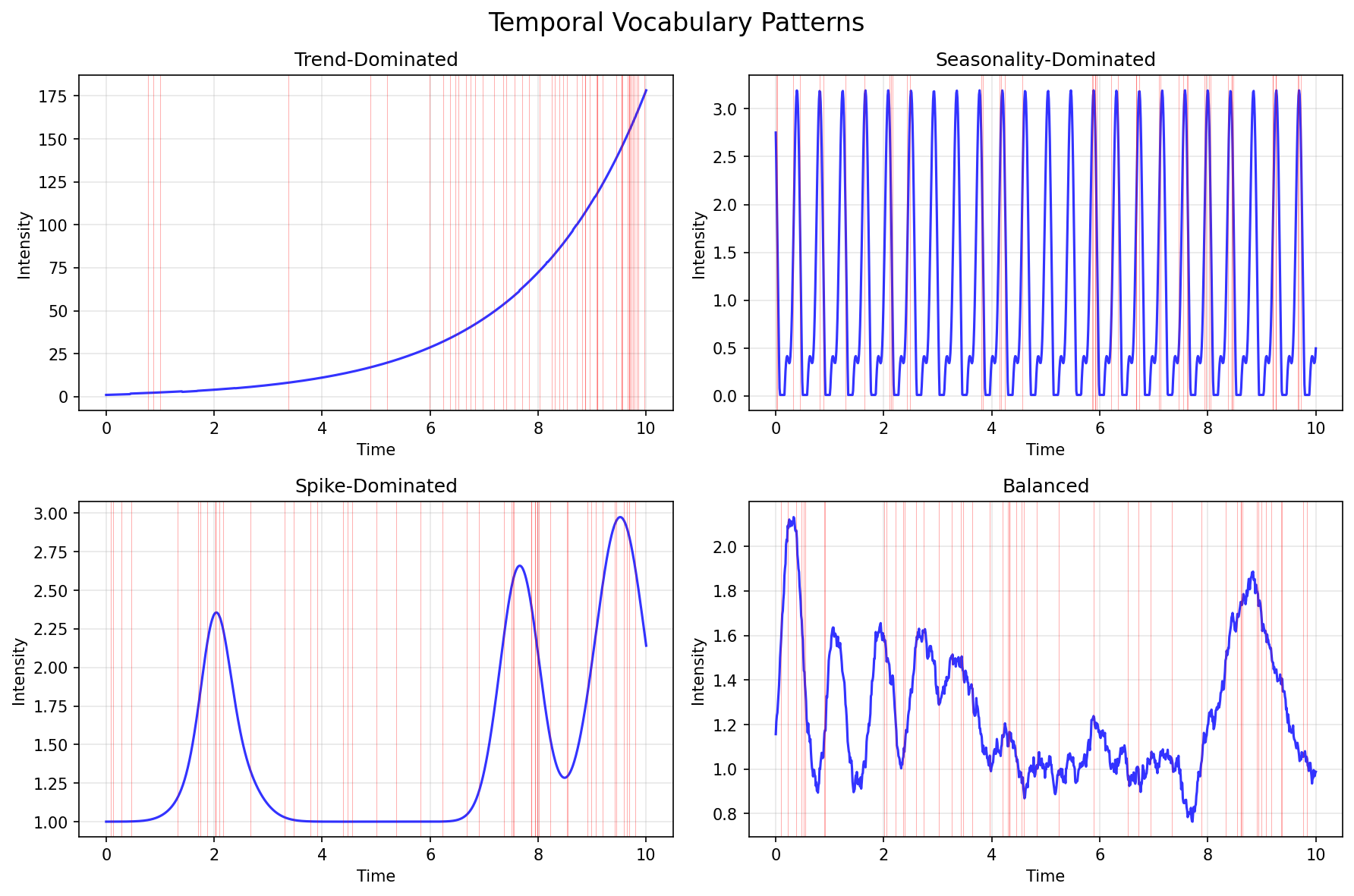}
    \vspace{-0.2cm}
    \caption{\textbf{Diversity of Generated Temporal Patterns.}
    We visualize the row density over time for four distinct synthetic tables. By composing primitives from our Temporal Vocabulary (Trend, Seasonality, Spike), the prior generates complex, non-i.i.d. distributions.}
    \label{fig:temporal_vocab}
    \vspace{-0.4cm}
\end{figure}

\subsection{Data Generation Model Details}

\subsubsection{Stage 1: Schema Generation with LayerDAG} \label{app:layerdag}

To approximate the complex distribution of realistic database schemas $P(G_S)$, we employ LayerDAG~\citep{li2024layerdag}, an autoregressive discrete diffusion model. LayerDAG is well suited for relational schema synthesis because it decomposes DAG generation into a sequence of bipartite layers, a structure that naturally enforces the acyclic dependencies required for valid Foreign Key (FK) joins. The generation process proceeds autoregressively as follows:
\begin{itemize}[itemsep=2pt,topsep=0pt,parsep=0pt,leftmargin=10pt]
\item \textbf{Layerwise Decomposition:} The model views the schema graph as a topological sequence of layers $\mathcal{V}^{(1)}, \dots, \mathcal{V}^{(L)}$. The first layer $\mathcal{V}^{(1)}$ consists of independent ``Source Tables'' (root nodes), while subsequent layers contain dependent tables that reference previous layers.
\item \textbf{Conditional Diffusion:} At each step $l$, the model conditions on the partial graph generated so far, $G^{(\le l-1)}$, to generate the next bipartite layer. A discrete diffusion process jointly synthesizes both the table nodes (metadata/attributes) and the FK edges, ensuring that every generated connection respects the logical constraints of the schema.
\end{itemize}
We pre-train this module on a small corpus of real-world database schemas~\citep{yu2018spider,li2023can}. This allows our relational prior to sample realistic industrial topologies, ranging from star schemas to deep snowflake structures, which then serve as schema skeletons for synthetic data generation. To test whether this learned schema model is necessary, we also compare against a simpler DAG generator in Appendix~\ref{app:ablation_results}. The LayerDAG-based prior gives a slight but consistent improvement, suggesting that realistic schema topology is useful but not the only source of the model's performance.

\subsubsection{Stage 2: Structural Generation Details}
\label{app:stage2_details}

The structural generation stage relies on a selective SCM, where the foreign-key connection probability is parameterized. The pseudocode for generating one dependent table is shown in Algorithm~\ref{alg:selective_scm}.

\begin{algorithm}[t]
\caption{Selective SCM for Foreign-Key Generation (one dependent table)}
\label{alg:selective_scm}
\begin{algorithmic}[1]
\REQUIRE Parent row sets $\{U^{(1)},\dots,U^{(p)}\}$, \#child rows $n$, candidate size $M$, params $\psi$
\ENSURE FK assignments and latent states $\{z_v\}$
\FOR{$i=1$ to $n$}
    \STATE Sample child initialization $z^{(0)} \leftarrow \mathrm{MLP}_{init}(\epsilon)$
    \STATE Sample candidate parent tuples $\{C_j\}_{j=1}^M$, each $C_j=(u^{(1)}_j,\dots,u^{(p)}_j)$
    \STATE Tuple embed: $h_{C_j} \leftarrow \mathrm{MLP}_{comb}(e_{u^{(1)}_j}\oplus\cdots\oplus e_{u^{(p)}_j})$
    \STATE Score: $s_j \leftarrow \langle z^{(0)}W_Q,\, h_{C_j}W_K \rangle$
    \STATE Sample $C^\star \sim \mathrm{Categorical}(\mathrm{Softmax}(s_1,\dots,s_M))$ and set FKs accordingly
    \STATE Update child state: $z_v \leftarrow \mathrm{MLP}_{child}(z^{(0)} \oplus h_{C^\star})$
    \STATE Optional feedback: for each selected parent $u\in C^\star$, update $e_u \leftarrow \mathrm{MLP}_{fb}(e_u \oplus z_v)$
\ENDFOR
\end{algorithmic}
\end{algorithm}

In the default attention-based sampler, $W_Q$ and $W_K$ are learnable linear projections initialized with the standard Kaiming-uniform initialization. The selected parent tuple is sampled stochastically from the softmax distribution rather than chosen by an argmax, preserving diversity in the generated structural skeleton. We also implement simpler link-generation variants, including fixed-probability sampling and concatenation followed by MLP scoring; their ablation results are reported in Appendix~\ref{app:ablation_results}.

In addition, to simulate the diversity of real-world database topologies, which range from nearly uniform random graphs to highly skewed scale-free networks, we implement a \textbf{Hybrid Sampling Strategy} and a \textbf{Temporal Latent Initialization} mechanism.

\paragraph{1. Topological Control via Hybrid Sampling.}
As posited in the main text, the distribution of node degrees (e.g., the number of Orders associated with a User) can be governed by the parent update frequency. We implement this practically by mixing two generation modes:

\begin{itemize}
\item \textbf{Mode A: Parallel Generation (Frozen State).} In this mode, we generate child rows without updating parent embeddings at intermediate steps. We sample candidate parent tuples for all child rows in parallel, based on a static snapshot of the parent states. This approximates a more uniform random graph process, since high-degree parents do not gain an immediate advantage during batch generation.
\item \textbf{Mode B: Sequential Generation (Dynamic Feedback).} In this mode, we generate children in small mini-batches. After each batch, we apply the feedback update only to the selected parents $u\in C^\star$. This increases the future selection probability of chosen parents and induces preferential-attachment-like behavior, leading to more long-tailed degree distributions.
\end{itemize}

By modulating the mixing ratio between Mode A and Mode B, together with the magnitude and frequency of the feedback update, we can continuously interpolate between more uniform and more power-law-like degree distributions.

\paragraph{2. Temporal Latent Initialization.}
Real-world data is rarely i.i.d.; rather, it often exhibits temporal dependencies. To capture this, we augment the latent initialization step. Instead of sampling purely from a standard Gaussian distribution, the initial child state $z^{(0)}$ can be conditioned on timestamp-derived temporal features from a \textbf{Temporal Vocabulary}:

\begin{itemize}
\item \textbf{Signal Primitives:} We define a library of three temporal signals:
\begin{enumerate}
\item \textbf{Trend:} Linear or non-linear drift over time (e.g., a growing user base).
\item \textbf{Seasonality:} Cyclic patterns (e.g., weekly or monthly spending habits).
\item \textbf{Spike:} Sparse, high-magnitude events (e.g., Black Friday sales).
\end{enumerate}
\item \textbf{Composition:} For each table, we sample a random mixture of these primitives to form a temporal signature. This signature affects both timestamp generation and latent initialization, ensuring that generated rows reflect diverse temporal evolutions rather than static noise distributions (see Figure~\ref{fig:temporal_vocab}).
\end{itemize}

We further include an ablation that removes the temporal component in Appendix~\ref{app:ablation_results}, which tests whether these timestamp-driven signals contribute to downstream relational prediction.

\subsection{Main Model Details}

\subsubsection{Model Architecture}

We employ a 6-layer Bidirectional Transformer architecture, optimized for high-throughput inference in resource-constrained environments.

\begin{itemize}

\item \textbf{Hyperparameters:} The model utilizes an embedding dimension of $d_{model}=128$ and 4 parallel attention heads.

\item \textbf{Efficiency:} This lightweight configuration allows for rapid deployment while retaining sufficient capacity to resolve complex relational patterns via the attention mechanism.

\end{itemize}

\subsubsection{DFS Linearization Configuration}

To linearize the relational graph into a sequence compatible with the Transformer, we apply Deep Feature Synthesis (DFS) using a restricted, robust set of aggregation primitives.

\begin{itemize}

\item \textbf{One-to-Many (Aggregation):} We utilize $\{\texttt{Mean}, \texttt{Max}, \texttt{Min}, \texttt{Count}, \texttt{Mode}\}$ to summarize child records.

\item \textbf{One-to-One (Transformation):} We employ standard identity mappings to propagate attributes from parent tables directly to child rows.

\end{itemize}

\subsubsection{Training Protocol}

We adopt a Two-Stage Curriculum Learning strategy to stabilize training and progressively introduce relational complexity.

\begin{itemize}[itemsep=2pt,topsep=0pt,parsep=0pt,leftmargin=10pt]

\item \textbf{Stage 1: Tabular Warm-up (Feature Reasoning).}

The model is initially trained on $600k$ synthetic \textit{single-table} datasets. This phase focuses on mastering fundamental statistical properties and feature interactions.

\begin{itemize}

\item \textit{Data Dimensions:} Fixed context size of 600 rows $\times$ 18 columns.

\item \textit{Hardware:} Single NVIDIA RTX 4090 GPU.

\item \textit{Optimization:} We use the \textbf{Schedule-Free AdamW} optimizer~\citep{defazio2024road, loshchilov2017decoupled} with a learning rate of $lr=5\text{e-}4$.

\end{itemize}

\item \textbf{Stage 2: Relational Adaptation Pre-training (Structural Reasoning).}

We continue training on a mixed corpus of approximately $1.8$ million synthetic datasets, combining single-table data with complex RDBs generated by our Relational Prior. This phase adapts the model to the structural modality of aggregated features. To efficiently process the expanded corpus and increased structural complexity, training for this stage is distributed across 8 NVIDIA RTX 4090 GPUs.

\textbf{1. Dataset Composition.} The training mix is stratified as follows:

\begin{itemize}

\item \textbf{Single-Table Tasks:} $\sim$600k datasets.

\item \textbf{Relational Tasks (RDBs):} $\sim$1.2M datasets, comprising:

\begin{itemize}

\item Small Prior (1-hop DFS): $\sim$800k.

\item Small Prior (2-hop DFS): $\sim$200k.

\item Large Prior (1-hop DFS): $\sim$200k.

\end{itemize}

\end{itemize}

\textbf{2. Feature Standardization (Over-generate \& Subsample).}

Since DFS produces feature sets of variable length depending on the schema depth, we employ a standardization strategy to maintain consistent input dimensions. We initially over-compute features by generating 60 columns for 1-hop tasks and 90 columns for 2-hop tasks, and subsequently downsample them to a fixed width of 30 columns. This ensures all Stage 2 tasks share a uniform shape of 600 rows $\times$ 30 columns.

\textbf{3. Task Augmentation.}

To maximize data utility, we employ a multi-target sampling strategy. For each generated schema, rather than selecting a single target column, we randomly sample 6 distinct columns to serve as prediction targets. This effectively multiplies the available training instances, forcing the model to reason about different dependency directions within the same structural context.

\textbf{4. Row Sampling and Split Construction.}
For each generated RDB, we first run DFS on the full database instance and then uniformly sample rows from the resulting target table to form a fixed-size task table. Unless otherwise specified, each task contains 600 rows after downsampling. We vary task difficulty by changing the train/test split ratio.

\end{itemize}

\section{Additional Experimental Analyses}
\label{app:additional_analysis}

In this section, we provide additional experiments that complement the main few-shot comparison. These analyses test whether the main conclusions are robust to alternative metrics, adaptation protocols, larger effective context sizes, model-side design choices, relational-prior simplifications, and DFS configurations.

\subsection{PR-AUC on Imbalanced Tasks}
\label{app:prauc}

The main paper reports ROC-AUC following common practice in relational prediction benchmarks. However, several tasks are class-imbalanced, where PR-AUC can provide a more sensitive view of minority-class prediction quality. We therefore re-evaluate the main binary classification benchmarks using PR-AUC. As shown in Figure~\ref{fig:prauc_results}, the comparison under PR-AUC is broadly consistent with the ROC-AUC results. This suggests that the observed gains are not only due to improvements on easy majority-class rankings, but also remain visible under a metric more sensitive to positive-class retrieval.

\begin{figure}[ht]
    \centering
    \includegraphics[width=0.5\linewidth]{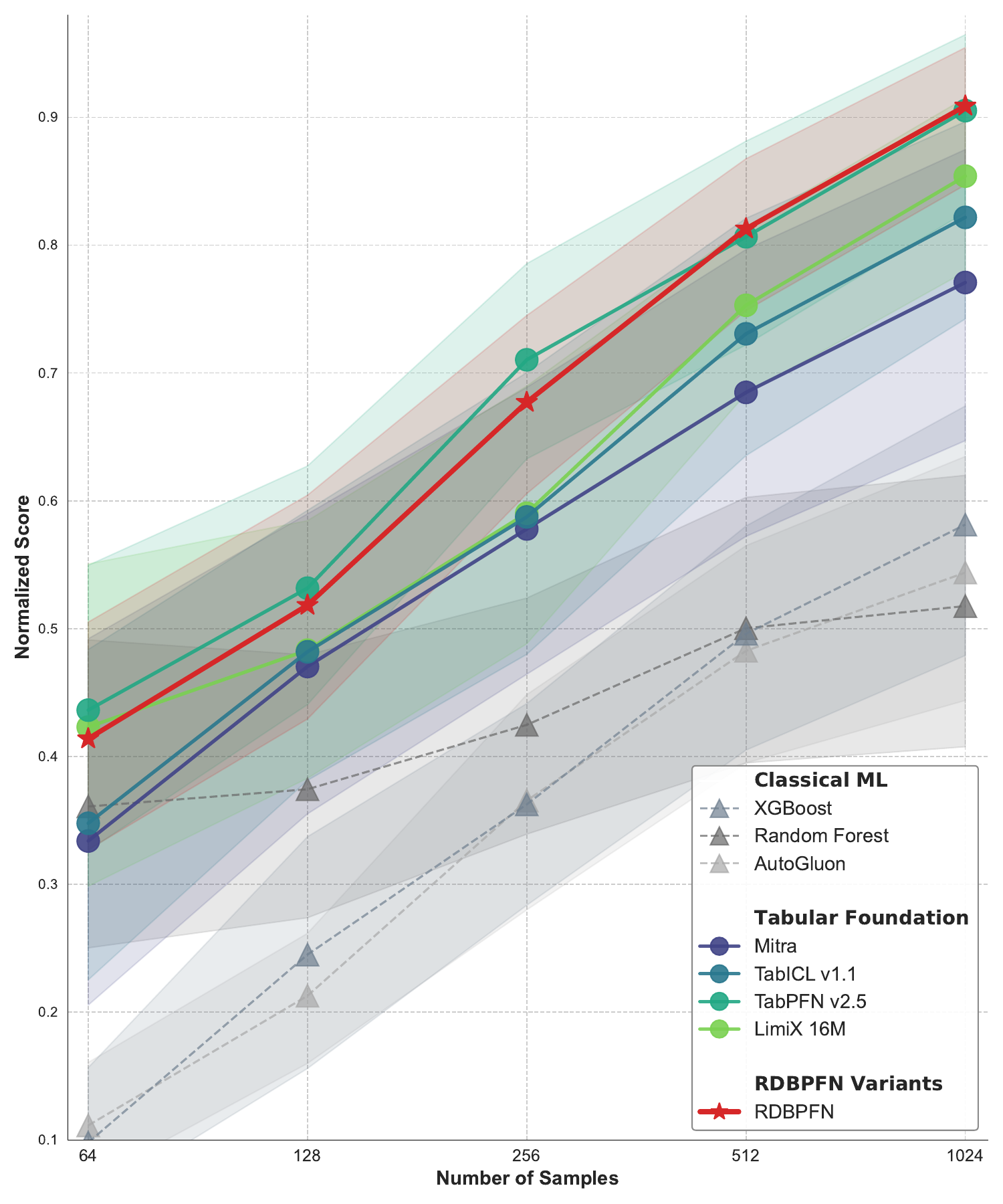}
    \vspace{-0.2cm}
    \caption{\textbf{Main benchmark results under PR-AUC.}
    We report PR-AUC to complement ROC-AUC on imbalanced binary classification tasks. The qualitative comparison remains similar to the ROC-AUC results in the main text.}
    \label{fig:prauc_results}
    \vspace{-0.4cm}
\end{figure}

\subsection{Adaptation and Effective Context Scaling}
\label{app:adaptation_scaling}

Our main evaluation focuses on strict few-shot in-context learning, where the model adapts to a new relational task through a fixed labeled context without updating parameters. We additionally evaluate whether RDB-PFN can benefit from real relational data when gradient-based adaptation is allowed, and whether it can use larger effective support sets through chunked-support ensembling.

For adaptation, we compare three settings: strict ICL, target-task fine-tuning, and cross-task fine-tuning. In target-task fine-tuning, the model is fine-tuned using labeled examples from the same task that will later be evaluated. This measures the benefit of task-specific adaptation when additional labels from the target task are available. In cross-task fine-tuning, when evaluating a particular target task, we exclude that task from the fine-tuning data and fine-tune only on tasks from other datasets. This setting is intended to test whether real relational tasks provide transferable adaptation signal beyond the synthetic pre-training distribution, without directly training on the evaluated task itself. Fine-tuning is run for 1,000 steps with 1,024 samples per step. We also evaluate TabPFNv2.5 under target-task fine-tuning using its official lightweight fine-tuning configuration. Figure~\ref{fig:adaptation_scaling}a shows that target-task fine-tuning further improves performance over strict ICL, while cross-task fine-tuning provides a smaller but positive gain.

For effective context scaling, we split a larger support set into multiple chunks, run RDB-PFN separately on each chunk, and average the predicted probabilities. Figure~\ref{fig:adaptation_scaling}b shows that increasing the effective support size from 1,024 to 8,192 generally improves performance, with diminishing gains and increased runtime at larger support sizes. This indicates a practical performance-runtime trade-off and highlights context size as an important limitation of the current ICL setup.

\begin{figure*}[ht]
    \centering
    \begin{subfigure}[b]{0.48\linewidth}
        \centering
        \includegraphics[width=\linewidth]{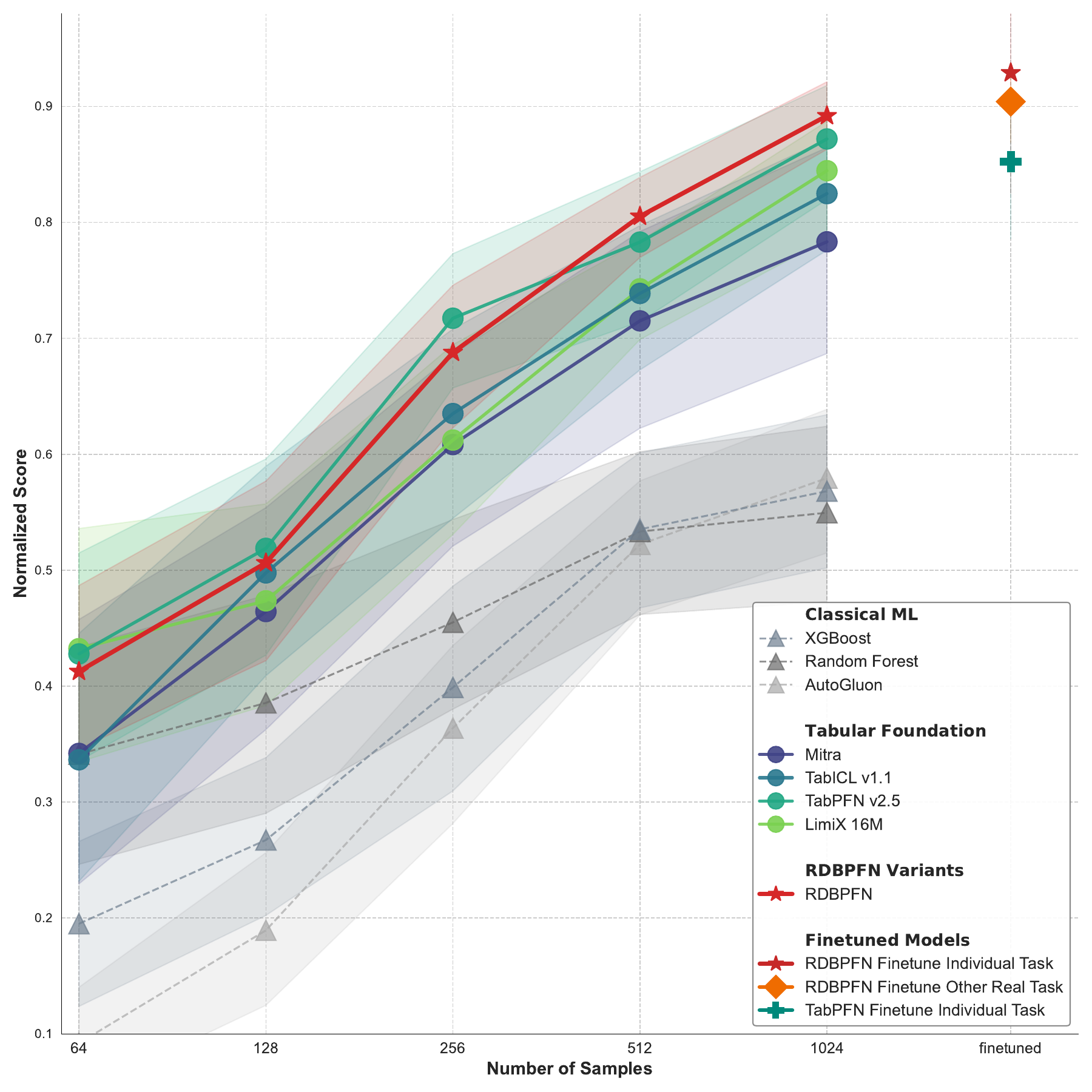}
        \vspace{-0.4cm}
        \caption{Fine-tuning and cross-task adaptation}
        \label{fig:finetune_results}
    \end{subfigure}
    \hfill
    \begin{subfigure}[b]{0.48\linewidth}
        \centering
        \includegraphics[width=\linewidth]{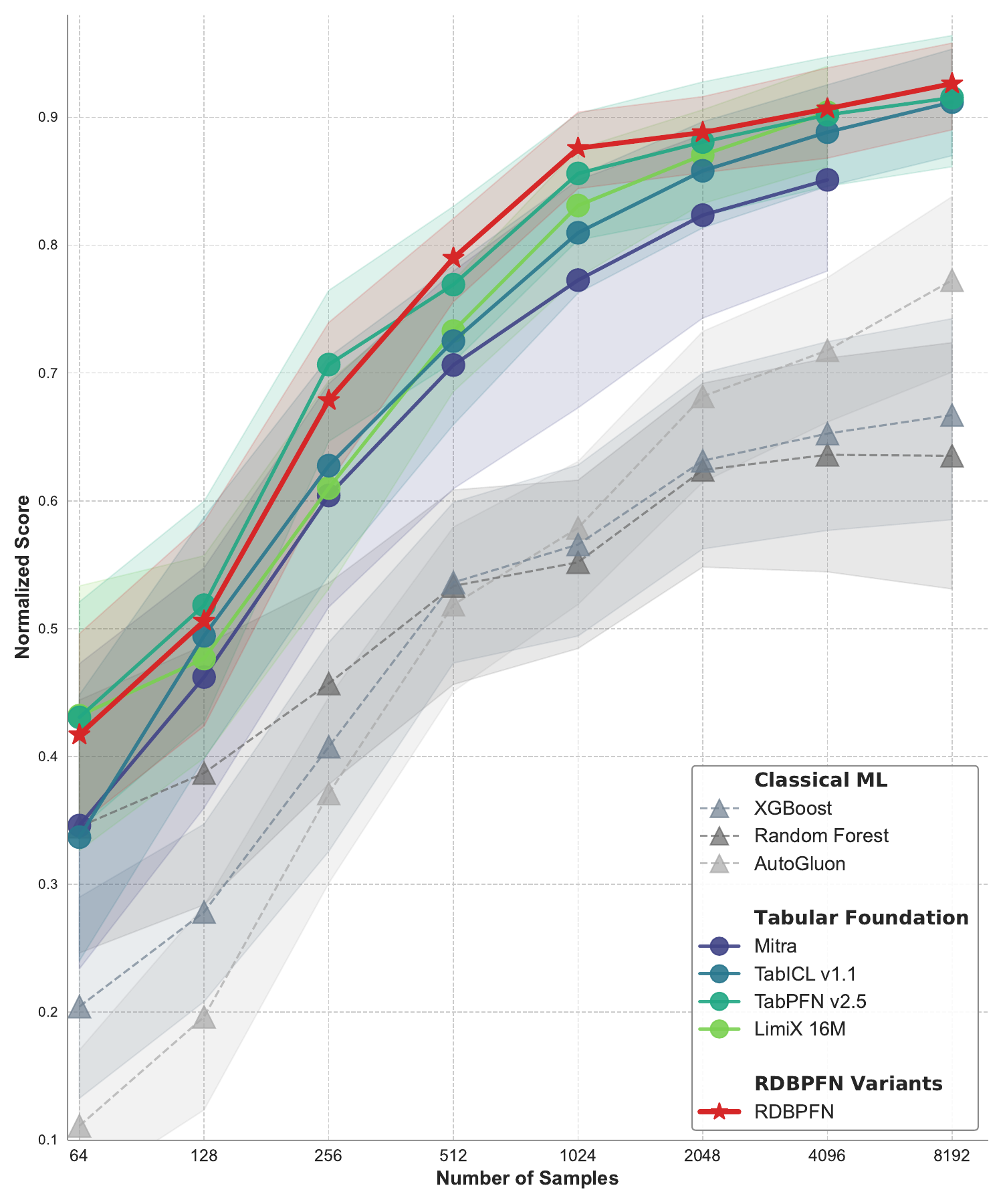}
        \vspace{-0.4cm}
        \caption{Effective context scaling}
        \label{fig:context_scaling}
    \end{subfigure}
    \vspace{-0.1cm}
    \caption{\textbf{Adaptation and larger effective support.}
    (a) Target-task fine-tuning further improves over strict ICL, while cross-task fine-tuning gives a smaller but positive gain. (b) Chunked-support ensembling increases the effective support size beyond the pre-training context budget, improving performance with an expected runtime trade-off.}
    \label{fig:adaptation_scaling}
    \vspace{-0.5cm}
\end{figure*}

\subsection{Relational Prior and Backbone Ablations}
\label{app:ablation_results}

We ablate both the synthetic relational prior and the model backbone. For efficiency, we do not go through the full pretraining pipeline. Instead, we use the full data for single-table warmup and a subset of RDB-pretrain (20k RDBs) with a similar task-enhancement method that randomly selects columns as the task.

For the relational prior, we compare the full prior against simplified variants that modify schema generation, link generation, temporal modeling, and locality assumptions. The ablated variants include replacing LayerDAG with a simpler DAG generator, replacing the attention-based link sampler with simpler edge-generation variants, removing the temporal component, and weakening the locality assumption by allowing more global or randomly selected distant dependencies. As shown in Figure~\ref{fig:ablation_results}a, the full relational prior performs best overall, suggesting that these components provide a useful inductive bias for relational prediction.

For the backbone, we compare the default interleaved row--column attention design against reduced-size variants and weaker blockwise attention variants. The model-side ablations include smaller RDB-PFN variants, a blockwise row-first / column-second variant, and a blockwise column-first / row-second variant. Figure~\ref{fig:ablation_results}b shows that reducing model size changes performance in the expected direction, while replacing interleaved attention with blockwise variants causes a larger drop on relational tasks. This suggests that the attention design is important for making DFS-linearized relational signals usable.

\begin{figure*}[ht]
    \centering
    \begin{subfigure}[b]{0.48\linewidth}
        \centering
        \includegraphics[width=\linewidth]{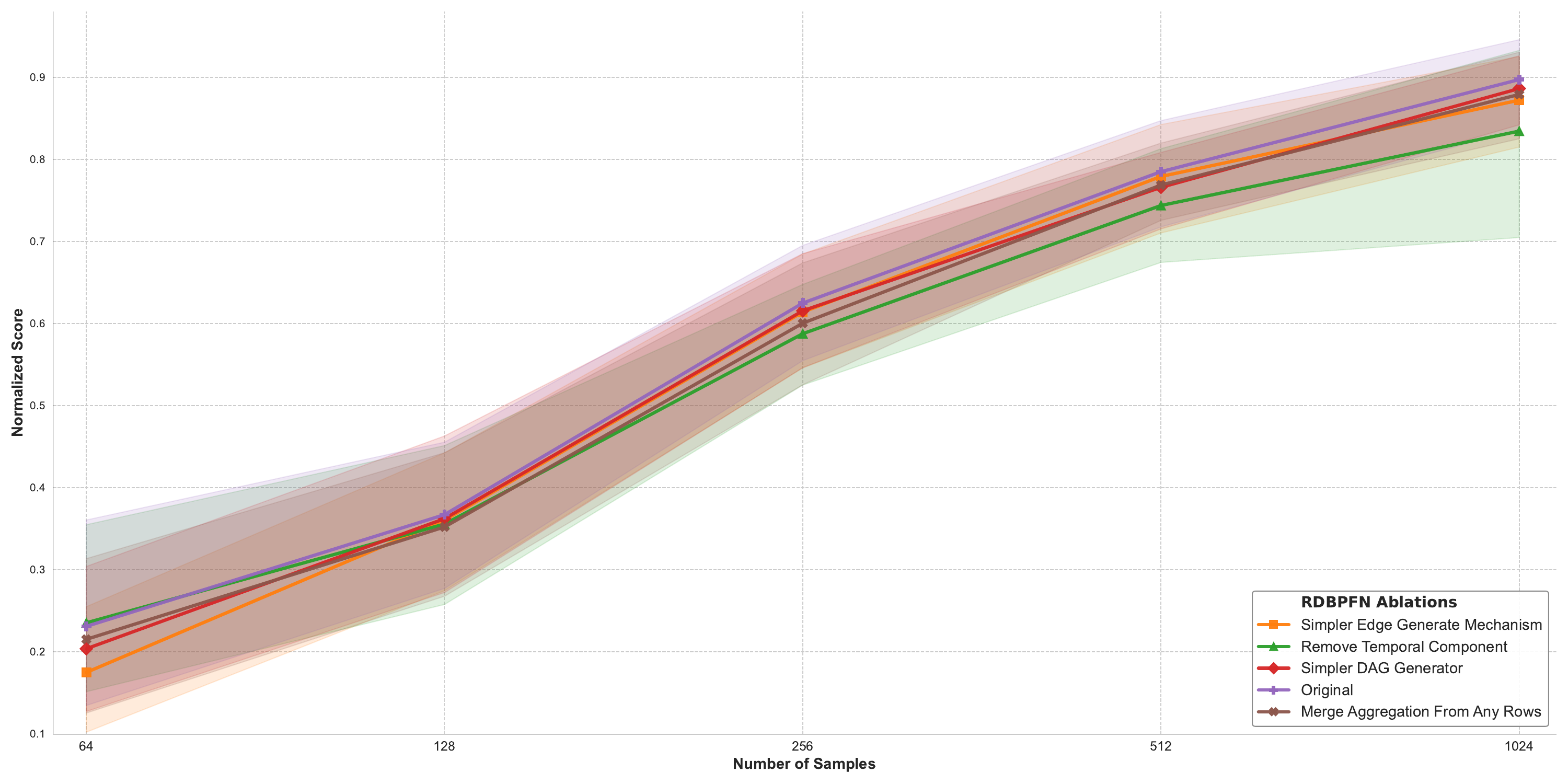}
        \vspace{-0.4cm}
        \caption{Relational prior ablations}
        \label{fig:prior_ablation}
    \end{subfigure}
    \hfill
    \begin{subfigure}[b]{0.48\linewidth}
        \centering
        \includegraphics[width=\linewidth]{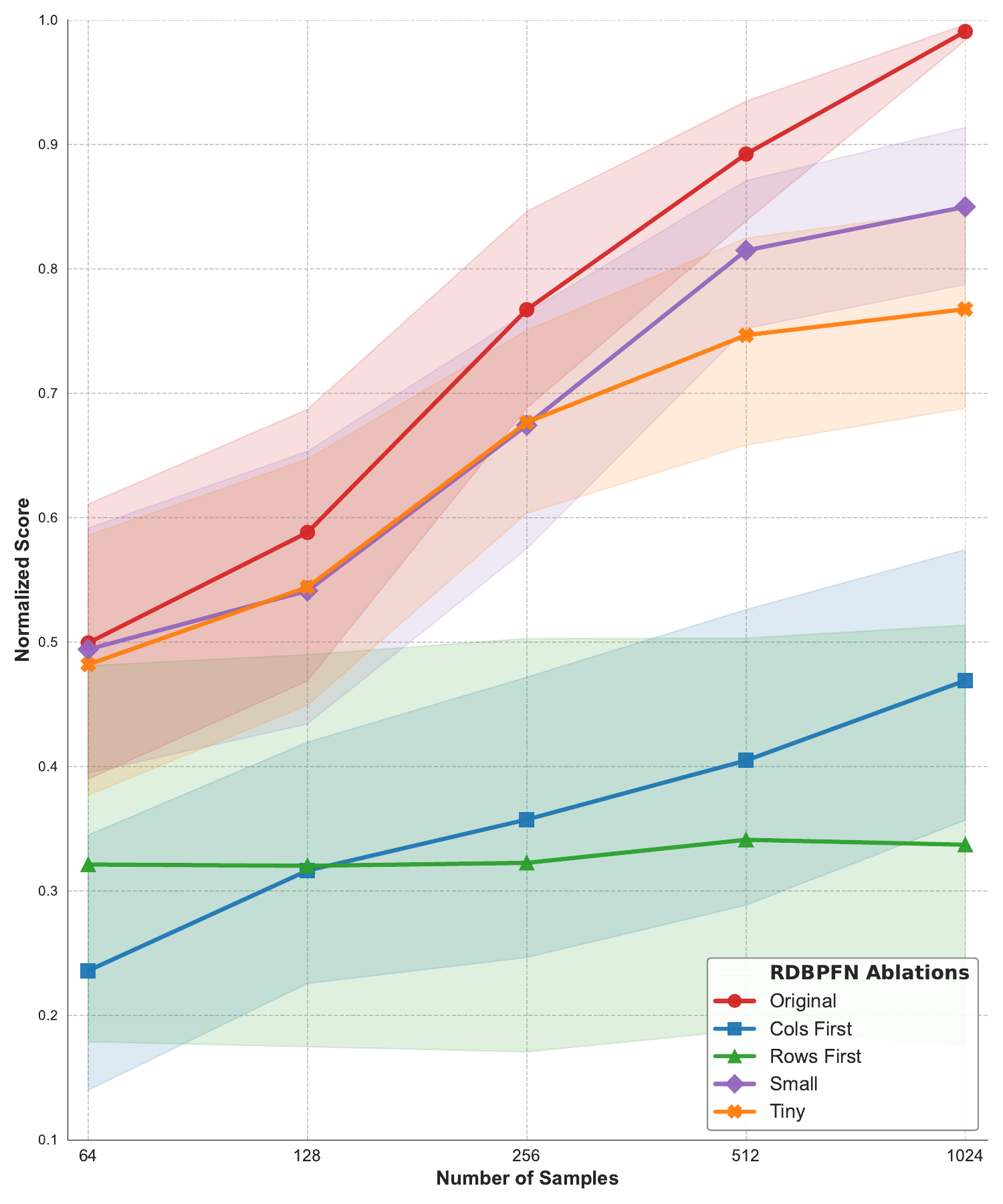}
        \vspace{-0.4cm}
        \caption{Backbone and model-size ablations}
        \label{fig:model_ablation}
    \end{subfigure}
    \vspace{-0.1cm}
    \caption{\textbf{Relational prior and backbone ablations.}
    (a) Simplifying components of the relational synthetic prior generally degrades performance. (b) The default interleaved attention design outperforms weaker blockwise variants, while reduced-size models follow the expected capacity trend.}
    \label{fig:ablation_results}
\end{figure*}

\subsection{DFS Sensitivity}
\label{app:dfs_sensitivity}

DFS is used as a practical relational-to-tabular interface for RDB-PFN and for single-table foundation-model baselines. Since DFS determines what relational information is exposed to the model, we conduct a lightweight sensitivity check over both hop depth and aggregation functions. Specifically, we compare the default 2-hop DFS configuration against a simplified 1-hop variant, and further compare the default 2-hop aggregation set with restricted 2-hop aggregation subsets. All results are evaluated on a subset of smaller relational tasks with context size 1024.

As shown in Table~\ref{tab:dfs_sensitivity}, the default 2-hop configuration improves performance on most tasks relative to 1-hop DFS, indicating that broader relational neighborhoods are often useful. The restricted aggregation variants also change performance, sometimes improving individual tasks but generally trailing the default configuration on average. This suggests that DFS is a useful but non-optimal interface: it enables shared comparison with tabular foundation models, but the choice of hop depth, aggregation functions, and feature budget can affect downstream performance.

\begin{table}[ht]
\centering
\caption{\textbf{DFS configuration sensitivity.}
We evaluate RDB-PFN at context size 1024 under different DFS configurations on small relational tasks. \textbf{DFS-1} uses 1-hop relational features, while \textbf{DFS-2} is the default 2-hop setting. \textbf{DFS-2-MaxMinMode} and \textbf{DFS-2-MeanCount} keep the 2-hop neighborhood but restrict the aggregation functions to the indicated subsets. The default 2-hop configuration performs best on most tasks, suggesting that both relational depth and aggregation choice affect performance, although the sensitivity is task-dependent.}
\label{tab:dfs_sensitivity}
\resizebox{\linewidth}{!}{
\begin{tabular}{lcccccccc}
\toprule
\textbf{DFS Config} & \textbf{Diginetica} & \textbf{Outbrain} & \textbf{Rel F1} & \textbf{Rel F1} & \textbf{Rel Hm} & \textbf{Stackexchange} & \textbf{Stackexchange} & \multirow{2}{*}{\textbf{Average}} \\
& \textbf{Ctr} & \textbf{Small Ctr} & \textbf{Driver Dnf} & \textbf{Driver Top3} & \textbf{User Churn} & \textbf{Churn} & \textbf{Upvote} & \\
\midrule
DFS-1 & 0.5225 & 0.4914 & 0.4955 & 0.5445 & 0.5294 & 0.5592 & 0.8555 & 0.5711 \\
DFS-2 & 0.7004 & 0.5352 & 0.7188 & 0.8115 & 0.6648 & 0.8477 & 0.8527 & 0.7330 \\
DFS-2-MaxMinMode & 0.6029 & 0.5363 & 0.7222 & 0.8148 & 0.6212 & 0.8327 & 0.8447 & 0.7107 \\
DFS-2-MeanCount & 0.6135 & 0.4906 & 0.7195 & 0.8205 & 0.6708 & 0.8419 & 0.8419 & 0.7141 \\
\bottomrule
\end{tabular}
}
\end{table}

\section{Proof}
\label{app:proof}

\paragraph{Proof roadmap.}
Although the generative pipeline operates chronologically as
\emph{Schema $\to$ Structure $\to$ Content} (Eq.~\eqref{eq:rdb_decomp}),
our completeness proof is organized by \emph{lemma dependency} rather than execution order.
We first establish the completeness of \textbf{Content Completion} (Stage~3), since it reduces to
approximating local conditional mechanisms that obey structured, permutation-invariant aggregation over relational neighborhoods.
This yields a foundational universality result for our \textbf{Hierarchical SCM} architecture.
We then leverage this result to prove the completeness of \textbf{Structural Generation} (Stage~2),
which reuses the same hierarchical aggregation but additionally requires an autoregressive \emph{selection} mechanism
to generate foreign-key links (i.e., to route each new child row to appropriate parent rows).
Finally, combining the universality of the schema generator (Stage~1) with the completeness of Stages~2--3
gives Theorem~\ref{thm:grand_universality}.

\paragraph{From main-text assumptions to proof assumptions.}
The main text introduces three principles: schema acyclicity (A\ref{ass:dag}), relational Markovian locality (A\ref{ass:markov}),
and conditional exchangeability (A\ref{ass:exchange}). The appendix instantiates them as follows:
\begin{itemize}[leftmargin=12pt,itemsep=2pt,topsep=2pt]
\item \textbf{Acyclicity $\Rightarrow$ explicit generation order.}
A\ref{ass:dag} implies that tables can be generated in a topological order, formalized as A\ref{ass:table_topo} for Stage~2.
\item \textbf{Locality $\Rightarrow$ bounded-hop parent sets.}
A\ref{ass:markov} becomes (i) cell-level local parent sets for content generation (A\ref{ass:2-2-3}, Stage~3) and
(ii) row-level local dependence for structural link/latent generation (A\ref{ass:struct_local}, Stage~2).
\item \textbf{Exchangeability $\Rightarrow$ shared, permutation-invariant mechanisms.}
A\ref{ass:exchange} is realized as mechanism sharing by feature type and hierarchy-respecting, permutation-invariant aggregation
(A\ref{ass:2-2-4}--A\ref{ass:2-2-5}, Stage~3) and by table/relation-level sharing for structure generation (A\ref{ass:struct_share}, Stage~2).
\end{itemize}
In addition, the appendix introduces a small number of technical conditions that are not meant as new domain claims but as
\emph{operational regularity assumptions} enabling a tractable hierarchical factorization (e.g., feature-type ordering
A\ref{ass:2-2-1} and typewise conditional independence A\ref{ass:2-2-2}; deterministic PK encoding A\ref{ass:pk_deterministic}).
These conditions specify \emph{how} the high-level principles are realized in our modular generators.

\subsection{Preliminaries}

We adopt the schema/instance notation and graph constructions from Section~\ref{sec:prelim},
in particular Definitions~\ref{def:rdb}--\ref{def:schema_instance_graph}. Note that in Section~\ref{sec:prelim} we used $v\in\mathcal{V}_{in}$ for a row-node; in the appendix we reserve
$v=(r,c)$ (Definition~\ref{def:app_cells}) for a \emph{cell instance} and use $r$ for row or row-nodes to avoid clashes.

\begin{definition}[Cell Instances and Values]\label{def:app_cells}
For the proofs, we work at the \textbf{cell-instance} level. We reserve $v=(r,c)$ to denote a cell instance,
where $r\in\mathcal{V}(T)$ is a row and $c\in\mathcal{C}(T)$ is a column of the same table.
Let
\[
\mathcal{V}\coloneqq \{(r,c): T\in\mathcal{T},\ r\in\mathcal{V}(T),\ c\in\mathcal{C}(T)\}
\]
be the set of all cell instances in the database instance.
We denote the value stored at cell instance $v=(r,c)$ by the random variable $A_v$.

We also use the canonical projection maps:
\[
\mathrm{RowOf}(v)=r,\quad \mathrm{ColumnOf}(v)=c,\quad 
\mathrm{TableOf}(v)=T \ \text{where $T$ is the unique table such that } r\in\mathcal{V}(T).
\]
\end{definition}

\begin{definition}[Structural vs.\ Dependent Columns and Cell Instances]\label{def:struct_dep_cols}
Fix a schema $G_S$ and row sets $\{\mathcal{V}(T)\}_{T\in\mathcal{T}}$.
Let $\mathcal{V}$ be the set of all cell instances $v=(r,c)$.

We define the \emph{structural column types} as key columns plus any additional connectivity-defining columns:
\[
\mathcal{C}^{struct} \coloneqq 
\Big(\bigcup_{T\in\mathcal{T}}\mathcal{K}^{pk}(T)\Big)\ \cup\ 
\Big(\bigcup_{T\in\mathcal{T}}\mathcal{K}^{fk}(T)\Big)\ \cup\
\mathcal{C}^{conn},
\]
where $\mathcal{C}^{conn}$ denotes any (optional) non-key columns whose values are used by the structural generator
to determine connectivity (and is empty if no such columns are used).
And define the corresponding structural cell instances
\[
\mathcal{V}^{struct} \coloneqq \{(r,c)\in\mathcal{V}: c\in\mathcal{C}^{struct}\}.
\]
All remaining \emph{feature} columns are \emph{dependent columns}:
\[
\mathcal{A}^{dep}\coloneqq \bigcup_{T\in\mathcal{T}}\mathcal{A}(T),
\qquad
\mathcal{V}^{dep}\coloneqq \{(r,c)\in\mathcal{V}: c\in\mathcal{A}^{dep}\}.
\]
We denote the realized values on structural cells by
$\mathcal{D}_{struct}\coloneqq \{A_v\}_{v\in\mathcal{V}^{struct}}$ and on dependent cells by
$\mathcal{D}_{dep}\coloneqq \{A_v\}_{v\in\mathcal{V}^{dep}}$.
\end{definition}

\begin{definition}[Structural Latent States]\label{def:latent_states}
In addition to key columns, Stage~2 may generate a latent state $Z_r\in\mathbb{R}^d$ for each row-node
$r\in\mathcal{V}_{in}$, representing structural characteristics used for subsequent content generation.
We denote the collection of all latent row states by $\mathcal{Z}\coloneqq \{Z_r\}_{r\in\mathcal{V}_{in}}$.
\end{definition}

\subsection{Single-Table Completeness}

We begin with the foundational case of a single table.
Let the database contain only one table $T$ with columns
$\mathcal{C}(T)=\{c_1,\dots,c_m\}$.
Let $\mathcal{V}(T)$ denote its (finite) set of rows.
For any row $r\in\mathcal{V}(T)$ and column $c\in\mathcal{C}(T)$, the corresponding cell instance is
$v=(r,c)$ and its value is denoted by $A_v$ (Definition~\ref{def:app_cells}).

To describe the data-generating process, we introduce a \emph{generic row} random vector
\[
\mathbf{A} \coloneqq (\mathrm{A}_1,\dots,\mathrm{A}_m),
\qquad \text{where } \mathrm{A}_k \text{ represents the value of column } c_k \text{ in a random row.}
\]
Thus, a realized row $r$ corresponds to one sample
$\mathbf{a}_r=(a_{r,1},\dots,a_{r,m})$ from the joint distribution $P(\mathbf{A})$. Equivalently, for each column $c_k\in\mathcal{C}(T)$ we have $A_{(r,c_k)}=a_{r,k}$, and $\mathrm{A}_k$ denotes the random value of column $c_k$ in a generic row.

\begin{assumption}[Row i.i.d.]\label{ass:app_row_iid}
Rows in table $T$ are independent and identically distributed draws from a fixed row distribution:
\[
\mathbf{a}_r \sim_{i.i.d.} P(\mathbf{A})
\qquad \text{for all } r\in\mathcal{V}(T).
\]
\end{assumption}

\paragraph{Justification}
Assumption~\ref{ass:app_row_iid} formalizes the standard single-table modeling view used throughout classical statistics and most tabular learning benchmarks: a table is treated as a multiset of records drawn from a common population distribution. This perspective is also implicit in early tabular foundation model constructions such as prior-data fitted networks (e.g., TabPFNv1, TabICL), where a \emph{dataset-level} generative mechanism is sampled once (e.g., an SCM with parameters $\theta_{\mathrm{SCM}}$), and then individual rows are generated independently by drawing fresh exogenous noise for each row. In this view, the table exhibits a \emph{global mechanism} shared across all rows within the dataset, while randomness arises from per-row noise, making rows exchangeable and (under the simplest setting) i.i.d.
We emphasize that this i.i.d.\ assumption is a \emph{baseline} used to build intuition and establish a clean completeness result. Many real-world tables deviate from i.i.d.\ sampling due to temporal ordering, distribution shift, repeated measurements, or group-level correlations, and are already attempted to be handled by later work (e.g., TabPFNv2). However, these extensions are orthogonal to the core message here: in the simplest and most widely used single-table setting, an SCM-style shared mechanism with independent per-row noise yields the i.i.d.\ formulation captured by Assumption~\ref{ass:app_row_iid}.

\begin{lemma}[Universal measurable conditional sampler + approximation transfer]\label{lem:cond_sampler}
Let $(X,Y)$ be random variables with $Y$ taking values in a standard Borel space (e.g., $\mathbb{R}^d$,
any countable set, any finite set, or a product of these). Then there exists a Borel measurable function
$f$ and an independent $U\sim\mathrm{Unif}(0,1)$ such that
\[
Y \ \overset{d}{=} \ f(X,U).
\]
Moreover, if $\hat f_n$ is any sequence of measurable functions such that
$\hat f_n(X,U)\to f(X,U)$ in probability under the joint law of $(X,U)$, then
\[
\hat f_n(X,U)\Rightarrow f(X,U).
\]
\end{lemma}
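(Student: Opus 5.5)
The plan is to prove the two parts separately. The first is the classical noise-outsourcing (randomization) lemma; the second is the standard fact that convergence in probability implies convergence in distribution. I read the statement as claiming $(X,Y)\overset{d}{=}(X,f(X,U))$ with $U$ independent of $X$, which is the form needed downstream, and I prove that stronger joint statement.

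\textbf{Existence of $f$.} First I reduce to $Y$ real-valued. By the Borel isomorphism theorem, every standard Borel space is Borel-isomorphic to a Borel subset of $[0,1]$ (countable or finite spaces embed trivially). So I fix a Borel injection $\iota$ with Borel inverse on its image, and work with $\tilde Y=\iota(Y)\in[0,1]$.

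Next, since $[0,1]$ is standard Borel, a regular conditional distribution $\kappa(x,\cdot)$ of $\tilde Y$ given $X=x$ exists. I take its conditional CDF $F(x,t)=\kappa(x,[0,t])$, which is jointly measurable: it is measurable in $x$ for each $t$ and right-continuous in $t$, so it suffices to take rational $t$ and pass to limits.

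I then define the conditional quantile $g(x,u)=\inf\{t\in\mathbb{Q}\cap[0,1]: F(x,t)\ge u\}$. The set $\{g\le s\}=\{u\le F(x,s)\}$ is jointly measurable, so $g$ is Borel. For fixed $x$ and $U\sim\mathrm{Unif}(0,1)$, the inverse-CDF method gives $g(x,U)\sim\kappa(x,\cdot)$.

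Integrating over the law of $X$, using independence of $U$ and $X$, yields $(X,g(X,U))\overset{d}{=}(X,\tilde Y)$. Finally I set $f=\iota^{-1}\circ g$. The image of $\iota$ has full $\kappa(x,\cdot)$-measure for a.e.\ $x$, and I redefine $f$ on a null set to a fixed point of the space to keep it measurable everywhere. This gives $(X,Y)\overset{d}{=}(X,f(X,U))$, in particular $Y\overset{d}{=}f(X,U)$.

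\textbf{Approximation transfer.} If $\hat f_n(X,U)\to f(X,U)$ in probability, then for any bounded Lipschitz $h$ with constant $L$ and bound $M$, I split on the event $\{d(\hat f_n,f)\le\varepsilon\}$:
\[
|\mathbb{E}h(\hat f_n)-\mathbb{E}h(f)|\le L\varepsilon+2M\,\mathbb{P}(d(\hat f_n,f)>\varepsilon).
\]
Letting $n\to\infty$ and then $\varepsilon\to0$ makes the right side vanish. By the portmanteau theorem this gives $\hat f_n(X,U)\Rightarrow f(X,U)\overset{d}{=}Y$. Here the metric is any metric compatible with the Polish topology on the value space, and the argument applies verbatim to the joint pair $(X,\hat f_n(X,U))$.

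\textbf{Main obstacle.} The only delicate point is joint measurability of $g$ (and of $f$ after the isomorphism). I would handle it through the rational-infimum representation above rather than appealing to an abstract measurable-selection theorem. Everything else is textbook, following e.g.\ Kallenberg's transfer theorem.
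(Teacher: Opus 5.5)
Your proof is correct and follows the same route as the paper's sketch. The paper cites the randomization lemma for regular conditional distributions, which you carry out explicitly through a Borel isomorphism and the conditional quantile transform; it then cites "convergence in probability implies weak convergence", which you verify directly with bounded Lipschitz test functions and the portmanteau theorem. You were right to prove the joint identity $(X,Y)\overset{d}{=}(X,f(X,U))$ rather than the literal marginal claim: the marginal version is trivial (take $f$ independent of $x$) and would not support the chain-rule composition of samplers in Theorem~\ref{thm:single_table_complete}.
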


\begin{proof}[Proof sketch]
The first statement is a standard randomization lemma for regular conditional distributions on standard Borel spaces:
sample $Y$ by applying a measurable map to $(X,U)$.
The second statement follows from the Continuous Mapping Theorem / Slutsky-style arguments on pushforward measures:
convergence in probability of the outputs implies weak convergence of the output laws.
\end{proof}

\begin{theorem}[Single-Table SCM Completeness]\label{thm:single_table_complete}
Fix any joint distribution $P(\mathrm{A}_1,\dots,\mathrm{A}_m)$ satisfying Assumption~\ref{ass:app_row_iid}, and fix any ordering $(\mathrm{A}_1,\dots,\mathrm{A}_m)$.
Assume each $\mathrm{A}_k$ takes values in a standard Borel space (so that a conditional CDF and generalized inverse are well-defined; e.g., discrete sets or subsets of $\mathbb{R}$).
Then there exist measurable functions
\[
f_k:\ \mathrm{val}(\mathrm{A}_{<k})\times[0,1]\to \mathrm{val}(\mathrm{A}_k),
\qquad k=1,\dots,m,
\]
and independent noise variables $U_k\sim \mathrm{Unif}(0,1)$ such that the SCM
\[
\mathrm{A}_k = f_k(\mathrm{A}_{<k},U_k),\qquad k=1,\dots,m,
\]
induces exactly the joint distribution $P(\mathrm{A}_1,\dots,\mathrm{A}_m)$.

Moreover, if $\hat f_k$ are chosen from universal approximator classes (for the relevant notion of approximation under the input distribution), then the SCM with $\hat f_k$ can approximate $P(\mathrm{A}_1,\dots,\mathrm{A}_m)$ to arbitrary precision in distribution.
\end{theorem}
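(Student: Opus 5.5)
The plan is to prove exactness by a chain-rule disintegration of $P(\mathrm{A}_1,\dots,\mathrm{A}_m)$ along the fixed ordering, and approximation by induction on $k$ using the transfer part of Lemma~\ref{lem:cond_sampler}. Assumption~\ref{ass:app_row_iid} reduces the table-level statement to the law of one generic row. If each row is produced by the same SCM with fresh independent noise $(U_1,\dots,U_m)$, the product of row laws matches the i.i.d.\ table law. So it suffices to realise $P(\mathbf{A})$ for a single row.

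\textbf{Exact construction.} Since each $\mathrm{val}(\mathrm{A}_k)$ is standard Borel, so is the finite product $\mathrm{val}(\mathrm{A}_{<k})$. Hence regular conditional distributions $P(\mathrm{A}_k\in\cdot\mid \mathrm{A}_{<k}=a_{<k})$ exist. For each $k$, apply the first part of Lemma~\ref{lem:cond_sampler} with $X=\mathrm{A}_{<k}$ and $Y=\mathrm{A}_k$. This gives a measurable $f_k$ such that, for $U_k\sim\mathrm{Unif}(0,1)$ independent of $X$, the law of $f_k(a_{<k},U_k)$ is the conditional law for $P(\mathrm{A}_{<k})$-a.e.\ $a_{<k}$. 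The concrete choice is the conditional generalized inverse CDF after a Borel isomorphism to a subset of $\mathbb{R}$. The \emph{a.e.} statement is the one actually needed, and it is what the standard randomization lemma provides.

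Now take $U_1,\dots,U_m$ i.i.d.\ and define the $\mathrm{A}_k$ recursively. Induct on $k$ to show that $(\tilde{\mathrm{A}}_1,\dots,\tilde{\mathrm{A}}_k)$ has law $P(\mathrm{A}_{\le k})$. Because $U_k$ is independent of $(U_1,\dots,U_{k-1})$, it is independent of $\tilde{\mathrm{A}}_{<k}$. The conditional law of $\tilde{\mathrm{A}}_k$ given $\tilde{\mathrm{A}}_{<k}$ is therefore the target kernel. The joint law then follows from the chain rule: the joint law equals the marginal of $\mathrm{A}_{<k}$ composed with the kernel, checked on rectangles via Fubini.

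\textbf{Approximation.} Here I interpret ``universal approximator'' as follows: for each $k$ there are $\hat f_k^{(n)}$ with $\hat f_k^{(n)}\to f_k$ in probability under the law of $(\mathrm{A}_{<k},U_k)$. By Lusin's theorem and density of continuous functions in $L^0$, MLPs achieve this. I then prove by induction that the approximate outputs satisfy $\hat{\mathrm{A}}^{(n)}_{\le k}\to \mathrm{A}_{\le k}$ in probability on a common probability space using the same $U$'s, which gives weak convergence by the second part of Lemma~\ref{lem:cond_sampler}. The step from $k-1$ to $k$ is the main obstacle. The composite $\hat f_k^{(n)}(\hat{\mathrm{A}}^{(n)}_{<k},U_k)$ evaluates $\hat f_k^{(n)}$ at perturbed inputs, while $f_k$ may be discontinuous, so convergence at the true inputs does not transfer automatically.

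To handle this, I will try to make the approximators continuous, or uniformly close to $f_k$ off a set of small measure. The argument then runs in three steps:
\begin{itemize}
\item Use Lusin's theorem to choose a compact set $K$ of probability at least $1-\varepsilon$ under the law of $(\mathrm{A}_{<k},U_k)$ on which $f_k$ is continuous, and take the approximators uniformly within $\varepsilon$ of $f_k$ on $K$.
\item Bound $|\hat f_k^{(n)}(\hat a,u)-f_k(a,u)|$ by the sum of $|\hat f_k^{(n)}(\hat a,u)-\hat f_k^{(n)}(a,u)|$ and $|\hat f_k^{(n)}(a,u)-f_k(a,u)|$. Control the first term by equicontinuity of the approximators on a neighborhood of $K$, which I will impose by choosing them as continuous extensions via Tietze.
\item Combine with the induction hypothesis that $\hat a\to a$ in probability.
\end{itemize}

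If that direct route fails, the fallback is to state approximation in total variation or weak convergence using kernel continuity. Either way the conclusion is the claimed arbitrary-precision approximation in distribution of $P(\mathrm{A}_1,\dots,\mathrm{A}_m)$, and by i.i.d.\ replication of the whole table.
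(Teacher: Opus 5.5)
Your exact construction is the paper's argument. It uses the chain rule along the fixed ordering, one application of the randomization half of Lemma~\ref{lem:cond_sampler} per conditional, and composition. Your induction, with the a.e.\ caveat and the independence of $U_k$ from $\tilde{\mathrm{A}}_{<k}$, simply spells out ``composing these samplers.''

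You depart from the paper in the approximation half. The paper assumes $\hat f_k(\mathrm{A}_{<k},U_k)\to f_k(\mathrm{A}_{<k},U_k)$ in probability at the \emph{true} inputs and then ``iterates'' the transfer half of the lemma over $k$. It never addresses that the approximate SCM feeds $\hat f_k$ the approximate upstream outputs instead. Your worry is justified, and the paper's hypothesis as stated is not enough. Take $\mathrm{A}_1$ uniform on $\{0,1\}$, $\mathrm{A}_2=\mathrm{A}_1$, and $\hat f_1^{(n)}=f_1+1/n$. Take continuous $\hat f_2^{(n)}$ that agree with $f_2(a,u)=a$ on $\{0,1\}$ but equal $7$ at $a\in\{1/n,1+1/n\}$. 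Every hypothesis of the paper's sketch holds, yet $\hat{\mathrm{A}}_2^{(n)}\equiv 7$. So your continuity-based argument is what actually makes the ``moreover'' claim true. It costs an analytic step the paper omits: Lusin, plus continuous approximators chosen in the right order.

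One piece of your patch should be replaced. Equicontinuity of the approximators in $n$ is generally unavailable when $f_k$ is discontinuous, and it is also not needed. For example, with $f_k(a,u)=\mathbf{1}[a>0]$ and Gaussian $\mathrm{A}_{k-1}$, Arzel\`a--Ascoli would force the in-probability limit to agree a.e.\ with a continuous function, which the step function does not.

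Instead, for each fixed $\varepsilon$, fix a single continuous $h_{k,\varepsilon}$. This can be a Tietze extension of $f_k|_{K_\varepsilon}$ into the ambient $\mathbb{R}$, or an MLP uniformly $\varepsilon$-close to it on a compact neighborhood of $K_\varepsilon$. The continuous mapping theorem gives $h_{k,\varepsilon}(\hat{\mathrm{A}}^{(n)}_{<k},U_k)\to h_{k,\varepsilon}(\mathrm{A}_{<k},U_k)$ in probability as $n\to\infty$. Meanwhile $h_{k,\varepsilon}$ differs from $f_k$ by more than $\varepsilon$ only off $K_\varepsilon$, i.e.\ with probability at most $\varepsilon$.

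You can then close the argument in either of two ways:
\begin{itemize}
\item Set $\hat f_k^{(n)}=h_{k,\varepsilon_n}$ with $\varepsilon_n\downarrow 0$ slowly enough (a diagonal choice) that the induction hypothesis ``$\hat{\mathrm{A}}^{(n)}_{\le k}\to\mathrm{A}_{\le k}$ in probability'' survives.
\item Fix the target precision and choose tolerances backwards from $k=m$ to $k=1$, using uniform continuity of each $h_{k,\varepsilon}$ on compacts.
\end{itemize}
Fixing each stage's approximator before letting upstream error vanish is exactly what rules out the counterexample above. For discrete-valued $\mathrm{A}_k$, a final rounding to the nearest admissible value is continuous at admissible values, so it preserves convergence in probability.
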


\begin{proof}[Proof sketch]
Fix an ordering $(\mathrm{A}_1,\dots,\mathrm{A}_m)$. By the chain rule,
\[
P(\mathrm{A}_1,\dots,\mathrm{A}_m)=\prod_{k=1}^m P(\mathrm{A}_k\mid \mathrm{A}_{<k}).
\]
Apply Lemma~\ref{lem:cond_sampler} to each conditional: for every $k$ there exists a measurable
$f_k$ and independent $U_k\sim\mathrm{Unif}(0,1)$ such that
\[
\mathrm{A}_k \ \overset{d}{=} \ f_k(\mathrm{A}_{<k},U_k).
\]
Composing these samplers yields an SCM that reproduces the target joint distribution.

For approximation, choose $\hat f_k$ from a universal approximator class so that
$\hat f_k(\mathrm{A}_{<k},U_k)\to f_k(\mathrm{A}_{<k},U_k)$ in probability under the induced input law.
Iterating Lemma~\ref{lem:cond_sampler} (approximation transfer) over $k=1,\dots,m$ implies that the induced
joint distribution converges to $P(\mathrm{A}_1,\dots,\mathrm{A}_m)$.
\end{proof}

\subsection{Multi-Table Completeness}

We now extend the single-table analysis to a full relational database with multiple tables. Our goal is to characterize and approximate the joint distribution of all cell values in an RDB.

In the single-table setting, the set of variables (columns) is fixed. In the relational setting, however, the set of variables and their allowed link structure depend on the schema.
Formally, the schema graph $G_S$ (Definition~\ref{def:schema_instance_graph}) specifies which tables exist and which foreign-key relations are permitted. Given $G_S$ and the row sets $\{\mathcal{V}(T)\}_{T\in\mathcal{T}}$, the set of cell instances
$\mathcal{V}=\{(r,c): r\in\mathcal{V}(T),\, c\in\mathcal{C}(T)\}$ is well-defined (Definition~\ref{def:app_cells}).
We therefore consider the conditional joint distribution over all cell values,
\[
P\!\left(\{A_v\}_{v\in\mathcal{V}} \,\middle|\, G_S\right).
\]

\paragraph{Three-stage decomposition.}
We instantiate the same \emph{Schema $\to$ Structure $\to$ Content} decomposition as in the main text (Eq.~\eqref{eq:rdb_decomp}), and adopt consistent stage numbering:

\begin{itemize}
\item \textbf{Stage 1 (Schema):} sample a schema graph $G_S$.
\item \textbf{Stage 2 (Structure):} generate the \emph{structural skeleton} of the instance, including key columns
$\mathcal{D}_{struct}=\{A_v\}_{v\in\mathcal{V}^{struct}}$ (Definition~\ref{def:struct_dep_cols})
and, optionally, latent row states $\mathcal{Z}=\{Z_r\}_{r\in\mathcal{V}_{in}}$ (Definition~\ref{def:latent_states}).
The realized structure induces (or equivalently includes) the instance graph $G_{in}$.
\item \textbf{Stage 3 (Content):} generate all remaining feature values
$\mathcal{D}_{dep}=\{A_v\}_{v\in\mathcal{V}^{dep}}$ conditioned on the realized structure (and $G_S$).
\end{itemize}

Accordingly, by the chain rule we write the database distribution as
\[
P(\mathcal{D})
=
\underbrace{P(G_S)}_{\text{Stage 1: Schema}}
\cdot
\underbrace{P(\mathcal{D}_{struct},\mathcal{Z}\mid G_S)}_{\text{Stage 2: Structure}}
\cdot
\underbrace{P(\mathcal{D}_{dep}\mid \mathcal{D}_{struct},\mathcal{Z},G_S)}_{\text{Stage 3: Content}}.
\]

\paragraph{Stage 1 (schema) as a modular prior.}
We treat $P(G_S)$ as an external schema prior (hand-designed or learned). Since our focus is on instance generation
conditioned on $G_S$, we do not further analyze Stage~1 beyond assuming it can represent distributions over finite schema DAGs.

\paragraph{What remains to prove.}
We focus on the completeness of Stage~3 (Content) and Stage~2 (Structure). Following the roadmap stated earlier, we prove Stage~3 first because it reduces to approximating local conditional mechanisms with structured aggregation over relational neighborhoods; we then reuse this result inside the Stage~2 proof, which introduces an additional selection mechanism for generating foreign-key links.

\subsubsection{Stage 3: Conditional Feature Generation (Content Completion)}\label{sec:stage3}

\paragraph{Setting.}
Fix a schema $G_S$. Suppose Stage~2 has generated the structural skeleton
$\mathcal{D}_{struct}=\{A_v\}_{v\in\mathcal{V}^{struct}}$ (Definition~\ref{def:struct_dep_cols})
and latent row states $\mathcal{Z}=\{Z_r\}_{r\in\mathcal{V}_{in}}$ (Definition~\ref{def:latent_states}).
Given $(G_S,\mathcal{D}_{struct})$, the instance graph $G_{in}$ is induced by referential integrity
(Definition~\ref{def:schema_instance_graph}). Stage~3 models the conditional distribution of all
dependent feature values:
\[
P\!\left(\{A_v\}_{v\in \mathcal{V}^{dep}} \,\middle|\, \mathcal{D}_{struct},\mathcal{Z}, G_S\right).
\]
For brevity, let
\[
C_2 \coloneqq (\mathcal{D}_{struct},\mathcal{Z}, G_S)
\quad\text{(equivalently, } C_2=(G_{in},\mathcal{Z},G_S)\text{)}.
\]

We use the dependent feature types $\mathcal{A}^{dep}$ and dependent cell instances $\mathcal{V}^{dep}$
as defined in Definition~\ref{def:struct_dep_cols}.
For each feature type $a\in\mathcal{A}^{dep}$ define its instance set
\[
\mathcal{V}^{dep}_a \coloneqq \{v\in \mathcal{V}^{dep}:\ \mathrm{ColumnOf}(v)=a\}.
\]

A naive autoregressive factorization over all $v\in\mathcal{V}^{dep}$ is generally intractable.
We therefore introduce a hierarchical factorization based on feature types, with locality defined
relative to the fixed instance graph $G_{in}$ and the structural states $\mathcal{Z}$.

\begin{assumption}[Feature-Type Ordering]\label{ass:2-2-1}
There exists a global topological ordering $\mathcal{O}=(a_1,\dots,a_M)$ of all dependent
feature types $\mathcal{A}^{dep}$.
\end{assumption}

\begin{assumption}[Typewise Conditional Independence]\label{ass:2-2-2}
Given $C_2$ and all dependent features from preceding types
$\{A_u\}_{u\in \mathcal{V}^{dep}_{<a}}$ (where $a'\prec a$ denotes precedence in the global order $\mathcal{O}$, and
$\mathcal{V}^{dep}_{<a}\coloneqq \bigcup_{a'\prec a}\mathcal{V}^{dep}_{a'}$),
the dependent feature instances of the current type $a$ are conditionally independent:
\[
P\!\left(\{A_v\}_{v\in\mathcal{V}^{dep}_a}\mid C_2,\{A_u\}_{u\in\mathcal{V}^{dep}_{<a}}\right)
=
\prod_{v\in\mathcal{V}^{dep}_a}
P\!\left(A_v\mid C_2,\{A_u\}_{u\in\mathcal{V}^{dep}_{<a}}\right).
\]
\end{assumption}

\begin{assumption}[Structured Local Dependency]\label{ass:2-2-3}
For each dependent cell instance $v\in\mathcal{V}^{dep}$, there exists a parent set
$Pa(v)\subseteq \mathcal{V}^{struct}\cup \mathcal{V}^{dep}_{<\mathrm{ColumnOf}(v)}$ such that
\[
P\!\left(A_v \mid C_2,\{A_u\}_{u\in\mathcal{V}^{dep}_{<\mathrm{ColumnOf}(v)}}\right)
=
P\!\left(A_v \mid \{A_u\}_{u\in Pa(v)},\, Z_{\mathrm{RowOf}(v)}\right).
\]
Moreover, $Pa(v)$ is relationally local: every $u\in Pa(v)$ lies either in the same row as $v$ or in a row
within $k$ hops of $\mathrm{RowOf}(v)$ in $G_{in}$ (for a fixed constant $k$). Here $\mathcal{V}^{dep}_{<\mathrm{ColumnOf}(v)}$ denotes the union of dependent cell instances whose \emph{feature types} precede $\mathrm{ColumnOf}(v)$ in the global order $\mathcal{O}$.
\end{assumption}

\begin{corollary}[Hierarchical Factorization for Stage 3]\label{cor:stage3_factor}
Under Assumptions~\ref{ass:2-2-1}--\ref{ass:2-2-3}, the Stage~3 conditional distribution factorizes as
\[
P\!\left(\{A_v\}_{v\in \mathcal{V}^{dep}} \mid C_2\right)
=
\prod_{a\in\mathcal{O}}
\ \prod_{v\in\mathcal{V}^{dep}_a}
P\!\left(A_v \mid \{A_u\}_{u\in Pa(v)},\, Z_{\mathrm{RowOf}(v)}\right).
\]
\end{corollary}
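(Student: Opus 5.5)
The factorization will come from a two-level chain rule, first over feature types and then over the instances within each type, applying the three assumptions in turn. The ordering $\mathcal{O}=(a_1,\dots,a_M)$ from Assumption~\ref{ass:2-2-1} partitions $\mathcal{V}^{dep}$ into the disjoint blocks $\mathcal{V}^{dep}_{a_1},\dots,\mathcal{V}^{dep}_{a_M}$. Applying the chain rule for (regular) conditional distributions blockwise, conditioned throughout on $C_2$, gives
\[
P\!\left(\{A_v\}_{v\in\mathcal{V}^{dep}}\mid C_2\right)=\prod_{j=1}^{M} P\!\left(\{A_v\}_{v\in\mathcal{V}^{dep}_{a_j}}\mid C_2,\{A_u\}_{u\in\mathcal{V}^{dep}_{<a_j}}\right),
\]
where $\mathcal{V}^{dep}_{<a_j}=\bigcup_{i<j}\mathcal{V}^{dep}_{a_i}$ is exactly the union of the preceding blocks. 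This step needs only the finiteness of $\mathcal{V}^{dep}$, which holds once the schema and row sets are fixed, together with the standard Borel value spaces, so that conditional distributions exist.

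Within each block, Assumption~\ref{ass:2-2-2} splits the block conditional into the product $\prod_{v\in\mathcal{V}^{dep}_a}P(A_v\mid C_2,\{A_u\}_{u\in\mathcal{V}^{dep}_{<a}})$. Each factor has $\mathrm{ColumnOf}(v)=a$, so its conditioning set is $\mathcal{V}^{dep}_{<\mathrm{ColumnOf}(v)}$, and Assumption~\ref{ass:2-2-3} then reduces it to $P(A_v\mid\{A_u\}_{u\in Pa(v)},Z_{\mathrm{RowOf}(v)})$. Substituting back yields the stated double product.

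The only point needing care is that Assumption~\ref{ass:2-2-3} conditions on $C_2$ together with the preceding-type dependent cells, while $Pa(v)$ may include structural cells in $\mathcal{V}^{struct}$. These structural cells are measurable with respect to $C_2$, since $\mathcal{D}_{struct}\subseteq C_2$, so the reduced conditional is a legitimate function of variables already conditioned on and no conditioning information is lost. I would also note that all equalities hold almost surely with respect to the law of $C_2$ and the preceding cells, which is the natural reading of the statement. I expect no substantive obstacle beyond this bookkeeping.
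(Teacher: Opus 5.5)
Your proposal is correct and follows the paper's proof: the chain rule over the feature-type order of Assumption~\ref{ass:2-2-1}, then the within-type product from Assumption~\ref{ass:2-2-2}, then the reduction of each factor to $P(A_v\mid\{A_u\}_{u\in Pa(v)},Z_{\mathrm{RowOf}(v)})$ via Assumption~\ref{ass:2-2-3}. Your extra remarks are sound bookkeeping that the paper leaves implicit: the finiteness of $\mathcal{V}^{dep}$, equalities holding almost surely, and structural parents in $Pa(v)$ being determined by $C_2$.
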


\begin{proof}
By the chain rule applied over the feature-type order $\mathcal{O}$ (Assumption~\ref{ass:2-2-1}),
\[
P\!\left(\{A_v\}_{v\in \mathcal{V}^{dep}} \mid C_2\right)
=
\prod_{a\in\mathcal{O}}
P\!\left(\{A_v\}_{v\in \mathcal{V}^{dep}_a} \,\middle|\, C_2,\{A_u\}_{u\in\mathcal{V}^{dep}_{<a}}\right).
\]
By typewise conditional independence (Assumption~\ref{ass:2-2-2}),
\[
P\!\left(\{A_v\}_{v\in \mathcal{V}^{dep}_a} \,\middle|\, C_2,\{A_u\}_{u\in\mathcal{V}^{dep}_{<a}}\right)
=
\prod_{v\in\mathcal{V}^{dep}_a}
P\!\left(A_v \,\middle|\, C_2,\{A_u\}_{u\in\mathcal{V}^{dep}_{<a}}\right).
\]
Finally, by structured local dependency (Assumption~\ref{ass:2-2-3}), for each $v\in\mathcal{V}^{dep}_a$ we have
\[
P\!\left(A_v \,\middle|\, C_2,\{A_u\}_{u\in\mathcal{V}^{dep}_{<a}}\right)
=
P\!\left(A_v \,\middle|\, \{A_u\}_{u\in Pa(v)},\, Z_{\mathrm{RowOf}(v)}\right),
\]
which yields the stated factorization.
\end{proof}

\begin{assumption}[Mechanism Sharing by Type]\label{ass:2-2-4}
All dependent feature instances of the same type share an identical conditional mechanism.
That is, for each $a\in\mathcal{A}^{dep}$, there exists a conditional distribution $K_a$ such that
\[
P\!\left(A_v \mid \{A_u\}_{u\in Pa(v)},\, Z_{\mathrm{RowOf}(v)}\right)
=
K_a\!\left(A_v \mid \{A_u\}_{u\in Pa(v)},\, Z_{\mathrm{RowOf}(v)}\right)
\qquad \forall\, v\in \mathcal{V}^{dep}_a .
\]
\end{assumption}

\begin{assumption}[Hierarchical Parent Processing]\label{ass:2-2-5}
The mechanism $K_a$ must process the parent set $Pa(v)$ in a way that respects the RDB hierarchy.
Let $R_v\coloneqq \{\mathrm{RowOf}(u): u\in Pa(v)\}$ be the set of unique parent rows and
$T_v\coloneqq \{\mathrm{TableOf}(u): u\in Pa(v)\}$ the set of unique parent tables.

\begin{itemize}
\item \textbf{(a) Intra-row coherence (ordered).}
For each parent row $r\in R_v$, the parent cells from that row,
$\{u\in Pa(v): \mathrm{RowOf}(u)=r\}$, must be processed in a fixed canonical order.
The row latent $Z_r$ may be injected as an additional ``row token'' in this ordered processing.

\item \textbf{(b) Inter-row invariance (unordered).}
Within each parent table $T\in T_v$, the mechanism must be permutation-invariant to the order of
row-level representations coming from rows $r\in \mathcal{V}(T)\cap R_v$.

\item \textbf{(c) Inter-table coherence (ordered).}
The mechanism must process the table-level representations for tables in $T_v$ in a fixed canonical order
(e.g., a fixed schema order or a topological order induced by $G_S$).
\end{itemize}
\end{assumption}

\begin{corollary}[Hierarchical SCM Architecture with Latents]\label{cor:stage3_scm}
Under Assumptions~\ref{ass:2-2-4} and~\ref{ass:2-2-5}, each mechanism $K_a$ can be realized by an SCM
$f_a$ that computes $A_v$ via bottom-up hierarchical aggregation over $Pa(v)$, with row-latent injection:
\begin{align}
A_v &= f_a(Pa(v),Z_{\mathrm{RowOf}(v)},U_v)= f_a'(z_v,U_v), \label{eq:stage3_final_val}\\
z_r = h^{\mathrm{row}}\!\left(\mathrm{Concat}\bigl(Z_r,\ [A_u:\ u\in Pa(v),\ \mathrm{RowOf}(u)=r]\bigr)\right), \label{eq:stage3_row_agg}\\
z_T &= \mathrm{Agg}^{\mathrm{table}}\!\left(\{z_r:\ r\in R_v,\ r\in \mathcal{V}(T)\}\right), \label{eq:stage3_table_agg}\\
z_v &= h^{\mathrm{global}}\!\left(\mathrm{Concat}\bigl(\{z_T:\ T\in T_v\},\ Z_{\mathrm{RowOf}(v)}\bigr)\right). \label{eq:stage3_global}
\end{align}
Here $h^{\mathrm{row}}$ and $h^{\mathrm{global}}$ are order-sensitive functions (e.g., MLPs over concatenations),
and $\mathrm{Agg}^{\mathrm{table}}$ is permutation-invariant (e.g., Deep Sets).
\end{corollary}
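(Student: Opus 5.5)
The plan is to split the mechanism $K_a$ into two pieces: a deterministic hierarchical encoder of the parent configuration, followed by a randomized sampler acting on that encoding. I would begin by applying Lemma~\ref{lem:cond_sampler} to the conditional $K_a(A_v\mid \{A_u\}_{u\in Pa(v)}, Z_{\mathrm{RowOf}(v)})$. This gives a measurable $g_a$ and an independent $U_v\sim\mathrm{Unif}(0,1)$ with $A_v\overset{d}{=} g_a(\Pi_v, U_v)$. Here $\Pi_v$ is the parent configuration, meaning the structured object that records each parent row's ordered cells together with its latent, the multiset of such rows per parent table, and the ordered list of tables, plus $Z_{\mathrm{RowOf}(v)}$. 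By Assumption~\ref{ass:2-2-4} the same $g_a$ serves every $v\in\mathcal{V}^{dep}_a$. Assumption~\ref{ass:2-2-5} says $K_a$ depends on the parents only through this object. Concretely, $K_a$ is a function of the ordered row tuples, invariant to row permutations within each table, and ordered across tables. Consequently $g_a$ may be chosen as a function of $\Pi_v$ with exactly these symmetries. If necessary I would symmetrize $g_a$ by pushing it forward through the quotient map that forgets the row order.

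Next I would show that $\Pi_v$ can be encoded without loss of information by the bottom-up pipeline \eqref{eq:stage3_row_agg}--\eqref{eq:stage3_global}. At the row level, the concatenation $\mathrm{Concat}(Z_r,[A_u])$ with a fixed canonical order is already an injective encoding of the ordered row data, so $h^{\mathrm{row}}$ can be taken to be an injective measurable map into some $\mathbb{R}^{d'}$. This is possible because the values lie in standard Borel spaces, which are Borel-isomorphic to subsets of $\mathbb{R}$. At the table level, I would invoke the Deep Sets representation theorem to obtain an injective permutation-invariant $\mathrm{Agg}^{\mathrm{table}}(\{z_r\})=\rho(\sum_r\phi(z_r))$ on multisets. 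This applies to finite or countable domains and to bounded-size multisets over $\mathbb{R}^{d'}$, using a sum of power-sum style features. Bounded multiset size holds because of the $k$-hop locality in Assumption~\ref{ass:2-2-3} combined with finite instances. At the global level, concatenation across tables in the canonical order of Assumption~\ref{ass:2-2-5}(c) is injective, so $h^{\mathrm{global}}$ can again be an injective map. Composing these steps gives $z_v=E(\Pi_v)$ for an injective measurable $E$. Its inverse on the image is measurable, because injective Borel maps between standard Borel spaces have Borel images and Borel inverses (Lusin--Souslin). Setting $f_a'(z,U)=g_a(E^{-1}(z),U)$ then gives \eqref{eq:stage3_final_val} with the exact law $K_a$.

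The main obstacle is the table-level step: I need injectivity of a permutation-invariant aggregator over variable-size multisets of real vectors, with measurability preserved. I would first try the simplest route, which is to embed each multiset as the vector of sorted coordinates or power sums up to the maximum size allowed by bounded-hop locality, padded with the count. This map is continuous, injective on multisets of bounded cardinality, and symmetric, so it avoids the known subtleties of continuous Deep Sets on uncountable domains. Finally, as in the second part of Theorem~\ref{thm:single_table_complete}, I would note that replacing $h^{\mathrm{row}},\phi,\rho,h^{\mathrm{global}},f_a'$ by universal approximators (MLPs) gives convergence in probability of the outputs. The approximation-transfer part of Lemma~\ref{lem:cond_sampler} then turns this into weak convergence of the law of $A_v$, and iterating over the order $\mathcal{O}$ using Corollary~\ref{cor:stage3_factor} extends the result to the full Stage~3 joint distribution.
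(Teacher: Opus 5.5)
Your proof is correct in substance, but it argues a different and stronger direction than the paper does.

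\textbf{How the paper argues.} The paper's proof is a one-line constructive check. The row equation processes each parent row's cells in canonical order with $Z_r$ injected (clause (a) of Assumption~\ref{ass:2-2-5}). $\mathrm{Agg}^{\mathrm{table}}$ is permutation-invariant over rows of a table (clause (b)). $h^{\mathrm{global}}$ reads table summaries in canonical order and conditions on $Z_{\mathrm{RowOf}(v)}$ (clause (c)). This only shows that the architecture is hierarchy-respecting. The claim that \emph{every} admissible $K_a$ factors through it is left to the appeal to universal approximator classes in Step~3 of Theorem~\ref{thm:stage3}.

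\textbf{How you argue.} You prove realizability directly:
\begin{enumerate}
\item an exact conditional sampler $g_a$ from Lemma~\ref{lem:cond_sampler};
\item invariance of $g_a$ under within-table row permutations, which is automatic if $g_a$ is built from the kernel $K_a$ by a quantile transform;
\item an injective encoder $E$ on the quotient (multiset) space;
\item Lusin--Souslin to obtain a Borel inverse;
\item $f_a'=g_a\circ E^{-1}$.
\end{enumerate}
Your route gives an exact representation. It also makes explicit that clause (b) is what allows a lossless invariant aggregator to exist. The paper's version gives brevity and a direct correspondence between equations and assumption clauses.

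\textbf{Details that need tightening.}
\begin{itemize}
\item For $d'>1$, coordinatewise sorting and coordinatewise power sums are \emph{not} injective on multisets: $\{(0,0),(1,1)\}$ and $\{(0,1),(1,0)\}$ agree under both. Use one of the following instead. Multisymmetric power sums $\sum_r z_r^{\alpha}$ over multi-indices $1\le|\alpha|\le N$, together with the count, work. Lexicographic sorting also works; it is Borel but not continuous. Alternatively, take $d'=1$ via the Borel isomorphism you already invoke. For exact realizability only measurability matters.
\item If $T_v$ varies with $v$, give every schema table a fixed slot, with an empty multiset encoded by count $0$. If different rows of one table contribute different column subsets, tag cells by column. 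Both keep the concatenations injective.
\item The bound on multiset sizes comes from finiteness of the fixed instance, not from $k$-hop locality, which does not bound degrees.
\item Your last paragraph goes beyond the corollary. There, approximating $E$ and $f_a'$ separately does not compose automatically, because $f_a'=g_a\circ E^{-1}$ may be discontinuous. First replace $f_a'$ by a continuous approximant (Lusin). Then use its uniform continuity on a high-probability compact set to control the error from approximating the inner maps.
\end{itemize}
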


\begin{proof}
This is constructive. Eq.~\eqref{eq:stage3_row_agg} enforces ordered intra-row processing (A\ref{ass:2-2-5}a) and
allows injecting the row latent $Z_r$ as additional row-level context.
Eq.~\eqref{eq:stage3_table_agg} aggregates row representations within each table in a permutation-invariant manner
(A\ref{ass:2-2-5}b). Eq.~\eqref{eq:stage3_global} processes table representations in canonical order (A\ref{ass:2-2-5}c)
and can additionally condition on the target row latent $Z_{\mathrm{RowOf}(v)}$.
\end{proof}

\begin{theorem}[Completeness of Stage 3 (Content Completion)]\label{thm:stage3}
Let $P(\{A_v\}_{v\in \mathcal{V}^{dep}} \mid C_2)$ satisfy Assumptions~\ref{ass:2-2-1}--\ref{ass:2-2-5}.
Then there exist hierarchical SCMs $\{\hat f_a\}_{a\in\mathcal{A}^{dep}}$ of the form in Corollary~\ref{cor:stage3_scm}
and independent noises $U_v\sim\mathrm{Unif}(0,1)$ such that
\[
A_v=\hat f_{\mathrm{ColumnOf}(v)}(Pa(v),Z_{\mathrm{RowOf}(v)},U_v),\qquad v\in\mathcal{V}^{dep},
\]
approximates $P(\{A_v\}_{v\in \mathcal{V}^{dep}} \mid C_2)$ arbitrarily well in distribution.
\end{theorem}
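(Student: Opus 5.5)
The proof follows the single-table argument of Theorem~\ref{thm:single_table_complete}, lifted to the relational setting through Corollaries~\ref{cor:stage3_factor} and~\ref{cor:stage3_scm}. The plan has three parts: an exact representation, an approximation step, and propagation of the errors through the generation order.

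\textbf{Step 1: exact representation.} By Corollary~\ref{cor:stage3_factor}, the target conditional law factorizes over the type order $\mathcal{O}$ into per-cell kernels $P(A_v\mid\{A_u\}_{u\in Pa(v)},Z_{\mathrm{RowOf}(v)})$. By Assumption~\ref{ass:2-2-4}, each such kernel equals a shared $K_a$ for $a=\mathrm{ColumnOf}(v)$.

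For each type $a$, I apply Lemma~\ref{lem:cond_sampler} to $K_a$. This gives a measurable $f_a$ with $A_v\overset{d}{=} f_a(Pa(v),Z_{\mathrm{RowOf}(v)},U_v)$, where the $U_v$ are i.i.d.\ uniform. I draw fresh independent noise $U_v$ for every cell; this reproduces the conditional independence of Assumption~\ref{ass:2-2-2}.

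Next, I argue that $f_a$ can be taken to have the hierarchical form of Corollary~\ref{cor:stage3_scm}. By Assumption~\ref{ass:2-2-5}, $K_a$ depends on $Pa(v)$ only through the following structured object: an ordered tuple over tables, where each entry is a multiset of ordered row vectors, with each row vector augmented by $Z_r$. A measurable sampler of such an object factors as $f_a'(z_v,U_v)$ with $z_v$ given by Eqs.~\eqref{eq:stage3_row_agg}--\eqref{eq:stage3_global}. The row and global maps are ordered, so this is immediate for them. For the multiset level, I invoke the Deep Sets representation theorem: every permutation-invariant function on multisets of bounded cardinality can be written as $\rho(\sum_i\phi(x_i))$. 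Local parent sets have bounded size, which I will assume, or else handle by truncating to large multisets with vanishing probability. Sampling cells type by type along $\mathcal{O}$ with these $f_a$ then reproduces the target law exactly.

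\textbf{Step 2: approximation.} I replace $h^{\mathrm{row}}$, $\phi$, $\rho$, $h^{\mathrm{global}}$ and $f_a'$ by MLPs, using universal approximation in $L^1$, or in probability, under the relevant input law. Lusin's theorem handles the fact that the exact maps are merely measurable: on a compact set of probability at least $1-\epsilon$ they are continuous, and there they are uniformly approximable. This yields $\hat f_a(\cdot)\to f_a(\cdot)$ in probability for each type.

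\textbf{Step 3: propagating errors (the main obstacle).} The inputs to $\hat f_a$ are themselves previously generated, approximate values, not exact ones. I handle this by induction along $\mathcal{O}$. I couple the exact and approximate systems through the same noises $U_v$ and show that, for each type in turn, all generated values converge in probability jointly.

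The inductive step needs $\hat f_a(\hat X,U)\to f_a(X,U)$ whenever $\hat X\to X$ in probability. Since $f_a$ is only measurable, I plan to use the Lusin compact set again: on it $f_a$ is uniformly continuous, and I combine this with the estimate $|\hat f_a(\hat X,U)-f_a(\hat X,U)|\to0$. That estimate requires the laws of the $\hat X$ to remain tight and close to the law of $X$. The number of cells is finite given $C_2$, and the number of types is finite, so the total error is a finite sum of terms, each made small in turn.

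Joint convergence in probability gives convergence in distribution by the second part of Lemma~\ref{lem:cond_sampler}, which completes the argument. A technical subtlety is discrete-valued features, where $f_a'$ is a step function. I would handle these by noting that the discontinuity set has measure zero in $U$, or by using the generalized inverse CDF, which is continuous almost everywhere.
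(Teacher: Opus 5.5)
Your proposal follows the same route as the paper's proof. You reduce via Corollary~\ref{cor:stage3_factor} and Assumption~\ref{ass:2-2-4} to one shared kernel per type, realize each kernel by a measurable sampler via Lemma~\ref{lem:cond_sampler}, argue that the hierarchical architecture of Corollary~\ref{cor:stage3_scm} is rich enough (Deep Sets for the within-table aggregation), and then assemble the factors. Where you go beyond the paper is your Step~3. The paper's Step~4 simply asserts that approximating each factor, together with independence of the $U_v$, gives joint convergence, and its Step~2 controls $\hat f_a$ only under the \emph{exact} input law. You correctly notice that each $\hat f_a$ is fed approximately generated parents, and you set up an induction along $\mathcal{O}$ with coupled noise. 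That is the right way to make the paper's one-line step precise.

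One estimate in your Step~3 does not go through as written. You compare $\hat f_a(\hat X,U)$ with $f_a(X,U)$ via the intermediate value $f_a(\hat X,U)$. But $f_a$ is only measurable and is controlled only under the law $\mu$ of $(X,U)$. The perturbed point $(\hat X,U)$ need not lie in the Lusin compact $K$, so uniform continuity of $f_a|_K$ says nothing there. Likewise, $\hat f_a\to f_a$ in $\mu$-probability does not transfer to the law of $(\hat X,U)$ when that law is only weakly close to $\mu$. A concrete case is discrete parent features, where $X$ is supported on finitely many atoms and $f_a$ is unconstrained off them.

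The fix is never to evaluate $f_a$ off $K$. Extend $f_a|_K$ to a continuous $\tilde f_a$ (Tietze). Choose $\hat f_a$ to approximate $\tilde f_a$ uniformly on a compact neighborhood $K'$ of $K$; this $K'$ contains $(\hat X,U)$ with high probability once $\hat X$ is close to $X$. Then split
\[
\hat f_a(\hat X,U)-f_a(X,U)=\bigl[\hat f_a-\tilde f_a\bigr](\hat X,U)+\bigl[\tilde f_a(\hat X,U)-\tilde f_a(X,U)\bigr]+\bigl[\tilde f_a-f_a\bigr](X,U).
\]
The three terms are small by uniform approximation on $K'$, by uniform continuity on $K'$, and by vanishing on $K$, respectively.

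To keep $\hat f_a$ hierarchy-respecting, do the Lusin--Tietze step on the encoded space, after a continuous injective row/table/global encoding. For the multisets, multisymmetric power sums with a count coordinate suffice, so that $\tilde f_a$ is automatically invariant. The same device also repairs your Step~2. There, separately approximating $\phi$, $h^{\mathrm{row}}$ and $h^{\mathrm{global}}$ feeds perturbations into the merely measurable $\rho$ and $f_a'$. It also makes your separate treatment of discrete-valued features unnecessary.
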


\begin{proof}
\textbf{Step 1 (Reduction to local mechanisms).}
By Corollary~\ref{cor:stage3_factor}, it suffices to approximate each local conditional kernel
\[
K_a\!\left(A_v\mid \{A_u\}_{u\in Pa(v)}, Z_{\mathrm{RowOf}(v)}\right)
\qquad (v\in\mathcal{V}^{dep}_a).
\]

\textbf{Step 2 (Sampling representation + approximation transfer).}
Fix a type $a$ and a cell instance $v\in\mathcal{V}^{dep}_a$. Let
\[
X_v \coloneqq \Bigl(\{A_u\}_{u\in Pa(v)},\, Z_{\mathrm{RowOf}(v)}\Bigr),\qquad
Y_v \coloneqq A_v .
\]
By Lemma~\ref{lem:cond_sampler}, there exists a measurable sampler $f_a$ and an independent $U_v\sim\mathrm{Unif}(0,1)$ such that
\[
A_v \ \overset{d}{=} \ f_a(X_v,U_v).
\]
Moreover, if $\hat f_a(X_v,U_v)\to f_a(X_v,U_v)$ in probability under the induced input law of $X_v$, then
$\hat f_a(X_v,U_v)\Rightarrow f_a(X_v,U_v)$.

\textbf{Step 3 (Existence of $\hat f_a$ within the hierarchical SCM class).}
Assumptions~\ref{ass:2-2-4}--\ref{ass:2-2-5} restrict $f_a$ to be \emph{hierarchy-respecting} in its dependence on $Pa(v)$:
ordered within each parent row, permutation-invariant over parent rows within a table, and ordered across tables.
The architecture in Corollary~\ref{cor:stage3_scm} is universal for this function class because:
(i) $h^{\mathrm{row}}$ and $h^{\mathrm{global}}$ are chosen from universal approximator classes for order-sensitive maps, and
(ii) $\mathrm{Agg}^{\mathrm{table}}$ is chosen from a universal permutation-invariant set-function class.
Therefore, there exists a sequence $\hat f_a$ in this architecture such that
$\hat f_a(X_v,U_v)\to f_a(X_v,U_v)$ in probability under the induced input law.

\textbf{Step 4 (Joint convergence of Stage 3).}
Approximating each factor in the product of Corollary~\ref{cor:stage3_factor} and using independence of $\{U_v\}$
yields convergence of the induced joint Stage~3 conditional distribution.
\end{proof}

\paragraph{Connection to the practical GNN decoder.}
The Stage~3 proof is stated in terms of a hierarchy-respecting conditional mechanism $K_a$.
In the main model, we implement this mechanism by a bidirectional relational GNN over $G_{in}$ with a shared decoder.
This choice is compatible with the proof: message-passing layers compute permutation-equivariant summaries of $k$-hop neighborhoods,
and the decoder realizes the final conditional sampling step.

\begin{proposition}[GNN as an instantiation of Stage~3 mechanisms]\label{prop:gnn_bridge}
For any fixed hop radius $k$, a $k$-layer relational message passing network with permutation-invariant aggregation
can \emph{approximate} the bounded-hop, permutation-equivariant neighborhood summaries required by Corollary~\ref{cor:stage3_scm},
under standard expressivity conditions on the per-relation MLPs and an injective set aggregator.
\end{proposition}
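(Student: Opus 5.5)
The plan is to realize each ingredient of Corollary~\ref{cor:stage3_scm} ($h^{\mathrm{row}}$, $\mathrm{Agg}^{\mathrm{table}}$, $h^{\mathrm{global}}$) inside a $k$-layer relational message passing network. We then argue that the resulting row embedding $h_r^{(k)}$ determines, up to arbitrarily small error, the hierarchy-respecting summary $z_v$. The decoder $\mathrm{MLP}_{dec}$ then plays the role of $f_a'$. We work on compact input domains: we condition on bounded neighbourhood sizes and truncate tails with probability at most $\epsilon$. This compactness is what makes the universal-approximation statements uniform.

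\textbf{Step 1 (unrolling).} Fix a target row $r$. Since $Pa(v)$ lies within $k$ hops (Assumption~\ref{ass:2-2-3}), $z_v$ is a function of the rooted $k$-hop neighbourhood of $r$ in $G_{in}$, decorated with the latents $Z$ and the parent attribute values. Because edges are typed by FK relation, the set of parent cells reached through relation $\tau$ is exactly $\mathcal{N}_\tau(\cdot)$ iterated along a relation path. Consequently, a table $T\in T_v$ corresponds to a fixed relation path of length at most $k$.

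\textbf{Step 2 (injective layers).} We show by induction on $\ell$ that, with sum aggregation per relation type and suitable per-relation MLPs, $h_r^{(\ell)}$ can be made an injective encoding of the typed $\ell$-hop unfolding tree at $r$.

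Within one relation, sum-decomposition gives injectivity on multisets. This follows from Deep Sets/GIN-style results on countable or bounded-cardinality compact domains, and is the ``injective set aggregator'' hypothesis of the statement. It yields the unordered inter-row invariance of Assumption~\ref{ass:2-2-5}(b).

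Across relations, the concatenation $\bigoplus_{\tau\in\mathcal{R}(v)}$ in the update is taken in a fixed order of relation types. This supplies the ordered inter-table processing of Assumption~\ref{ass:2-2-5}(c).

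Within a row, the initial state $h^{(0)}_r=z_r$ is combined with the row's own attribute values, concatenated in canonical column order. This supplies the ordered intra-row processing of Assumption~\ref{ass:2-2-5}(a).

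\textbf{Step 3 (read-out).} Each $z_v$ is a continuous function of the unfolding tree that is invariant to within-relation permutations. Given injectivity and compactness, it is therefore a continuous function $g$ of $h^{(k)}_r$, defined on the compact image of the encoder. An MLP approximates $g$ uniformly, and the approximation error propagates through $f_a'$ in probability by Lemma~\ref{lem:cond_sampler}.

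\textbf{Main obstacle.} The hard part is Step 2: keeping continuity and injectivity simultaneously with approximate, not exact, MLPs. Exact injectivity on uncountable domains via sums is delicate, because the inverse map need not be continuous. I would sidestep this by encoding each multiset with a fixed polynomial moment feature map, which is continuous and injective for sets of bounded size. Each network MLP is then chosen to approximate this explicit encoder within $\delta$. A uniform-continuity argument over the compact domain shows that the $\delta$-errors compound only finitely across the $k$ layers.
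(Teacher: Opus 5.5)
\paragraph{Where the proof breaks.} The step that fails is the claim in your Step~3, prepared by Step~1, that $z_v$ is a (continuous) function of the depth-$k$ unfolding tree at $r$. In Corollary~\ref{cor:stage3_scm}, $R_v$ is the set of \emph{unique} parent rows, and $\mathrm{Agg}^{\mathrm{table}}$ pools each row of $T$ in $R_v$ exactly once, however many walks reach it. A message passing network, however injective its layers, sees walks rather than rows. A row reached by two different walks of length at most $k$ occurs twice in the unfolding tree; this happens whenever the $k$-hop ball contains a cycle. The tree cannot distinguish a row visited twice from two distinct rows with identical decorations.

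Concretely, let a User row $r$ have two orders $o_1,o_2$ that both reference the same Product row $p$. Compare this with the case where they reference distinct products $p\neq p'$ with identical latents and attributes. For $k=2$ the two unfolding trees are isomorphic, so $h^{(2)}_r$ is the same in both cases for every choice of MLPs. Yet $R_v\cap\mathcal{V}(\mathrm{Product})$ contains one row versus two, so a Deep-Sets aggregator $\rho\bigl(\sum_{r'}\phi(z_{r'})\bigr)$ is evaluated at $\phi(z_p)$ versus $2\phi(z_p)$. The injectivity you establish in Step~2 is injectivity on unfolding trees, i.e.\ the 1-WL ceiling of message passing. That is strictly weaker than determining the labelled $k$-hop neighbourhood on which $z_v$ depends. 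Your Step~1 claim that each $T\in T_v$ corresponds to one relation path is a symptom of the same conflation. Rows of one table can arrive at different depths and along different paths, yet the corollary pools them into a single $z_T$.

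\paragraph{How to repair it.} Closing the gap needs an extra hypothesis that makes rows distinguishable, plus a different read-out argument. Suppose the latents $Z_r$ are a.s.\ pairwise distinct. This is plausible for the paper's generator, where $z^{(0)}=\mathrm{MLP}_{init}(\epsilon)$ is driven by continuous noise, but it is not implied by the assumptions, since $\mathcal{Z}$ is optional. Then the tree a.s.\ determines the labelled $k$-hop ball and hence $z_v$; this is the same mechanism by which random node features make message passing universal.

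The resulting read-out $g$ is, however, only measurable on the image of your encoder. The rule ``equal labels means same row'' is discontinuous exactly on the revisit event, which has positive probability; in the example, the Product count jumps from $2$ to $1$ as the two products' latents merge. So replace uniform approximation of a continuous $g$ by an in-probability argument:
\begin{itemize}
\item by Lusin's theorem, $g$ is continuous on a compact set of probability at least $1-\epsilon$;
\item extend it by Tietze and approximate it uniformly by an MLP on a neighbourhood of that set;
\item combine this with your $\delta$-stable encoder.
\end{itemize}
Convergence in probability is all Lemma~\ref{lem:cond_sampler} needs. Two alternative repairs are feeding discrete PK identities, for which de-duplication is locally constant, or redefining $R_v$ as a multiset of walk endpoints.

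\paragraph{Comparison with the paper.} The paper's sketch only matches components (intra-row MLPs with $h^{\mathrm{row}}$, neighbour sums with $\mathrm{Agg}^{\mathrm{table}}$) and invokes universal approximation. It makes the same walks-versus-rows identification silently. It also overlooks that the corollary's flat group-by-table pooling over the whole $k$-hop ball is not the GNN's per-hop nested pooling. Your encode-then-read-out route handles that second mismatch properly. It is the better skeleton once you add the row-identification hypothesis and the measurable read-out.
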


\begin{proof}[Proof sketch]
Both constructions compute row-wise representations by composing (i) order-sensitive intra-row processing and
(ii) permutation-invariant aggregation across sets of neighbors, repeated over a bounded-hop neighborhood.
Universal approximation of the constituent MLPs yields approximation of the resulting neighborhood-to-representation map,
and the decoder then approximates the conditional output distribution using standard inverse-transform/categorical sampling.
\end{proof}

\subsubsection{Stage 2: Structural Generation (Keys, Links, and Latent States)}\label{sec:stage2}

\paragraph{Setting.}
Fix a schema $G_S$. Stage~2 models $P(\mathcal{D}_{struct},\mathcal{Z}\mid G_S)$, where
$\mathcal{D}_{struct}$ are structural cells (Definition~\ref{def:struct_dep_cols}) and
$\mathcal{Z}=\{Z_r\}_{r\in\mathcal{V}_{in}}$ are optional row latents (Definition~\ref{def:latent_states}).
For convenience we denote the key-cell subsets
\[
\mathcal{V}^{pk}\coloneqq \{(r,c)\in\mathcal{V}: c\in \bigcup_{T\in\mathcal{T}}\mathcal{K}^{pk}(T)\},
\quad
\mathcal{V}^{fk}\coloneqq \{(r,c)\in\mathcal{V}: c\in \bigcup_{T\in\mathcal{T}}\mathcal{K}^{fk}(T)\},
\quad
\mathcal{V}^{conn} \coloneqq \{(r,c)\in\mathcal{V}: c\in\mathcal{C}^{conn}\},
\]
so that $\mathcal{V}^{struct}=\mathcal{V}^{pk}\cup \mathcal{V}^{fk}\cup\mathcal{V}^{conn}$. To keep notation light, the completeness argument below is written for the common case $\mathcal{C}^{conn}=\emptyset$
(i.e., $\mathcal{V}^{conn}=\emptyset$); the extension to $\mathcal{C}^{conn}\neq\emptyset$ follows by including
$\mathcal{V}^{conn}$ among the generated structural cells and in the structural context $\mathrm{Ctx}(\cdot)$ defined below.

Stage~3 conditions on a fixed realized structure and models feature values via local parent sets $Pa(v)$ at the cell level.
Stage~2, in contrast, must model the \emph{structure itself} (FK links), which naturally introduces (i) a schema-respecting
generation order (to satisfy referential integrity) and (ii) possible \emph{competition/degree effects} among parent rows.
Accordingly, Stage~2 mirrors Stage~3's five-assumption style (ordering, locality, sharing, structured processing), but replaces
Stage~3's within-type conditional independence with a restricted dependence on a permutation-invariant competition summary.

\begin{assumption}[Deterministic Primary Keys]\label{ass:pk_deterministic}
For each table $T$ with $n_T\coloneqq|\mathcal{V}(T)|$ rows, primary key values are a deterministic,
known injective encoding of row identity.
Concretely, there exists a known injective map
$\mathrm{PKEnc}_T:\{1,\dots,n_T\}\to \mathrm{val}(\mathcal{K}^{pk}(T))$
such that the PK cells of the $i$-th row equal $\mathrm{PKEnc}_T(i)$.
Hence PK values carry no causal/content information beyond identifying rows.
\end{assumption}

Under Assumption~\ref{ass:pk_deterministic}, Stage~2 reduces to modeling the joint distribution of
(i) all foreign-key cells (equivalently, the induced instance graph $G_{in}$), and (ii) the latent row states $\mathcal{Z}$:
\[
P(\mathcal{D}_{struct},\mathcal{Z}\mid G_S)
=
P(\{A_v\}_{v\in\mathcal{V}^{fk}},\mathcal{Z}\mid G_S),
\quad\text{with}\quad G_{in}=g(G_S,\mathcal{D}_{struct}).
\]

For each dependent table $T$, fix a row order $\pi_T=(r_{T,1},\dots,r_{T,n_T})$.
If $T$ has parent tables $\mathrm{Par}(T)=\{T_p^{(1)},\dots,T_p^{(p)}\}$, define for each row $r_{T,k}$ the selected parent-row tuple
\[
U_{T,k}=(u_{T,k}^{(1)},\dots,u_{T,k}^{(p)}),\qquad u_{T,k}^{(j)}\in\mathcal{V}(T_p^{(j)}).
\]
For source tables (no FKs), $U_{T,k}$ is taken to be empty. For a dependent table $T$ with parents $\mathrm{Par}(T)=\{T_p^{(1)},\dots,T_p^{(p)}\}$, define the feasible parent-tuple set
\[
\mathcal{C}_T \coloneqq \mathcal{V}(T_p^{(1)})\times \cdots \times \mathcal{V}(T_p^{(p)}),
\]
so that $U_{T,k}\in \mathcal{C}_T$. (For finite instances, $\mathcal{C}_T$ is finite.)

\begin{assumption}[Table Topological Order]\label{ass:table_topo}
There exists a topological ordering of tables $\mathrm{Topo}(G_S)=(T^{(1)},\dots,T^{(N)})$ such that every foreign-key edge
in $G_S$ points from an earlier table to a later table.
\end{assumption}

Assumption~\ref{ass:table_topo} implies a valid procedural generation order: parent tables can be sampled before child tables, and each dependent row selects parent rows and copies their PKs into FK cells.
For a source table (no FKs), we generate its rows' latent states from exogenous noise.
For a dependent table $T_c$ with parent tables $\mathrm{Par}(T_c)=\{T_p^{(1)},\dots,T_p^{(p)}\}$, each new child row
selects a tuple of parent rows $(u^{(1)},\dots,u^{(p)})$ and sets its FK values accordingly. This selection is the
structural ``routing'' mechanism.

\begin{assumption}[Structured Local Dependency]\label{ass:struct_local}
Fix a dependent table $T$ with row order $\pi_T=(r_{T,1},\dots,r_{T,n_T})$ and parent tables
$\mathrm{Par}(T)=\{T_p^{(1)},\dots,T_p^{(p)}\}$.

For each step $k$, let $\mathcal{C}_{T,k}\subseteq \mathcal{C}_T$ be a finite candidate-tuple set, where
$\mathcal{C}_T=\mathcal{V}(T_p^{(1)})\times\cdots\times\mathcal{V}(T_p^{(p)})$.

Define:
\begin{itemize}[leftmargin=12pt,itemsep=2pt,topsep=2pt]
\item \textbf{(Pre-selection context parents).} A set $Pa^{ctx}(T,k)$ of structural cell instances drawn only from
already-generated tables/rows (tables earlier than $T$, and rows $\{r_{T,j}\}_{j<k}$ in $T$),
restricted to a bounded-hop neighborhood (radius $k_0$) in the partially constructed instance graph.

\item \textbf{(Candidate-local parents).} For each candidate tuple $c\in\mathcal{C}_{T,k}$, a set
$Pa^{cand}(c)$ of structural cell instances drawn from the rows in $c$ and their bounded-hop neighborhoods
(radius $k_0$) in the partially constructed instance graph (again, only using already-generated variables).

\item \textbf{(Embeddings).} A context embedding $\mathrm{Ctx}_{T,k}=\Phi^{ctx}_T(\{A_u\}_{u\in Pa^{ctx}(T,k)})$
and candidate embeddings $\phi_{T,k}(c)=\Phi^{cand}_T(\{A_u\}_{u\in Pa^{cand}(c)})$.
\end{itemize}

Let $S_{T,k-1}=S(\{U_{T,j}\}_{j<k})$ be a permutation-invariant competition/degree summary of previous selections.

\textbf{(Markov restriction).} The local conditional depends on the past only through these pre-selection quantities:
\[
P(U_{T,k}, Z_{r_{T,k}} \mid \text{all previously generated variables})
=
P\!\Big(U_{T,k}, Z_{r_{T,k}} \,\Big|\, \mathrm{Ctx}_{T,k}, \{\phi_{T,k}(c)\}_{c\in\mathcal{C}_{T,k}}, S_{T,k-1}\Big).
\]
For source tables, interpret $U_{T,k}$, $\mathcal{C}_{T,k}$, and $S_{T,k-1}$ as empty and keep only
$P(Z_{r_{T,k}}\mid \mathrm{Ctx}_{T,k})$.
\end{assumption}

\begin{assumption}[Mechanism Sharing by Table/Relation Pattern]\label{ass:struct_share}
For each table $T$, all rows share the same conditional mechanism for producing $(U_{T,k},Z_{r_{T,k}})$
from their pre-selection inputs. Concretely, there exists a (table-specific) kernel $K_T$ such that for all $k$,
\[
P\!\Big(U_{T,k}, Z_{r_{T,k}} \,\Big|\, \mathrm{Ctx}_{T,k}, \{\phi_{T,k}(c)\}_{c\in\mathcal{C}_{T,k}}, S_{T,k-1}\Big)
=
K_T\!\Big(U_{T,k}, Z_{r_{T,k}} \,\Big|\, \mathrm{Ctx}_{T,k}, \{\phi_{T,k}(c)\}_{c\in\mathcal{C}_{T,k}}, S_{T,k-1}\Big).
\]
\end{assumption}

\begin{assumption}[Hierarchy-Respecting Processing for Structure]\label{ass:struct_ctx}
The kernel $K_T$ processes (i) the context parents $Pa^{ctx}(T,k)$ and (ii) each candidate parent set $Pa^{cand}(c)$
in a hierarchy-respecting way:

\begin{itemize}[leftmargin=12pt,itemsep=2pt,topsep=2pt]
\item \textbf{(Row-level order).} Within any row, structural cells (and optional row latents) are processed in a fixed canonical order.
\item \textbf{(Within-table set invariance).} Within any table, sets of row representations are aggregated permutation-invariantly.
\item \textbf{(Across-table order).} Tables are processed in a fixed canonical order (e.g., schema/topological order).
\item \textbf{(Candidate-tuple role order).} Within a candidate tuple $c=(u^{(1)},\dots,u^{(p)})$, the $p$ parent roles are processed
in the canonical FK-role order induced by the schema (so the tuple embedding is sensitive to role).
\item \textbf{(Candidate-set symmetry).} The dependence of $K_T$ on the candidate set
$\{\phi_{T,k}(c)\}_{c\in\mathcal{C}_{T,k}}$ is permutation-equivariant with respect to reordering candidates
(e.g., implemented by scoring each candidate with a shared function and normalizing).
\end{itemize}
\end{assumption}

\begin{corollary}[Stage 2 Factorization (Structure)]\label{cor:stage2_factor_struct}
Under Assumptions~\ref{ass:pk_deterministic}, \ref{ass:table_topo}, \ref{ass:struct_local}, and \ref{ass:struct_share},
the Stage~2 distribution admits an autoregressive factorization over tables and (within each table) rows:
\[
P(\{A_v\}_{v\in\mathcal{V}^{fk}},\mathcal{Z}\mid G_S)
=
\prod_{T\in \mathrm{Topo}(G_S)}
\ \prod_{k=1}^{n_T}
P\!\Big(Z_{r_{T,k}},U_{T,k} \ \Big|\ \mathrm{Ctx}_{T,k},\{\phi_{T,k}(c)\}_{c\in\mathcal{C}_{T,k}}, S_{T,k-1}\Big),
\]
where $U_{T,k}$, $\mathcal{C}_{T,k}$ and $S_{T,k-1}$ are empty for source tables.
\end{corollary}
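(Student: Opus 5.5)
The argument mirrors the proof of Corollary~\ref{cor:stage3_factor}: a chain-rule expansion in a valid generation order, followed by replacing each conditional with its local form.

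\textbf{Step 1: reduce to foreign keys and latents.} By Assumption~\ref{ass:pk_deterministic}, every PK cell is a known deterministic function of row indices. So $P(\mathcal{D}_{struct},\mathcal{Z}\mid G_S)=P(\{A_v\}_{v\in\mathcal{V}^{fk}},\mathcal{Z}\mid G_S)$, with $\mathcal{C}^{conn}=\emptyset$ as stipulated in the setting. Since the FK cells of row $r_{T,k}$ are obtained by copying the PKs of the selected tuple $U_{T,k}$, the map $U_{T,k}\mapsto\{A_v: \mathrm{RowOf}(v)=r_{T,k},\ v\in\mathcal{V}^{fk}\}$ is a bijection onto the feasible FK values. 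Here I use that $\mathrm{PKEnc}$ is injective, so distinct parent rows give distinct FK values. It therefore suffices to factorize the joint law of $\{(U_{T,k},Z_{r_{T,k}})\}_{T,k}$ given $G_S$.

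\textbf{Step 2: order the variables and apply the chain rule.} Using Assumption~\ref{ass:table_topo}, take the linear order that lists tables by $\mathrm{Topo}(G_S)$ and, within each table $T$, rows by $\pi_T$. Each step is the block $(Z_{r_{T,k}},U_{T,k})$. The chain rule gives
\[
P(\cdot\mid G_S)=\prod_{T\in\mathrm{Topo}(G_S)}\prod_{k=1}^{n_T}P\big(Z_{r_{T,k}},U_{T,k}\mid \mathcal{H}_{T,k},G_S\big),
\]
where $\mathcal{H}_{T,k}$ denotes all blocks generated earlier: all rows of earlier tables and rows $r_{T,j}$, $j<k$. This order is consistent with referential integrity, because every parent table of $T$ precedes $T$. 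Hence $\mathcal{C}_T$, the partially built instance graph, and every variable in $Pa^{ctx}(T,k)$, $Pa^{cand}(c)$ and $S_{T,k-1}$ are measurable functions of $\mathcal{H}_{T,k}$.

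\textbf{Step 3: replace each factor by its local form.} Apply the Markov restriction of Assumption~\ref{ass:struct_local} to each factor. It replaces conditioning on $\mathcal{H}_{T,k}$ by conditioning on $(\mathrm{Ctx}_{T,k},\{\phi_{T,k}(c)\}_{c\in\mathcal{C}_{T,k}},S_{T,k-1})$, which yields the displayed product. For source tables, $U_{T,k}$, $\mathcal{C}_{T,k}$ and $S_{T,k-1}$ are empty and the factor reduces to $P(Z_{r_{T,k}}\mid\mathrm{Ctx}_{T,k})$. Assumption~\ref{ass:struct_share} is not needed for the factorization itself. It only guarantees that each factor equals $K_T(\cdot)$, which I would note for later use in the completeness proof.

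\textbf{Main obstacle.} The delicate point is measurability and consistency: the conditioning quantities must be functions of the past, and the past must be exactly what Assumption~\ref{ass:struct_local} calls ``all previously generated variables''. That assumption is phrased per table with a row order. I must check that the global history $\mathcal{H}_{T,k}$ of the concatenated order coincides with its notion of ``tables earlier than $T$ and rows $j<k$''. I also need the candidate sets $\mathcal{C}_{T,k}$ to be either deterministic or themselves determined by the past; I would assume the former, or absorb them into the history. Once this identification holds, the result is a direct substitution.
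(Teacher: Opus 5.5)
Your proposal is correct and follows essentially the same route as the paper. Deterministic PKs reduce the problem to the blocks $(U_{T,k},Z_{r_{T,k}})$; then the chain rule is applied over the nested tables-then-rows order given by $\mathrm{Topo}(G_S)$ and $\pi_T$, and the Markov restriction of Assumption~\ref{ass:struct_local} is substituted factor by factor. Your extra remarks are refinements of details the paper leaves implicit, not a different argument: injectivity of $\mathrm{PKEnc}$ justifies identifying FK values with selected tuples, Assumption~\ref{ass:struct_share} is indeed unused (the paper's proof never invokes it either), and random candidate sets would have to be absorbed into the history.
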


\begin{proof}
Fix a topological table order $\mathrm{Topo}(G_S)=(T^{(1)},\dots,T^{(N)})$ (Assumption~\ref{ass:table_topo}).
For each table $T^{(i)}$, fix a row order $\pi_{T^{(i)}}=(r_{T^{(i)},1},\dots,r_{T^{(i)},n_i})$.
Under Assumption~\ref{ass:pk_deterministic}, primary keys are deterministic, so the Stage~2 randomness is fully captured by
\[
Y_{i,k}\ \coloneqq\ (U_{T^{(i)},k},\, Z_{r_{T^{(i)},k}}),
\]
where $U_{T^{(i)},k}$ is empty for source tables.

\paragraph{Step 1 (Repeated chain rule over the nested order).}
Applying the chain rule repeatedly over the nested order “tables then rows” gives
\[
P(\{A_v\}_{v\in\mathcal{V}^{fk}},\mathcal{Z}\mid G_S)
=
P(\{Y_{i,k}\}_{i,k}\mid G_S)
=
\prod_{i=1}^{N}\ \prod_{k=1}^{n_i}
P\!\left(Y_{i,k}\ \middle|\ G_S,\ \text{all variables generated before }(i,k)\right).
\]

\paragraph{Step 2 (Apply the candidate-based Markov restriction).}
By Assumption~\ref{ass:struct_local}, for each dependent-table step $(i,k)$ the conditional distribution of
$Y_{i,k}=(U_{T^{(i)},k},Z_{r_{T^{(i)},k}})$ given all previously generated variables depends on the past only through:
(i) the pre-selection context embedding $\mathrm{Ctx}_{T^{(i)},k}$,
(ii) the candidate set $\mathcal{C}_{T^{(i)},k}$ and its candidate embeddings
$\{\phi_{T^{(i)},k}(c)\}_{c\in\mathcal{C}_{T^{(i)},k}}$, and
(iii) the within-table competition summary $S_{T^{(i)},k-1}=S(\{U_{T^{(i)},j}\}_{j<k})$.
Hence,
\[
P\!\left(Y_{i,k}\ \middle|\ G_S,\ \text{all variables generated before }(i,k)\right)
=
P\!\Big(Y_{i,k}\ \Big|\ \mathrm{Ctx}_{T^{(i)},k},\{\phi_{T^{(i)},k}(c)\}_{c\in\mathcal{C}_{T^{(i)},k}}, S_{T^{(i)},k-1}\Big),
\]
with the convention that for source tables $U_{T^{(i)},k}$, $\mathcal{C}_{T^{(i)},k}$, and $S_{T^{(i)},k-1}$ are empty, so the
right-hand side reduces to conditioning on $\mathrm{Ctx}_{T^{(i)},k}$ only.

\paragraph{Step 3 (Substitute into the chain rule product).}
Substituting the above identity into the product from Step~1 yields
\[
P(\{A_v\}_{v\in\mathcal{V}^{fk}},\mathcal{Z}\mid G_S)
=
\prod_{i=1}^{N}\ \prod_{k=1}^{n_i}
P\!\Big(Y_{i,k}\ \Big|\ \mathrm{Ctx}_{T^{(i)},k},\{\phi_{T^{(i)},k}(c)\}_{c\in\mathcal{C}_{T^{(i)},k}}, S_{T^{(i)},k-1}\Big),
\]
which is exactly the claimed factorization after re-indexing $(i,k)$ back to $(T,k)$.
\end{proof}

\paragraph{A universal selective SCM.}
We now give a constructive parameterization that can realize the Stage~2 factorization
(Corollary~\ref{cor:stage2_factor_struct}) while respecting Assumption~\ref{ass:struct_ctx}.
Fix a dependent table $T$ and row step $k$ with candidate set $\mathcal{C}_{T,k}\subseteq\mathcal{C}_T$
(in the idealized proof one may take $\mathcal{C}_{T,k}=\mathcal{C}_T$).

For each row $r_{T,k}$ we:
(i) sample an initial latent $Z^{(0)}_{r_{T,k}}$ from exogenous noise;
(ii) compute a hierarchy-respecting \emph{pre-selection} context embedding
\[
z^{ctx}_{T,k} \coloneqq \mathrm{Ctx}_{T,k} \;=\; \Phi^{ctx}_T\!\left(\{A_u\}_{u\in Pa^{ctx}(T,k)}\right);
\]
(iii) compute candidate-tuple embeddings for each $c\in\mathcal{C}_{T,k}$ by
\[
\phi_{T,k}(c)\;=\;\Phi^{cand}_T\!\left(\{A_u\}_{u\in Pa^{cand}(c)}\right);
\]
(iv) score and sample a parent tuple $U_{T,k}\in\mathcal{C}_{T,k}$ using a shared scoring function $g$ and a uniform selector:
\[
\ell_c = g\!\left(Z^{(0)}_{r_{T,k}},\ z^{ctx}_{T,k},\ \phi_{T,k}(c),\ S_{T,k-1}\right),\qquad c\in\mathcal{C}_{T,k},
\]
\[
U_{T,k} = \mathrm{Sample}\!\left(\mathrm{Softmax}(\{\ell_c\}_{c\in\mathcal{C}_{T,k}}),\, U^{sel}_{T,k}\right),
\quad U^{sel}_{T,k}\sim\mathrm{Unif}(0,1);
\]
we then set the FK cell values of $r_{T,k}$ to match the selected parent rows' PKs (referential integrity);
and (v) update the latent state via a universal map $\psi$:
\[
Z_{r_{T,k}} = \psi\!\left(Z^{(0)}_{r_{T,k}},\ z^{ctx}_{T,k},\ \phi_{T,k}(U_{T,k}),\ U^{lat}_{T,k}\right),
\quad U^{lat}_{T,k}\sim\mathrm{Unif}(0,1),
\]
where $U^{lat}_{T,k}$ is independent noise.

For a source table $T$ (no FKs), the mechanism omits the selection step and outputs
\[
Z_{r_{T,k}}=\psi^{src}_T\!\left(\Phi^{ctx}_T(\{A_u\}_{u\in Pa^{ctx}(T,k)}),\ U^{lat}_{T,k}\right),
\quad U^{lat}_{T,k}\sim\mathrm{Unif}(0,1),
\]
with a table-shared map $\psi^{src}_T$.

\begin{theorem}[Completeness of Stage 2 (Structure)]\label{thm:stage2_struct}
Fix a schema $G_S$. Let $P(\mathcal{D}_{struct},\mathcal{Z}\mid G_S)$ be any structural distribution satisfying
Assumptions~\ref{ass:pk_deterministic}, \ref{ass:table_topo}, \ref{ass:struct_local}, \ref{ass:struct_share},
\ref{ass:struct_ctx}.
Then there exists a parameter setting of the above selective SCM such that the induced distribution over $(\mathcal{D}_{struct},\mathcal{Z})$
  approximates $P(\mathcal{D}_{struct},\mathcal{Z}\mid G_S)$ arbitrarily well in distribution.
  \end{theorem}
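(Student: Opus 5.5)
The argument parallels Theorem~\ref{thm:stage3}, with the Stage~3 kernels replaced by the structural kernels $K_T$. It proceeds in four steps.

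\textbf{Step 1: reduce to per-step kernels.} By Corollary~\ref{cor:stage2_factor_struct}, the target law factorizes into a product of per-step kernels $K_T(U_{T,k},Z_{r_{T,k}}\mid \mathrm{Ctx}_{T,k},\{\phi_{T,k}(c)\}_c,S_{T,k-1})$, taken over tables in topological order and rows within each table. Assumption~\ref{ass:pk_deterministic} makes the PK cells a deterministic function of row indices, and FK cells are copies of the selected parents' PKs. Hence $(\mathcal{D}_{struct},\mathcal{Z})$ is a measurable function of $\{(U_{T,k},Z_{r_{T,k}})\}$, and it suffices to approximate each $K_T$.

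\textbf{Step 2: split each kernel into selection and latent parts.} Write $K_T=K_T^{sel}(U\mid\cdot)\,K_T^{lat}(Z\mid U,\cdot)$.

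For the selection factor, the candidate set $\mathcal{C}_{T,k}$ is finite. Assumption~\ref{ass:struct_ctx} (candidate-set symmetry) lets us write $K_T^{sel}(c\mid\cdot)=\pi_c$, where $\pi_c$ is a permutation-equivariant function of the candidate embeddings. Suppose first that $\pi_c$ depends on the other candidates only through a permutation-invariant summary. Then define the score $\ell_c=\log\pi_c$ (with a clipped $\log\epsilon$ for zero-mass candidates). Since $\ell_c$ is a per-candidate function of $(z^{ctx},\phi(c),S,\text{set summary})$, the softmax reproduces $\pi$ up to $O(\epsilon)$ in total variation. I would absorb any dependence on the rest of the candidate set into $S_{T,k-1}$ or $z^{ctx}$ by the Deep Sets representation of permutation-invariant functions. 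Then $g$ only needs to approximate a continuous map, which an MLP does by universal approximation. Sampling via $U^{sel}$ through inverse-CDF over the finite set gives the exact selection law.

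For the latent factor, Lemma~\ref{lem:cond_sampler} applied with $X=(Z^{(0)},z^{ctx},\phi(U),S)$ and $Y=Z$ yields a measurable $\psi$ with $Z\overset{d}{=}\psi(X,U^{lat})$. The selection-free source-table case is identical, using $\psi^{src}_T$.

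\textbf{Step 3: realize the embeddings hierarchically.} The context and candidate embeddings $\Phi^{ctx}_T$ and $\Phi^{cand}_T$ are hierarchy-respecting by Assumption~\ref{ass:struct_ctx}. Exactly as in Step~3 of Theorem~\ref{thm:stage3} and Corollary~\ref{cor:stage3_scm}, we realize them with ordered row-level MLPs, a permutation-invariant table aggregator, and a role-ordered concatenation $\mathrm{MLP}_{comb}$. These approximate the true embeddings in probability under the induced input law.

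\textbf{Step 4: propagate errors through the autoregression.} This is the main obstacle. Approximation errors at step $(T,k)$ change the inputs of later steps through $S$, the contexts, and the candidate embeddings. Because every instance is finite and the number of steps is finite, I would argue inductively along the topological/row order. Assume the joint law of everything generated up to step $j$ converges weakly. Then Lemma~\ref{lem:cond_sampler} (approximation transfer), applied to the composed map of previous outputs and fresh independent uniforms, gives convergence at step $j+1$.

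The delicate point is the discrete selection. Softmax sampling is discontinuous in $U^{sel}$ at category boundaries, but those boundaries have measure zero, so convergence in probability of the logits still yields convergence in probability of the selected tuple.

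A second subtlety is that the step-$(j+1)$ approximator must converge under the approximate input law, not the true one. I would handle this either by requiring uniform approximation on compacts, combined with tightness of the inputs, or by a telescoping total-variation bound over the finitely many steps. Either way, the errors sum over finitely many steps, and taking each error below $\varepsilon/(\text{number of steps})$ gives the claimed arbitrarily good approximation in distribution.
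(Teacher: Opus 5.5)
Your skeleton matches the paper's proof. You reduce via Corollary~\ref{cor:stage2_factor_struct} and factor each kernel as $P(U_{T,k}\mid X_{T,k})\,P(Z_{r_{T,k}}\mid X_{T,k},U_{T,k})$. You use log-probability logits for the finite categorical, a Lemma~\ref{lem:cond_sampler} sampler for the latent, and hierarchy-respecting approximations of $\Phi^{ctx}_T,\Phi^{cand}_T$.

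\textbf{The gap.} The step that fails is absorbing the dependence on the rest of the candidate set into $S_{T,k-1}$ or $z^{ctx}$. Both are fixed by Assumption~\ref{ass:struct_local}. $S_{T,k-1}=S(\{U_{T,j}\}_{j<k})$ sees only previous selections. $\mathrm{Ctx}_{T,k}$ is a function of $Pa^{ctx}(T,k)$, which need not contain the other candidates' $Pa^{cand}(c')$.

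Your ``first case'' is actually the general one: an equivariant $\pi_c$ can be approximated as $\rho(\phi(c),\sum_{c'}\eta(\phi(c')))$. So the real obstruction is that $g(Z^{(0)},z^{ctx},\phi(c),S)$ never receives the set summary. For each fixed $Z^{(0)}$, a shared per-candidate score followed by a softmax satisfies independence of irrelevant alternatives. Averaged over $Z^{(0)}$, it is a mixed-logit, hence random-utility, model, and this is a genuine restriction.

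\emph{Counterexample.} Take two-candidate sets with embeddings $(a,b)$, $(b,c)$, $(c,a)$ under the same context and $S$. The three probabilities of choosing the first candidate sum to at most $2$. For each $Z^{(0)}$ they are pairwise comparisons within a single Gumbel-perturbed utility vector, and a strict cycle is impossible. An equivariant kernel with cyclic preferences of strength $1-\delta$, $\delta<1/3$, is admissible under Assumption~\ref{ass:struct_ctx}, yet it cannot be approximated this way.

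\emph{The latent step.} It has the same defect. In the selective SCM, $\psi$ sees neither $S_{T,k-1}$ nor the unselected candidates. Your application of Lemma~\ref{lem:cond_sampler} with $X=(Z^{(0)},z^{ctx},\phi(U),S)$ gives $\psi$ an input it does not have. Even so, it conditions on less than the target $P(Z_{r_{T,k}}\mid X_{T,k},U_{T,k})$.

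\textbf{The paper has the same gap.} Its Step~3 sets $\ell_c(X_{T,k})=\log P(U_{T,k}=c\mid X_{T,k})$, a function of the whole candidate set, and asserts without justification that $g$ approximates it. The ``additive constant'' can only absorb a normalizer common to all candidates, which is exactly the IIA case. The paper treats $\psi$ the same way.

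\textbf{How to close it.} Either of two fixes works:
\begin{enumerate}
\item Enlarge the inputs of $g$ and $\psi$ with a permutation-invariant summary of $\{\phi_{T,k}(c')\}_{c'\in\mathcal{C}_{T,k}}$, and give $\psi$ access to $S_{T,k-1}$. This preserves candidate-set equivariance and makes your Deep Sets argument legitimate.
\item Read the candidate-set symmetry in Assumption~\ref{ass:struct_ctx} in its parenthetical ``shared score, then normalize'' form, and impose the analogous restriction on the latent conditional. Then per-candidate logits suffice.
\end{enumerate}

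\textbf{Your Step~4.} It is an improvement on the paper, which simply asserts convergence after substitution. Keep the coupling/uniform-on-compacts option, adding a Lusin step because the Lemma~\ref{lem:cond_sampler} samplers are only measurable. The telescoping total-variation bound covers only the selection factors. In-probability approximation of a continuous latent sampler gives no total-variation control in general.
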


\begin{proof}
\textbf{Step 1 (Reduction to local kernels).}
By Corollary~\ref{cor:stage2_factor_struct} (candidate-based factorization), it suffices to approximate each local conditional
\[
P\!\Big(Z_{r_{T,k}},U_{T,k} \ \Big|\ \mathrm{Ctx}_{T,k},\{\phi_{T,k}(c)\}_{c\in\mathcal{C}_{T,k}}, S_{T,k-1}\Big),
\]
with the convention that for source tables $U_{T,k}$, $\mathcal{C}_{T,k}$, and $S_{T,k-1}$ are empty.

\textbf{Step 2 (Sampling representation + approximation transfer).}
Fix a dependent table $T$ and step $k$. Define the (pre-selection) input
\[
X_{T,k}\ \coloneqq\ \Big(\mathrm{Ctx}_{T,k},\ \{\phi_{T,k}(c)\}_{c\in\mathcal{C}_{T,k}},\ S_{T,k-1}\Big),
\qquad
Y_{T,k}\ \coloneqq\ (U_{T,k}, Z_{r_{T,k}}).
\]
Since $\mathcal{C}_{T,k}$ is finite and $Z_{r_{T,k}}\in\mathbb{R}^d$, the product space
$\mathcal{C}_{T,k}\times\mathbb{R}^d$ is standard Borel. By Lemma~\ref{lem:cond_sampler}, there exists a measurable
sampler $f_T$ and an independent $U^{(gen)}_{T,k}\sim\mathrm{Unif}(0,1)$ such that
\[
(U_{T,k}, Z_{r_{T,k}}) \ \overset{d}{=} \ f_T(X_{T,k}, U^{(gen)}_{T,k}).
\]
Thus, if our model class contains $\hat f_T$ with $\hat f_T(X_{T,k},U^{(gen)}_{T,k})\to f_T(X_{T,k},U^{(gen)}_{T,k})$
in probability under the induced input law, then the induced conditional law converges in distribution.

\textbf{Step 3 (Realizability by the structural SCM parameterization).}
Write the target local kernel as
\[
P(U_{T,k},Z_{r_{T,k}}\mid X_{T,k})
=
P(U_{T,k}\mid X_{T,k})\cdot P(Z_{r_{T,k}}\mid X_{T,k},U_{T,k}).
\]
Because $\mathcal{C}_{T,k}$ is finite, any categorical distribution $P(U_{T,k}\mid X_{T,k})$ can be represented by logits
$\{\ell_c(X_{T,k})\}_{c\in\mathcal{C}_{T,k}}$ via
\[
P(U_{T,k}=c\mid X_{T,k})=\mathrm{Softmax}(\{\ell_{c'}(X_{T,k})\}_{c'\in\mathcal{C}_{T,k}})_c,
\]
e.g., by taking $\ell_c(X_{T,k})=\log P(U_{T,k}=c\mid X_{T,k})$ (up to an additive constant).
Choosing the scoring network $g$ from a universal approximator class over its inputs and using the candidate embeddings
$\phi_{T,k}(c)$ yields arbitrarily accurate approximation of these logits under the induced input law.

Conditioned on $(X_{T,k},U_{T,k})$, the latent distribution $P(Z_{r_{T,k}}\mid X_{T,k},U_{T,k})$ admits a measurable sampler
from uniform noise by Lemma~\ref{lem:cond_sampler}. Choosing the update map $\psi$ from a universal approximator class yields
an arbitrarily accurate approximation of this sampler in probability under the induced input law.

Finally, Assumption~\ref{ass:struct_ctx} restricts dependence on the structural information to hierarchy-respecting processing.
Choosing $\Phi^{ctx}_T$ and $\Phi^{cand}_T$ from universal hierarchy-respecting function classes yields arbitrarily accurate
approximations of $\mathrm{Ctx}_{T,k}$ and $\phi_{T,k}(c)$ as required.

\textbf{Step 4 (Conclude).}
Combining (i) approximation of the categorical selection kernel for $U_{T,k}$ and (ii) approximation of the conditional sampler for
$Z_{r_{T,k}}$ yields a sequence of structural SCM parameters whose induced local conditionals converge in distribution.
Substituting these approximations into the Stage~2 factorization and using independent per-step noise variables gives convergence of
the induced Stage~2 distribution to the target.
\end{proof}

\subsubsection{Universality of the Three-Stage Construction}\label{sec:grand_univ}

\begin{theorem}[Universality of the three-stage construction]\label{thm:grand_universality}
Let $\mathcal{P}_{RDB}$ be the family of relational database distributions $P(\mathcal{D})$ that factorize as
\[
P(\mathcal{D}) = P(G_S)\,P(\mathcal{D}_{struct},\mathcal{Z}\mid G_S)\,P(\mathcal{D}_{dep}\mid \mathcal{D}_{struct},\mathcal{Z},G_S),
\]
and satisfy the Stage~2 assumptions and Stage~3 assumptions stated above.

Assume:
(i) the schema model family $\{\hat P_{\theta_S}(G_S)\}$ is dense in distributions over finite schema DAGs;
(ii) for each fixed schema $G_S$, the Stage~2 family $\{\hat P_{\theta_2}(\mathcal{D}_{struct},\mathcal{Z}\mid G_S)\}$ is dense in the admissible
$P(\mathcal{D}_{struct},\mathcal{Z}\mid G_S)$; and
(iii) for each fixed $(G_S,\mathcal{D}_{struct},\mathcal{Z})$, the Stage~3 family
$\{\hat P_{\theta_3}(\mathcal{D}_{dep}\mid \mathcal{D}_{struct},\mathcal{Z},G_S)\}$ is dense in the admissible
$P(\mathcal{D}_{dep}\mid \mathcal{D}_{struct},\mathcal{Z},G_S)$.

Then the composite family
\[
\hat P_{\theta}(\mathcal{D})=\hat P_{\theta_S}(G_S)\,\hat P_{\theta_2}(\mathcal{D}_{struct},\mathcal{Z}\mid G_S)\,
\hat P_{\theta_3}(\mathcal{D}_{dep}\mid \mathcal{D}_{struct},\mathcal{Z},G_S)
\]
is dense in $\mathcal{P}_{RDB}$ (in distribution).
\end{theorem}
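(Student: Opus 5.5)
The plan is to prove density by composing the three stagewise approximations. The argument is a standard ``product of kernels'' continuity statement, made concrete by a coupling/total-variation-style decomposition with a weak-convergence fallback. Fix a target $P\in\mathcal{P}_{RDB}$ with factors $P(G_S)$, $P_2(\cdot\mid G_S)$ and $P_3(\cdot\mid \mathcal{D}_{struct},\mathcal{Z},G_S)$. It suffices to show that for every bounded continuous test function $h$ of the full database $\mathcal{D}=(G_S,\mathcal{D}_{struct},\mathcal{Z},\mathcal{D}_{dep})$ and every $\varepsilon>0$ there are parameters $\theta=(\theta_S,\theta_2,\theta_3)$ with $|\mathbb{E}_{\hat P_\theta}h-\mathbb{E}_P h|<\varepsilon$.

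The first reduction uses that $G_S$ ranges over a countable (discrete) set of finite DAGs. I would truncate to a finite set $\mathcal{G}$ of schemas carrying mass at least $1-\varepsilon/(4\|h\|_\infty)$ under $P(G_S)$. By (i), weak convergence on a discrete space is convergence of point masses, so $\hat P_{\theta_S}$ can be chosen to match $P(G_S)$ on $\mathcal{G}$ to within total variation $\varepsilon/(4\|h\|_\infty)$. This reduces the problem to finitely many fixed schemas, on each of which (ii) and (iii) act separately. I would then telescope:
\[
\mathbb{E}_{\hat P}h-\mathbb{E}_P h=\bigl(\mathbb{E}_{\hat P_S\hat P_2\hat P_3}-\mathbb{E}_{P_S\hat P_2\hat P_3}\bigr)h+\bigl(\mathbb{E}_{P_S\hat P_2\hat P_3}-\mathbb{E}_{P_S P_2\hat P_3}\bigr)h+\bigl(\mathbb{E}_{P_S P_2\hat P_3}-\mathbb{E}_{P_S P_2P_3}\bigr)h .
\]
The first term is bounded by $\|h\|_\infty$ times the schema total variation. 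The third term equals $\mathbb{E}_{P_S P_2}[\,\mathbb{E}_{\hat P_3}h-\mathbb{E}_{P_3}h\,]$. Assumption (iii) gives pointwise convergence of the inner difference at each $(G_S,\mathcal{D}_{struct},\mathcal{Z})$, and dominated convergence (bounded by $2\|h\|_\infty$) finishes it. I will choose $\theta_3$ only after fixing the Stage~3 approximating sequence, so this term is handled first. Note that the Stage~3 construction of Theorem~\ref{thm:stage3} uses shared maps $\hat f_a$ that do not depend on the particular structure, which is what makes a single $\theta_3$ sequence work across all realizations.

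The main obstacle is the middle term, $\mathbb{E}_{P_S}\bigl[\mathbb{E}_{\hat P_2}g_{\theta_3}-\mathbb{E}_{P_2}g_{\theta_3}\bigr]$ with $g_{\theta_3}(\cdot)=\mathbb{E}_{\hat P_{\theta_3}}[h\mid\cdot]$. Weak convergence of Stage~2 under (ii) controls this only if $g_{\theta_3}$ is continuous in $(\mathcal{D}_{struct},\mathcal{Z})$. I would handle it in one of two ways. The first is to split $\mathcal{D}_{struct}$, which is discrete (FK assignments among finitely many rows, given row counts), from $\mathcal{Z}\in\mathbb{R}^d$. Continuity in $\mathcal{Z}$ holds because the Stage~3 networks (MLPs, Deep Sets aggregators, and continuous noise-to-sample maps) are continuous in their latent inputs, so $g_{\theta_3}$ is bounded and continuous for fixed $\theta_3$. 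The second, if continuity fails (e.g.\ categorical discretization in the decoder), is to strengthen the argument: use the Skorokhod/coupling form of Lemma~\ref{lem:cond_sampler}, as in Theorems~\ref{thm:stage2_struct} and~\ref{thm:stage3}, where the approximations are realized as convergence in probability of samplers on a common noise space. Composing the Stage~2 sampler with the Stage~3 sampler then preserves convergence in probability by the continuous mapping theorem applied at continuity points, which requires an almost-sure continuity condition on the limiting Stage~3 sampler.

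The order of choices is therefore as follows. Fix $\mathcal{G}$ and $\theta_S$. Then pick $\theta_3$ with the third term below $\varepsilon/4$. Given that fixed $\theta_3$, apply (ii) for each of the finitely many $G_S\in\mathcal{G}$ to make the middle term below $\varepsilon/4$. Summing the bounds yields density in distribution, and the paper's Theorems~\ref{thm:stage2_struct} and~\ref{thm:stage3} can be cited as the concrete witnesses for hypotheses (ii) and (iii).
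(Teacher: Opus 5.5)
Your strategy is the paper's (approximate each stage, then compose), but the execution differs in a way that matters.

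\textbf{How the paper does it.} It picks $\theta_S$, then $\theta_2(G_S)$ per schema, then $\theta_3(G_S,\mathcal{D}_{struct},\mathcal{Z})$ separately at each conditioning value. It then closes with a one-line claim that weak convergence is preserved under iterated mixtures.

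\textbf{How you do it.} You telescope and choose parameters inside-out: $\theta_3$ first, then $\theta_2$ for that fixed $\theta_3$. The innermost term is therefore compared under the \emph{true} $P_SP_2$ and needs only dominated convergence. The middle term needs weak continuity of the \emph{model's} kernel $x\mapsto\hat P_{\theta_3}(\cdot\mid x)$, never of the target $P_3$.

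\textbf{What this buys.} It gives an actual proof of a step the paper only asserts. The paper's mixture claim is false without some continuity: take $P_2=\delta_0$, $\hat P_2^{(n)}=\delta_{1/n}$, and use the same kernel $\delta_{\mathbf{1}\{x=0\}}$ as both target and approximation. Your route also exposes that a single $\theta_3$ must serve all conditioning values for $\hat P_\theta$ to be one member of the composite family. The paper's pointwise choice silently drops this requirement.

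\textbf{The price.} You are proving a slightly different theorem, and you should say so. Neither the single-sequence property nor the continuity of $\hat P_{\theta_3}$ in $\mathcal{Z}$ follows from (i)--(iii). Hypothesis (iii) only asks for density at each fixed $(G_S,\mathcal{D}_{struct},\mathcal{Z})$. A Stage-3 family whose kernel ignores $(\mathcal{D}_{struct},\mathcal{Z})$ but ranges over all distributions satisfies it. Yet that family cannot reproduce a target where a feature equals $Z_r$ while $Z_r$ also drives the FK choice. So state both properties as hypotheses replacing (iii), instead of reading them off the architecture.

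\textbf{Two further corrections.} First, Theorem~\ref{thm:stage3} is proved for a fixed $C_2$, with sampler convergence under the input law induced by that $C_2$. To get one sequence for all realizations, you must redo its Step~3 under the mixture of input laws over $C_2\sim P_SP_2$. This is possible because $f_a$ is shared, and the resulting convergence in probability then handles your third term directly. Second, controlling one test function $h$ at a time does not by itself give a sequence $\hat P_{\theta^{(n)}}\Rightarrow P$. Since your $\theta_S$ and $\theta_3$ do not depend on $h$, finish by choosing $\theta_2$ diagonally in a metric for weak convergence.
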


\begin{proof}[Proof sketch]
Fix any target $P\in\mathcal{P}_{RDB}$ and $\varepsilon>0$.

Choose $\theta_S$ so that $\hat P_{\theta_S}(G_S)\Rightarrow P(G_S)$.
Next, for each fixed schema $G_S$, choose $\theta_2(G_S)$ so that
$\hat P_{\theta_2(G_S)}(\mathcal{D}_{struct},\mathcal{Z}\mid G_S)\Rightarrow P(\mathcal{D}_{struct},\mathcal{Z}\mid G_S)$.
Finally, for each fixed $(G_S,\mathcal{D}_{struct},\mathcal{Z})$, choose $\theta_3(G_S,\mathcal{D}_{struct},\mathcal{Z})$ so that
$\hat P_{\theta_3(G_S,\mathcal{D}_{struct},\mathcal{Z})}(\mathcal{D}_{dep}\mid \mathcal{D}_{struct},\mathcal{Z},G_S)
\Rightarrow P(\mathcal{D}_{dep}\mid \mathcal{D}_{struct},\mathcal{Z},G_S)$.

The joint law is the iterated mixture
\[
P(\mathcal{D})=\int P(G_S)\int P(\mathcal{D}_{struct},\mathcal{Z}\mid G_S)\,
P(\mathcal{D}_{dep}\mid \mathcal{D}_{struct},\mathcal{Z},G_S),
\]
and $\hat P_\theta(\mathcal{D})$ is defined by the same iterated mixture with each factor replaced by its approximation.
Since weak convergence is preserved under forming such mixtures on standard Borel spaces, $\hat P_\theta(\mathcal{D})\Rightarrow P(\mathcal{D})$.
\end{proof}

\section{Raw Results}

For completeness, we provide raw results for the additional analyses. Table~\ref{tab:model_efficiency} include raw results of the efficiency figure~\ref{fig:efficiency}. Tables~\ref{tab:raw_results_64}--\ref{tab:raw_results_1024} report ROC-AUC across context sizes, corresponding to main figure results. Tables~\ref{tab:prauc_results_64}--\ref{tab:prauc_results_1024} report PR-AUC across context sizes. Table~\ref{tab:finetune_results} reports target-task and cross-task fine-tuning. Tables~\ref{tab:context_scaling_2048}--\ref{tab:context_scaling_8192} report chunked-support scaling. Tables~\ref{tab:prior_ablation_64}--\ref{tab:prior_ablation_1024} and Tables~\ref{tab:model_ablation_64}--\ref{tab:model_ablation_1024} report relational-prior and backbone ablations, respectively.

\begin{table}[htbp]
\centering
\caption{Efficiency and complexity comparison of baseline models. Total inference time is calculated as the cumulative time required to evaluate all 19 relational benchmark tasks using a fixed 500-shot context ($N=500$).}
\label{tab:model_efficiency}

\end{table}

\begin{table*}[htbp]
\centering
\vspace{-0.2cm}
\caption{Raw Results for Context Size 64}
\vspace{-0.3cm}
\label{tab:raw_results_64}
\resizebox{0.95\textwidth}{!}{
\setlength{\tabcolsep}{4pt} 
%
}
\vspace{-0.3cm}
\end{table*}

\begin{table*}[htbp]
\centering
\vspace{-0.2cm}
\caption{Raw Results for Context Size 128}
\vspace{-0.3cm}
\label{tab:raw_results_128}
\resizebox{0.95\textwidth}{!}{
\setlength{\tabcolsep}{4pt} 
%
}
\vspace{-0.3cm}
\end{table*}

\begin{table*}[htbp]
\centering
\vspace{-0.2cm}
\caption{Raw Results for Context Size 256}
\vspace{-0.3cm}
\label{tab:raw_results_256}
\resizebox{0.95\textwidth}{!}{
\setlength{\tabcolsep}{4pt} 
%
}
\vspace{-0.3cm}
\end{table*}

\begin{table*}[htbp]
\vspace{-0.2cm}
\centering
\caption{Raw Results for Context Size 512}
\vspace{-0.3cm}
\label{tab:raw_results_512}
\resizebox{0.95\textwidth}{!}{
\setlength{\tabcolsep}{4pt} 
%
}
\vspace{-0.3cm}
\end{table*}

\begin{table*}[htbp]
\centering
\vspace{-0.2cm}
\caption{Raw Results for Context Size 1024}
\label{tab:raw_results_1024}
\vspace{-0.3cm}
\resizebox{0.95\textwidth}{!}{
\setlength{\tabcolsep}{4pt} 
%
}
\vspace{-0.3cm}
\end{table*}

\begin{table*}[htbp]
\centering
\caption{Single-Table Benchmark Results}
\label{tab:single_table_results}
\resizebox{\textwidth}{!}{
\setlength{\tabcolsep}{4pt} 
%
}
\end{table*}

\begin{table*}[htbp]
\centering
\vspace{-0.2cm}
\caption{Raw PR-AUC Results for Context Size 64}
\vspace{-0.3cm}
\label{tab:prauc_results_64}
\resizebox{\textwidth}{!}{
\setlength{\tabcolsep}{4pt} 
%
}
\vspace{-0.3cm}
\end{table*}

\begin{table*}[htbp]
\centering
\vspace{-0.2cm}
\caption{Raw PR-AUC Results for Context Size 128}
\label{tab:prauc_results_128}
\vspace{-0.3cm}
\resizebox{\textwidth}{!}{
\setlength{\tabcolsep}{4pt} 
%
}
\vspace{-0.3cm}
\end{table*}

\begin{table*}[htbp]
\vspace{-0.2cm}
\centering
\caption{Raw PR-AUC Results for Context Size 256}
\vspace{-0.3cm}
\label{tab:prauc_results_256}
\resizebox{\textwidth}{!}{
\setlength{\tabcolsep}{4pt} 
%
}
\vspace{-0.3cm}
\end{table*}

\begin{table*}[htbp]
\centering
\caption{Raw PR-AUC Results for Context Size 512}
\label{tab:prauc_results_512}
\resizebox{\textwidth}{!}{
\setlength{\tabcolsep}{4pt} 
%
}
\end{table*}

\begin{table*}[htbp]
\centering
\caption{Raw PR-AUC Results for Context Size 1024}
\label{tab:prauc_results_1024}
\resizebox{\textwidth}{!}{
\setlength{\tabcolsep}{4pt} 
%
}
\end{table*}

\begin{table*}[htbp]
\centering
\caption{\textbf{Fine-tuning and cross-task adaptation results.}
We compare RDB-PFN fine-tuned on the target task, RDB-PFN cross-task fine-tuned on other RDB tasks excluding the target task, and TabPFNv2.5 target-task fine-tuning. The fine-tuning data can be spread across the training set, but the context size remains limited to 1,024.}
\label{tab:finetune_results}
\resizebox{\textwidth}{!}{
\setlength{\tabcolsep}{4pt}
%
}
\end{table*}

\begin{table*}[htbp]
\centering
\caption{Raw Results for Effective Context Size 2048 via Chunked-Support Ensembling}
\label{tab:context_scaling_2048}
\resizebox{\textwidth}{!}{
\setlength{\tabcolsep}{4pt}
%
}
\end{table*}

\begin{table*}[htbp]
\centering
\caption{Raw Results for Effective Context Size 4096 via Chunked-Support Ensembling}
\label{tab:context_scaling_4096}
\resizebox{\textwidth}{!}{
\setlength{\tabcolsep}{4pt}
%
}
\end{table*}

\begin{table*}[htbp]
\centering
\caption{Raw Results for Effective Context Size 8192 via Chunked-Support Ensembling}
\label{tab:context_scaling_8192}
\resizebox{\textwidth}{!}{
\setlength{\tabcolsep}{4pt}
%
}
\end{table*}

\begin{table*}[htbp]
\centering
\caption{Raw Relational Prior Ablation Results for Context Size 64}
\label{tab:prior_ablation_64}
\resizebox{\textwidth}{!}{
\setlength{\tabcolsep}{4pt}
%
}
\end{table*}

\begin{table*}[htbp]
\centering
\caption{Raw Relational Prior Ablation Results for Context Size 128}
\label{tab:prior_ablation_128}
\resizebox{\textwidth}{!}{
\setlength{\tabcolsep}{4pt}
%
}
\end{table*}

\begin{table*}[htbp]
\centering
\caption{Raw Relational Prior Ablation Results for Context Size 256}
\label{tab:prior_ablation_256}
\resizebox{\textwidth}{!}{
\setlength{\tabcolsep}{4pt}
%
}
\end{table*}

\begin{table*}[htbp]
\centering
\caption{Raw Backbone and Model-Size Ablation Results for Context Size 64}
\label{tab:model_ablation_64}
\resizebox{\textwidth}{!}{
\setlength{\tabcolsep}{4pt}
%
}
\end{table*}

\begin{table*}[htbp]
\centering
\caption{Raw Backbone and Model-Size Ablation Results for Context Size 128}
\label{tab:model_ablation_128}
\resizebox{\textwidth}{!}{
\setlength{\tabcolsep}{4pt}
%
}
\end{table*}

\begin{table*}[htbp]
\centering
\caption{Raw Backbone and Model-Size Ablation Results for Context Size 256}
\label{tab:model_ablation_256}
\resizebox{\textwidth}{!}{
\setlength{\tabcolsep}{4pt}
%
}
\end{table*}




\end{document}